\documentclass[dvipsnames]{article}





     \usepackage{iclr2021_conference,times}

\usepackage{todonotes}
\usepackage[utf8]{inputenc} 
\usepackage[T1]{fontenc}    
\usepackage{hyperref}       
\usepackage{url}            
\usepackage{booktabs}       
\usepackage{amsfonts}       
\usepackage{nicefrac}       
\usepackage{microtype}      
\usepackage{amsmath,amsthm,amssymb}
\usepackage{multirow}
\usepackage{bm}
\usepackage{subfig}

\usepackage{wrapfig}
\usepackage{graphics}


\usepackage{amsmath,amsfonts,bm}









\def\eqref#1{equation~\ref{#1}}









\def\1{\bm{1}}










\DeclareMathAlphabet{\mathsfit}{\encodingdefault}{\sfdefault}{m}{sl}
\SetMathAlphabet{\mathsfit}{bold}{\encodingdefault}{\sfdefault}{bx}{n}













\usepackage{hyperref}
\hypersetup{colorlinks,linkcolor={blue},citecolor={blue},urlcolor={green}}
\usepackage{url}

\newcommand{\wg}[1]{{\color{black} #1}}
\newcommand{\wgr}[1]{{\color{black} #1}}

\newtheorem{definition}{Definition}[section]
\newtheorem{theorem}{Theorem}
\newtheorem{prop}{Proposition}
\newtheorem{corollary}{Corollary}[theorem]
\newtheorem{lemma}{Lemma}

\usepackage{tikz}
\newcommand*\circled[1]{\tikz[baseline=(char.base)]{
            \node[shape=circle,draw,inner sep=2pt] (char) {#1};}}

\theoremstyle{remark}
\newtheorem{rem}{Remark}

\usepackage{scalerel,stackengine}
\stackMath
\newcommand\reallywidehat[1]{%
\savestack{\tmpbox}{\stretchto{%
  \scaleto{%
    \scalerel*[\widthof{\ensuremath{#1}}]{\kern-.6pt\bigwedge\kern-.6pt}%
    {\rule[-\textheight/2]{1ex}{\textheight}}
  }{\textheight}%
}{0.5ex}}%
\stackon[1pt]{#1}{\tmpbox}%
}
\usepackage[ruled,vlined]{algorithm2e}

\title{Sliced Kernelized Stein Discrepancy}

%

\author{%
  Wenbo Gong \\
  University of Cambridge\\
  \texttt{wg242@cam.ac.uk} \\
   \And
  Yingzhen Li \thanks{Work done at Microsoft Research Cambridge} \\
  Imperial College London \\
  \texttt{yingzhen.li@imperial.ac.uk} \\
  \AND
  Jos\'e Miguel Hern\'andez-Lobato \\
  University of Cambridge \\
  The Alan Turing Institute\\
  \texttt{jmh233@cam.ac.uk} \\
}

\iclrfinalcopy
\begin{document}

\maketitle

\begin{abstract}
Kernelized Stein discrepancy (KSD), though being extensively used in goodness-of-fit tests and model learning, suffers from the curse-of-dimensionality. We address this issue by proposing the \emph{sliced Stein discrepancy} and its scalable and kernelized variants, which employ kernel-based test functions defined on the optimal one-dimensional projections. When applied to goodness-of-fit tests, extensive experiments show the proposed discrepancy significantly outperforms KSD and various baselines in high dimensions. {For model learning, we show its advantages over existing Stein discrepancy baselines by training independent component analysis models with different discrepancies.} We further propose a novel particle inference method called \textit{sliced Stein variational gradient descent} (S-SVGD) which alleviates the mode-collapse issue of SVGD in training variational autoencoders.

\end{abstract}

\section{Introduction}
\vspace{-6pt}
Discrepancy measures for quantifying differences between two probability distributions play key roles in statistics and machine learning. 
Among many existing discrepancy measures,
Stein discrepancy (SD) is unique in that it
only requires samples from one distribution and the 
score function (i.e. the gradient up to a multiplicative constant) from the other \citep{gorham2015measuring}. SD, a special case of \textit{integral probability metric} (IPM) \citep{sriperumbudur2009integral}, requires finding an optimal test function within a given function family. This optimum is analytic when a \textit{reproducing kernel Hilbert space} (RKHS) is used as the test function family, and the corresponding SD is named \textit{kernelized Stein discrepancy} (KSD) \citep{liu2016kernelized,chwialkowski2016kernel}. Variants of SDs have been widely used in both Goodness-of-fit (GOF) tests \citep{liu2016kernelized,chwialkowski2016kernel} and model learning \citep{liu2016two,grathwohl2020cutting,hu2018stein,liu2016stein}.

Although theoretically elegant, KSD, especially with RBF kernel, suffers from the ''curse-of-dimensionality'' issue, which leads to significant deterioration of test power in GOF tests \citep{chwialkowski2016kernel,huggins2018random} and mode collapse in particle inference \citep{zhuo2017message,wang2018stein}. 
%
%
A few attempts have been made to address this problem, however, they either are limited to specific applications with strong assumptions \citep{zhuo2017message,chen2020projected,wang2018stein} or require significant approximations \citep{singhal2019kernelized}. 
%
As an alternative, in this work we present our solution to this issue by adopting the idea of ``slicing''. 
{Here the key idea is to project the score function and test inputs onto multiple one dimensional slicing directions, resulting in a variant of SD that only requires to work with one-dimensional inputs for the 
test functions.}
Specifically, our contributions are as follows.


\vspace{-6pt}
\begin{itemize}
    \item We propose a novel theoretically validated family of discrepancies called \textit{sliced Stein discrepancy} (SSD), along with its scalable variant called \textit{max sliced kernelized Stein discrepancy} (maxSKSD) using kernel tricks and the \emph{optimal test directions}. 
    %
    %
    \item A GOF test is derived based on an unbiased estimator of maxSKSD with optimal test directions.
    MaxSKSD achieves superior performance on benchmark problems and \textit{restricted Boltzmann machine} models \citep{liu2016kernelized,huggins2018random}.
    %
    %
    \item We evaluate the maxSKSD in model learning by two schemes. First, we train an independent component analysis (ICA) model in high dimensions by directly minimising maxSKSD, which results in faster convergence compared to baselines \citep{grathwohl2020cutting}. Further, we propose a particle inference algorithm based on maxSKSD called the \emph{sliced Stein variational gradient descent} (S-SVGD) as a novel variant of the original SVGD \citep{liu2016stein}. It alleviates the posterior collapse of SVGD when applied to training variational autoencoders \citep{kingma2013auto,rezende2014stochastic}.
\end{itemize}


\vspace{-6pt}
\section{Background}
\vspace{-4pt}
\subsection{Kernelized Stein Discrepancy}
\vspace{-6pt}
\label{sub:KSD}
For two probability distributions $p$ and $q$ supported on $\mathcal{X}\subseteq \mathbb{R}^D$ with continuous differentiable densities ${p}(\bm{x})$ and $q(\bm{x})$, we define the score $\bm{s}_p(\bm{x})=\nabla_{\bm{x}}\log p(\bm{x})$ and $\bm{s}_q(\bm{x})$ accordingly. For a test function $f:\mathcal{X}\rightarrow\mathbb{R}^D$, the Stein operator is defined as
\begin{equation}
    \mathcal{A}_pf(\bm{x})=\bm{s}_p(\bm{x})^Tf(\bm{x})+\nabla_{\bm{x}}^Tf(\bm{x}).
    \label{eq: Stein Operator}
\end{equation}
{For a function $f_0:\mathbb{R}^D\rightarrow\mathbb{R}$, the \emph{Stein class} $\mathcal{F}_q$ of $q$ is defined as the set of functions satisfying Stein's identity \citep{stein1972bound}: $\mathbb{E}_{q}[\bm{s}_q(\bm{x})f_0(\bm{x})+\nabla_{\bm{x}}f_0(\bm{x})]=\bm{0}$. This can be generalized to a vector function $\bm{f}:\mathbb{R}^D\rightarrow\mathbb{R}^D$ where $\bm{f}=[f_1(\bm{x}),\ldots,f_D(\bm{x})]^T$ by letting $f_i$ belongs to the Stein class of $q$ for each $i\in D$}.
Then the Stein discrepancy \citep{liu2016kernelized,gorham2015measuring} is defined as
\begin{equation}
    D(q,p)=\sup_{f\in\mathcal{F}_q}\mathbb{E}_{q}[\mathcal{A}_pf(\bm{x})]=\sup_{f\in\mathcal{F}_q} \mathbb{E}_{q}[(\bm{s}_p(\bm{x})-\bm{s}_q(\bm{x}))^Tf(\bm{x})].
    \label{eq: Stein Discrepancy}
\end{equation}
When $\mathcal{F}_q$ is sufficiently rich, {and $q$ vanishes at the boundary of $\mathcal{X}$}, the supremum is obtained at $f^*(\bm{x})\propto\bm{s}_p(\bm{x})-\bm{s}_q(\bm{x})$ with some mild regularity conditions on $f$ \citep{hu2018stein}. Thus, the Stein discrepancy focuses on the score difference of $p$ and $q$.
\emph{Kernelized Stein discrepancy} (KSD) \citep{liu2016kernelized,chwialkowski2016kernel} restricts the test functions to be in a $D$-dimensional RKHS $\mathcal{H}_D$ with kernel $k$ to obtain an analytic form.
By {defining}
    $\allowdisplaybreaks u_p(\bm{x},\bm{x}')=\bm{s}_p(\bm{x})^T\bm{s}_p(\bm{x}')k(\bm{x},\bm{x}')+\bm{s}_p(\bm{x})^T\nabla_{\bm{x}'}k(\bm{x},\bm{x}')+\bm{s}_p(\bm{x}')^T\nabla_{\bm{x}}k(\bm{x},\bm{x}')+\text{Tr}(\nabla_{\bm{x},\bm{x}'}k(\bm{x},\bm{x}'))$
the analytic form of KSD is:
\begin{equation}
\begin{split}
    D^2(q,p)&= \left( \sup_{f\in\mathcal{H}_D,||f||_{\mathcal{H}_D}\leq 1}\mathbb{E}_{q}[\mathcal{A}_pf(\bm{x})] \right)^2
    =\mathbb{E}_{q(\bm{x}) q(\bm{x}')}[u_p(\bm{x},\bm{x}')].
    \end{split}
    \label{eq: KSD}
\end{equation}

\vspace{-5pt}
\subsection{Stein Variational Gradient Descent}
\label{sub:Background SVGD}
\vspace{-4pt}
Although SD and KSD can be directly minimized for variational inference (VI) \citep{ranganath2016operator,liu2016two,feng2017learning}, {\citet{liu2016stein}} alternatively proposed a novel particle inference algorithm called \emph{Stein variational gradient descent} (SVGD). It applies a sequence of deterministic transformations to a set of points such that each of mappings maximally decreases the Kullback-Leibler (KL) divergence from the particles' underlying distribution $q$ to the target $p$.

To be specific, we define the mapping $T(\bm{x}):\mathbb{R}^D\rightarrow\mathbb{R}^D$ as $T(\bm{x})=\bm{x}+\epsilon\bm{\phi}(\bm{x})$ where $\bm{\phi}$ characterises the perturbations. 
The result from \cite{liu2016stein} shows that the
optimal perturbation inside the RKHS is exactly the optimal test function in KSD. 
\begin{lemma}{\citep{liu2016stein}}
Let $T(\bm{x})=\bm{x}+\epsilon\bm{\phi}(\bm{x})$ and $q_{[T]}(\bm{z})$ be the density of $\bm{z}=T(\bm{x})$ when $\bm{x}\sim q(\bm{x})$. If the perturbation $\bm{\phi}$ is in the RKHS $\mathcal{H}_D$ and $||\bm{\phi}||_{\mathcal{H}_D}\leq D(q,p)$, then the steepest descent directions $\bm{\phi}^*_{q,p}$ is 
\begin{equation}
    \bm{\phi}^*_{q,p}(\cdot)=\mathbb{E}_q[\nabla_{\bm{x}}\log p(\bm{x})k(\bm{x},\cdot)+\nabla_{\bm{x}}k(\bm{x},\cdot)]
    \label{eq: SVGD Update}
\end{equation}
and $\nabla_{\epsilon}KL[q_{[T]}||p]|_{\epsilon=0}=-D^2(q,p)$.
\label{lem: SVGD and KSD}
\end{lemma}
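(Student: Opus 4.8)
The plan is to split the claim into two essentially independent pieces: (i) a change-of-variables computation identifying $\nabla_{\epsilon}\mathrm{KL}[q_{[T]}\|p]|_{\epsilon=0}$ with $-\mathbb{E}_q[\mathcal{A}_p\bm{\phi}(\bm{x})]$ for an \emph{arbitrary} admissible perturbation $\bm{\phi}$, and (ii) a Cauchy--Schwarz argument in $\mathcal{H}_D$ that maximizes this linear functional over the feasible ball, reusing exactly the kernel identities behind \eqref{eq: KSD}.

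\textbf{Step 1 (the KL derivative).} For $\epsilon$ small, $T(\bm{x})=\bm{x}+\epsilon\bm{\phi}(\bm{x})$ has Jacobian $\bm{I}+\epsilon\nabla_{\bm{x}}\bm{\phi}(\bm{x})$ close to the identity, hence is a diffeomorphism, and $q_{[T]}(\bm{z})=q(T^{-1}(\bm{z}))\,|\det\nabla_{\bm{z}}T^{-1}(\bm{z})|$. Substituting $\bm{z}=T(\bm{x})$ in the KL integral cancels the Jacobian factors and gives
\begin{equation*}
\mathrm{KL}[q_{[T]}\|p] = \mathbb{E}_q[\log q(\bm{x})] - \mathbb{E}_q\big[\log|\det(\bm{I}+\epsilon\nabla_{\bm{x}}\bm{\phi}(\bm{x}))|\big] - \mathbb{E}_q[\log p(\bm{x}+\epsilon\bm{\phi}(\bm{x}))].
\end{equation*}
The first term is constant in $\epsilon$; differentiating the other two at $\epsilon=0$ (Jacobi's formula $\partial_{\epsilon}\log\det(\bm{I}+\epsilon M)|_{\epsilon=0}=\Tr M$ for the log-det term, the chain rule for the last) yields
\begin{equation*}
\nabla_{\epsilon}\mathrm{KL}[q_{[T]}\|p]|_{\epsilon=0} = -\mathbb{E}_q\big[\bm{s}_p(\bm{x})^T\bm{\phi}(\bm{x}) + \Tr(\nabla_{\bm{x}}\bm{\phi}(\bm{x}))\big] = -\mathbb{E}_q[\mathcal{A}_p\bm{\phi}(\bm{x})].
\end{equation*}
Here I would justify exchanging $\nabla_\epsilon$ and $\mathbb{E}_q$ by dominated convergence; this, together with the Stein-class membership of the components of $\bm{\phi}$ and smoothness of $p$, is where the ``mild regularity conditions'' enter.

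\textbf{Step 2 (maximizing over the RKHS ball).} Because $\bm{\phi}\mapsto\mathbb{E}_q[\mathcal{A}_p\bm{\phi}(\bm{x})]$ is linear, the steepest-descent direction is the maximizer over $\{\bm{\phi}\in\mathcal{H}_D:\|\bm{\phi}\|_{\mathcal{H}_D}\le D(q,p)\}$. Applying the reproducing property $f_i(\bm{x})=\langle f_i,k(\bm{x},\cdot)\rangle_{\mathcal{H}}$ and its differentiated version $\partial_{x_i}f_i(\bm{x})=\langle f_i,\partial_{x_i}k(\bm{x},\cdot)\rangle_{\mathcal{H}}$ (valid for a $C^1$ kernel) coordinate-wise, the functional rewrites as the inner product $\langle\bm{\phi},\bm{\phi}^*_{q,p}\rangle_{\mathcal{H}_D}$ with $\bm{\phi}^*_{q,p}(\cdot)=\mathbb{E}_q[\bm{s}_p(\bm{x})k(\bm{x},\cdot)+\nabla_{\bm{x}}k(\bm{x},\cdot)]$, i.e. exactly \eqref{eq: SVGD Update}. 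By Cauchy--Schwarz, $\mathbb{E}_q[\mathcal{A}_p\bm{\phi}(\bm{x})]\le\|\bm{\phi}\|_{\mathcal{H}_D}\|\bm{\phi}^*_{q,p}\|_{\mathcal{H}_D}$ with equality iff $\bm{\phi}\propto\bm{\phi}^*_{q,p}$.

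\textbf{Step 3 (closing the loop).} Expanding $\|\bm{\phi}^*_{q,p}\|^2_{\mathcal{H}_D}=\langle\bm{\phi}^*_{q,p},\bm{\phi}^*_{q,p}\rangle_{\mathcal{H}_D}$, pushing the two independent expectations $\mathbb{E}_{q(\bm{x})q(\bm{x}')}$ outside, and collapsing the four cross terms (score--score, the two score--gradient terms, gradient--gradient) via the (differentiated) reproducing property shows the integrand is precisely $u_p(\bm{x},\bm{x}')$, so $\|\bm{\phi}^*_{q,p}\|^2_{\mathcal{H}_D}=\mathbb{E}_{q(\bm{x})q(\bm{x}')}[u_p(\bm{x},\bm{x}')]=D^2(q,p)$ by \eqref{eq: KSD}. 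Hence $\bm{\phi}^*_{q,p}$ has norm exactly $D(q,p)$, lies on the boundary of the feasible ball, and attains the Cauchy--Schwarz bound; it is therefore the steepest-descent direction, and substituting back into Step 1 gives $\nabla_{\epsilon}\mathrm{KL}[q_{[T]}\|p]|_{\epsilon=0}=-\langle\bm{\phi}^*_{q,p},\bm{\phi}^*_{q,p}\rangle_{\mathcal{H}_D}=-D^2(q,p)$. I expect the main friction to be the measure-theoretic bookkeeping --- differentiating under the expectation in Step 1 and checking that $\bm{\phi}^*_{q,p}$ is a well-defined $\mathcal{H}_D$-valued (Bochner) integral --- together with the term-by-term expansion in Step 3; the optimization itself is a one-line Cauchy--Schwarz.
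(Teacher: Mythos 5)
Your proposal is correct and is essentially the same argument as the one underlying the paper: the paper does not re-prove this lemma (it quotes it from \citet{liu2016stein}), but your Step 1 is exactly the cited identity $\nabla_{\epsilon}KL[q_{[T]}||p]|_{\epsilon=0}=-\mathbb{E}_q[\mathcal{A}_p\bm{\phi}(\bm{x})]$ that the paper records as Lemma \ref{lem: SVGD and SD} (Eq.(\ref{eq: SVGD Magnitude})), and your Steps 2--3 (reproducing property, Bochner integrability, Cauchy--Schwarz, and the norm computation giving $\|\bm{\phi}^*_{q,p}\|^2_{\mathcal{H}_D}=D^2(q,p)$ via Eq.(\ref{eq: KSD})) mirror the paper's own proof of the sliced analogue in Lemma \ref{lem:S_SVGD and SKSD}. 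Nothing further is needed.
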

The first term in {Eq.(\ref{eq: SVGD Update})} is called drift, which drives the particles towards a mode of $p$. The second term controls the repulsive force, which spreads the particles around the mode. When particles stop moving, the KL decrease magnitude $\epsilon D^2(q,p)$ is $0$, which means the KSD is zero and $p=q$ a.e. 
\vspace{-6pt}
\section{Sliced Kernelized Stein Discrepancy}
\begin{figure}
    \begin{minipage}[t]{0.5\textwidth}
    \includegraphics[scale=0.5]{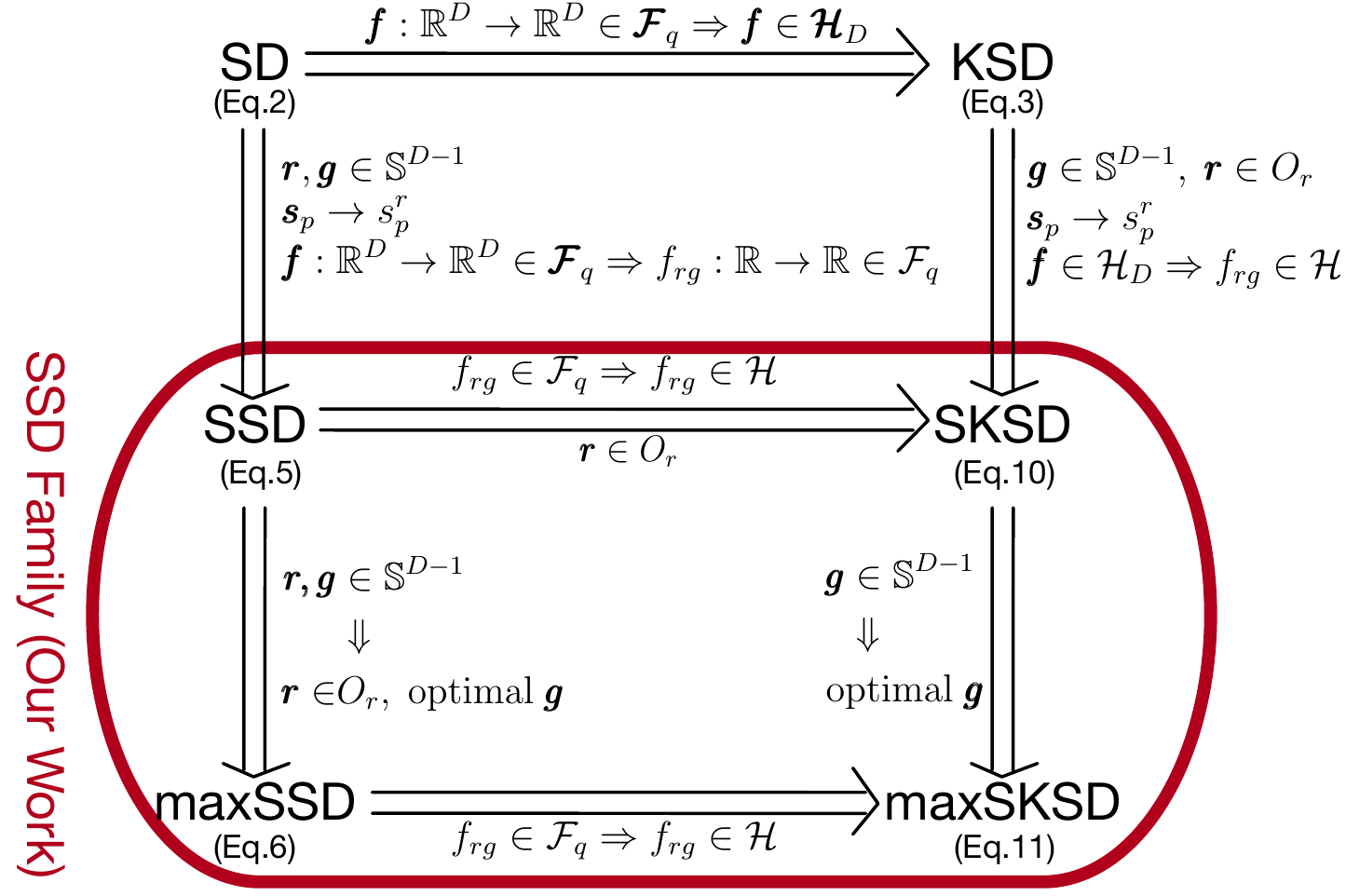}
    \label{fig:Relationship}
    \end{minipage}
    \begin{minipage}[t]{0.5\textwidth}
    \includegraphics[scale=0.45]{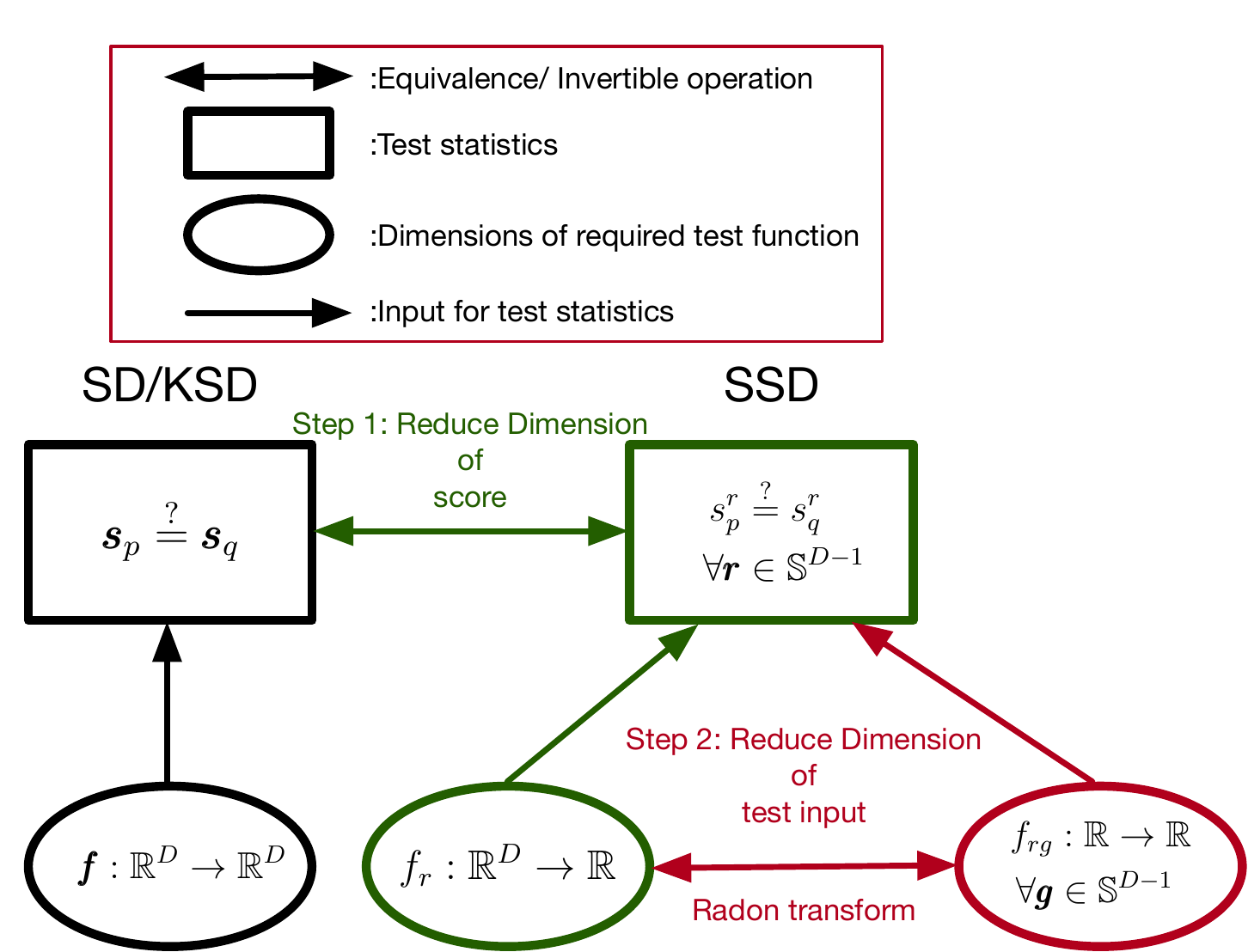}
    \label{fig:Intuition}
    \end{minipage}
    \caption{\textbf{(Left)} The connections between SD, KSD and the proposed SSD family. \textbf{(Right)} The intuition of SSD. The rectangular boxes indicate what statistics the discrepancy wants to test. The circle represents the dimension of the test function required for such test. The double arrow means equivalence relations or invertable operations.}
    \label{fig: intuition and relation}
\end{figure}
\vspace{-6pt}
We propose the \emph{sliced Stein discrepancy} (SSD) and kernelized version named maxSKSD. Theoretically, we prove their correctness as discrepancy measures. Methodology-wise, we apply maxSKSD to GOF tests, and develop two ways for model learning.






\vspace{-4pt}
\subsection{Sliced Stein Discrepancy}
\vspace{-6pt}

\wg{Before moving to the details, we give a brief overview of the intuition on how to tackle the curse-of-fimensionality issue of SD (The right figure of Figure \ref{fig: intuition and relation}). For detailed explanation, refer to appendix \ref{subsec: SSD Intuition}. This issue of Stein discrepancy (Eq.\ref{eq: Stein Discrepancy}) comes from two sources: the score function $s_p(\bm{x})$ and the test function $f(\bm{x})$ defined on $\mathcal{X} \subset \mathbb{R}^D$. First, we notice that comparing $\bm{s}_p$ and $\bm{s}_q$ is equivalent to comparing projected score $s_p^r=\bm{s}_p^T\bm{r}$ and $s_q^r$ for all $\bm{r}\in \mathbb{S}^{D-1}$ on an hyper-sphere ({\color[RGB]{0,81,30}Green square in Figure \ref{fig: intuition and relation} (Right)}). This operation reduces the test function's output from $\mathbb{R}^D$ to $\mathbb{R}$ ({\color[RGB]{0,81,30}Green circle in Figure \ref{fig: intuition and relation} (Right)}). However, its input dimension is not affected. Reducing the input dimension of test functions is non-trivial, as directly removing input dimensions results in the test power decrease. \wgr{This is because less information is accessed by the test function (see examples in appendix \ref{subsec: SSD Intuition}).} \wgr{Our solution to this problem uses \textit{Radon transform} which is inspired by CT-scans. It projects the original test function $f(\bm{x})$ in Stein discrepancy (Eq. \ref{eq: Stein Discrepancy}) (as an $\mathbb{R}^D\rightarrow \mathbb{R}$ mapping) to a group of $\mathbb{R}\rightarrow\mathbb{R}$ functions along a set of directions ({\color{red}{$\bm{g}\in\mathbb{S}^{D-1}$}}). Then, this group of functions are used as the new test functions to define the proposed discrepancy. The invertibility of \textit{Radon transform} ensures that testing with input in the original space $\mathbb{R}^D$ is equivalent to the test using a group of low dimensional functions with input in $\mathbb{R}$}. 
Thus, the above two steps not only reduce the dimensions of the test function's output and input, but also maintain the validity of the resulting discrepancy as each step is either equivalent or invertible.}

In detail, {assume two distributions $p$ and $q$ supported on $\mathbb{R}^D$} with differentiable densities $p(\bm{x})$ and $q(\bm{x})$, and define the test functions $f(\cdot;\bm{r},\bm{g}):\mathbb{R}^D\rightarrow\mathbb{R}$ such that $f(\bm{x};\bm{r},\bm{g})=f_{rg}\circ h_g(\bm{x})=f_{rg}(\bm{x}^T\bm{g})$, where $h_g(\cdot)$ is the inner product with $\bm{g}$ and $f_{rg}:\mathbb{R}\rightarrow \mathbb{R}$. \wgr{One should note that the $\bm{r}$ and $\bm{g}$ in $f(\cdot;\bm{r},\bm{g})$ should not just be treated as parameters in a test function $f$. In fact, they are more like the index to indicate that for each pair of $\bm{r}$, $\bm{g}$, we need a new $f(\cdot;\bm{r},\bm{g})$, i.e. new $f_{rg}$, which is completely independent to other test functions.} The proposed sliced Stein discrepancy (SSD), defined using two uniform distributions $p_r(\bm{r})$ and $p_g(\bm{g})$ over the hypersphere $\mathbb{S}^{D-1}$, is given by the following, with $f_{rg}\in \mathcal{F}_q$ meaning $f(\cdot;\bm{r},\bm{g})\in\mathcal{F}_q$:
\begin{equation}
    S(q,p)=\mathbb{E}_{p_r,p_g}\left[
    \sup_{f_{rg}\in\mathcal{F}_q}{\mathbb{E}_q[s^r_p(\bm{x})f_{rg}(\bm{x}^T\bm{g})+\bm{r}^T\bm{g}\nabla_{\bm{x}^T\bm{g}}f_{rg}(\bm{x}^T\bm{g})]}
    \right].
    \label{eq: SSD}
\end{equation}

We verify the proposed SSD is a valid discrepancy measure, namely, $S(q,p)=0$ iff.~$q=p$ a.e.

\begin{theorem}{(SSD Validity)} If assumptions 1-4 in appendix \ref{Sec:Basics} are satisfied, then for two probability distributions $p$ and $q$, $S(q,p)\geq 0$, and $S(q,p)=0$ if and only if $p=q$ a.e.
\label{thm: SSD non-convergence}
\end{theorem}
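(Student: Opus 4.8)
The plan is to verify the three assertions in turn --- $S(q,p)\ge 0$, then $p=q\,\Rightarrow\,S(q,p)=0$, and finally the converse --- the last being the substantive step. The key preliminary reduction is this: for fixed $\bm r,\bm g$, since $f(\cdot;\bm r,\bm g)=f_{rg}(\bm x^T\bm g)$ lies in the Stein class $\mathcal F_q$, Stein's identity $\mathbb E_q[\bm s_q(\bm x)f(\bm x;\bm r,\bm g)+\nabla_{\bm x}f(\bm x;\bm r,\bm g)]=\bm 0$, dotted with $\bm r$ and using $\nabla_{\bm x}f_{rg}(\bm x^T\bm g)=\bm g\,f_{rg}'(\bm x^T\bm g)$, gives $\mathbb E_q[s_q^r(\bm x)f_{rg}(\bm x^T\bm g)+\bm r^T\bm g\,f_{rg}'(\bm x^T\bm g)]=0$ for every admissible $f_{rg}$. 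Subtracting this from the inner objective in \eqref{eq: SSD} shows that the inner objective equals $\mathbb E_q[(s_p^r(\bm x)-s_q^r(\bm x))\,f_{rg}(\bm x^T\bm g)]$. Non-negativity of $S(q,p)$ is then immediate (the zero function lies in $\mathcal F_q$), and if $p=q$ a.e.\ the score difference vanishes, so every inner supremum is $0$ and $S(q,p)=0$.

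For the converse, assume $S(q,p)=0$. Since the outer integrand is non-negative it vanishes for almost every $(\bm r,\bm g)$; fix such a pair. The reduced inner objective $\mathbb E_q[(s_p^r-s_q^r)f_{rg}(\bm x^T\bm g)]$ is odd under $f_{rg}\mapsto-f_{rg}$, and $\mathcal F_q$ is closed under negation, so a vanishing supremum forces $\mathbb E_q[(s_p^r(\bm x)-s_q^r(\bm x))\,f_{rg}(\bm x^T\bm g)]=0$ for \emph{all} $f_{rg}\in\mathcal F_q$. Rewriting this expectation as an integral over the hyperplanes $\{\bm x^T\bm g=w\}$ gives $\int_{\mathbb R}f_{rg}(w)\,\mathcal R[(s_p^r-s_q^r)\,q](\bm g,w)\,dw=0$, where $\mathcal R$ denotes the Radon transform. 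Assumptions 1--4 guarantee that $\mathcal F_q$ contains all maps $\bm x\mapsto\varphi(\bm x^T\bm g)$ with $\varphi\in C_c^\infty(\mathbb R)$, so the fundamental lemma of the calculus of variations yields $\mathcal R[(s_p^r-s_q^r)\,q](\bm g,\cdot)=0$ a.e.; equivalently the conditional expectation $\mathbb E_q[\bm s_p(\bm x)-\bm s_q(\bm x)\mid \bm x^T\bm g=w]$ vanishes for a.e.\ $w$. Letting $\bm r$ run through a countable dense subset of $\mathbb S^{D-1}$ and using linearity in $\bm r$, we conclude: for almost every direction $\bm g$, the vector Radon transform $\mathcal R[\bm h](\bm g,\cdot)$ vanishes, where $\bm h:=(\bm s_p-\bm s_q)\,q$.

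It remains to invert the Radon transform and return to densities. By the Fourier slice theorem, the one-dimensional Fourier transform in $w$ of $\mathcal R[\bm h](\bm g,\cdot)$ equals $\widehat{\bm h}(\tau\bm g)$; since this vanishes for a.e.\ $\bm g$ and every $\tau\in\mathbb R$, and $(\tau,\bm g)\mapsto\tau\bm g$ sweeps out $\mathbb R^D$, continuity of $\widehat{\bm h}$ (which holds because $\bm h\in L^1$) gives $\widehat{\bm h}\equiv\bm 0$, hence $\bm h=\bm 0$ a.e., i.e.\ $\bm s_p=\bm s_q$ a.e.\ on $\{q>0\}$. Finally $\bm s_p-\bm s_q=\nabla_{\bm x}\log(p/q)=\bm 0$ a.e.\ forces $p/q$ to be a.e.\ constant on $\mathrm{supp}(q)$, and since both are normalized densities the constant is $1$, so $p=q$ a.e.

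The main obstacle I anticipate is the Radon-inversion step: one must check that Assumptions 1--4 deliver exactly the hypotheses it requires --- integrability $\bm h=(\bm s_p-\bm s_q)q\in L^1(\mathbb R^D)$ (equivalently $\mathbb E_q\|\bm s_p-\bm s_q\|<\infty$, which also ensures the reduced inner expectations are finite) so that $\mathcal R[\bm h]$ and the Fourier slice theorem make sense, vanishing of the boundary terms used in Stein's identity, and enough richness of $\mathcal F_q$ --- and one must track the null sets of exceptional $\bm r$ and $\bm g$ via a density/continuity argument so that the final vector identity genuinely holds for almost every $\bm g$ rather than merely on a dense set of directions. A secondary point is that the passage ``$\nabla\log(p/q)=0$ a.e.''\ $\Rightarrow$ ``$p=q$'' needs the support of $q$ to be connected (or all of $\mathbb R^D$), which should be part of the standing assumptions.
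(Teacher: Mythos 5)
Your proposal is correct (at the paper's own level of rigor) and follows the same overall skeleton as the paper's proof --- reduce the inner objective via Stein's identity to $\mathbb{E}_q[(s_p^r-s_q^r)f_{rg}(\bm{x}^T\bm{g})]$, identify the relevant object as the Radon transform of $\bm{h}=q\,(\bm{s}_p-\bm{s}_q)$ projected onto $\bm{r}$, and finish with the Fourier slice theorem and injectivity of the Fourier transform --- but it differs at the one substantive step, namely how ``inner supremum $=0$'' forces the Radon transform to vanish. The paper constructs an explicit witness: it takes $f_{rg}(y)=\mathcal{R}[q\,(\bm{s}_p-\bm{s}_q)^T\bm{r}](y,\bm{g})$ itself (identified with a conditional expectation under the rotated distribution via its change-of-variable/matrix-volume lemmas), certifies admissibility using Assumption 4 (boundedness of this Radon transform), and obtains the lower bound $\int f_{rg}^2\,dy\geq 0$, which simultaneously gives non-negativity and the vanishing of the transform. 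You instead use that $\mathcal{F}_q$ is closed under negation to upgrade ``$\sup=0$'' to ``expectation $=0$ for all admissible $f_{rg}$,'' then test against $\varphi\in C_c^\infty(\mathbb{R})$ slices and invoke the fundamental lemma of the calculus of variations. Your route buys a cleaner argument that does not need Assumption 4 at all (local integrability of the Radon transform already follows from Assumption 2) and avoids the paper's rotation/conditional-expectation machinery, at the small cost of checking that bounded smooth functions of $\bm{x}^T\bm{g}$ lie in the Stein class (immediate from Assumption 1, since $q$ vanishes at infinity); the paper's route buys an explicit extremal-like test function, which is also what motivates the kernelized construction later. The caveats you flag --- tracking the null sets in $(\bm{r},\bm{g})$, $L^1$ integrability of $\bm{h}$, and connectedness/full support for the final step $\nabla\log(p/q)=0\Rightarrow p=q$ --- are legitimate, and the paper handles them no more carefully than you do: full support on $\mathbb{R}^D$ and the integrability are built into Assumptions 1--2, and the paper's own passage from ``a.e.\ $\bm{g}$'' and ``all $\bm{r}$'' to the conclusion is equally informal.
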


Despite this attractive theoretical result, SSD is difficult to compute in practice. Specifically, the expectations over $\bm{r}$ and $\bm{g}$ can be approximated by Monte Carlo but this typically requires a very large number of samples in high dimensions \citep{deshpande2019max}. We propose to relax such limitations by using only a finite number of slicing directions $\bm{r}$ from an orthogonal basis $O_r$ of $\mathbb{R}^D$, e.g.~the standard basis of one-hot vectors, {and the corresponding optimal test direction} $\bm{g}_r$ for each $\bm{r}$.
%
%
We call this variant maxSSD, which is defined as follows and validated in Corollary \ref{coro: maxSSD}:
%

\begin{equation}
    S_{max}(q,p)=\sum_{\bm{r}\in O_r}{\sup_{f_{rg_r}\in\mathcal{F}_q,\bm{g}_r\in\mathbb{S}^{D-1}}{\mathbb{E}_q[{s}^r_p(\bm{x})f_{rg_r}(\bm{x}^T\bm{g}_r)+\bm{r}^T\bm{g}_r\nabla_{\bm{x}^T\bm{g}_r}f_{rg_r}(\bm{x}^T\bm{g}_r)]}}.
    \label{eq:maxSSD}
\end{equation}

\begin{corollary}{(maxSSD)}
Assume the conditions in Theorem \ref{thm: SSD non-convergence}, then $S_{max}(q,p)=0$ iff. $p=q$ a.e.
\label{coro: maxSSD}
\end{corollary}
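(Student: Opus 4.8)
The plan is to deduce Corollary \ref{coro: maxSSD} from Theorem \ref{thm: SSD non-convergence} by exploiting that each summand of $S_{max}(q,p)$ is individually nonnegative. First I would record the routine facts. Taking $f_{rg_r}\equiv 0$ shows that the supremum defining each summand is $\geq 0$, so $S_{max}(q,p)\geq 0$. For the ``if'' direction, suppose $p=q$ a.e.; then $s^r_p=s^r_q$ and the inner expectation becomes $\mathbb{E}_q[s^r_q(\bm{x})f_{rg}(\bm{x}^T\bm{g})+\bm{r}^T\bm{g}\,\nabla_{\bm{x}^T\bm{g}}f_{rg}(\bm{x}^T\bm{g})]$. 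Applying Stein's identity for $q$ to the scalar test function $f_0(\bm{x})=f_{rg}(\bm{x}^T\bm{g})$, whose gradient is $\bm{g}\,f_{rg}'(\bm{x}^T\bm{g})$, gives $\mathbb{E}_q[\bm{s}_q(\bm{x})f_{rg}(\bm{x}^T\bm{g})+\bm{g}\,f_{rg}'(\bm{x}^T\bm{g})]=\bm{0}$; contracting with $\bm{r}$ shows this inner expectation is $0$ for every admissible $f_{rg}$ and every $\bm{g}$, so each summand is $0$ and $S_{max}(q,p)=0$.

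For the converse, assume $S_{max}(q,p)=0$. Since each summand is nonnegative, each is $0$; and for a fixed $\bm{r}$ the supremum over $\bm{g}_r\in\mathbb{S}^{D-1}$ being $0$ while each fixed $\bm{g}$ gives a value $\geq 0$ forces, for every $\bm{r}\in O_r$ and every $\bm{g}\in\mathbb{S}^{D-1}$,
$\sup_{f_{rg}\in\mathcal{F}_q}\mathbb{E}_q[s^r_p(\bm{x})f_{rg}(\bm{x}^T\bm{g})+\bm{r}^T\bm{g}\,\nabla_{\bm{x}^T\bm{g}}f_{rg}(\bm{x}^T\bm{g})]=0$. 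Because $\mathcal{F}_q$ is a linear space and this expectation is a linear functional of $f_{rg}$, a finite (here zero) supremum forces the functional to vanish identically; subtracting the Stein identity for $q$ used above then yields $\mathbb{E}_q[(s^r_p(\bm{x})-s^r_q(\bm{x}))f_{rg}(\bm{x}^T\bm{g})]=0$ for all admissible $f_{rg}$, i.e.\ (by the richness of $\mathcal{F}_q$) $\mathbb{E}_q[\,s^r_p(\bm{x})-s^r_q(\bm{x})\mid \bm{x}^T\bm{g}\,]=0$ almost surely, for every $\bm{r}\in O_r$ and every $\bm{g}\in\mathbb{S}^{D-1}$.

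From here I would invoke exactly the slicing/Radon-inversion step already used to prove Theorem \ref{thm: SSD non-convergence}: for each fixed $\bm{r}\in O_r$, the vanishing of every one-dimensional projected conditional expectation as $\bm{g}$ ranges over the entire sphere implies $s^r_p(\bm{x})=s^r_q(\bm{x})$ for $q$-a.e.\ $\bm{x}$, i.e.\ $\bm{r}^T(\bm{s}_p(\bm{x})-\bm{s}_q(\bm{x}))=0$ $q$-a.e. The only point where the argument differs from Theorem \ref{thm: SSD non-convergence} is that $\bm{r}$ now ranges over the finite orthogonal basis $O_r$ rather than almost every direction; but since $O_r$ spans $\mathbb{R}^D$, intersecting the $D$ full-measure sets gives $\bm{s}_p=\bm{s}_q$ $q$-a.e., and then the regularity assumptions 1--4 (continuity of the scores together with the support/connectedness conditions on $q$) upgrade this to $\nabla_{\bm{x}}\log(p/q)\equiv 0$ on the support of $q$, hence $p=q$ a.e., just as in the endgame of Theorem \ref{thm: SSD non-convergence}.

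The main obstacle is the slicing/Radon-inversion step that recovers the projected score $s^r_p$ pointwise a.e.\ from the vanishing of all its one-dimensional slice conditional expectations; however, this machinery is established in the proof of Theorem \ref{thm: SSD non-convergence} and is essentially quoted here. The genuinely new reasoning in this corollary is only the two ``nonnegativity of each summand'' reductions — which convert the expectations over $\bm{r}$ and $\bm{g}$ into pointwise conditions, in fact giving the per-direction statement for \emph{all} $\bm{g}$ rather than almost all — together with the elementary linear-algebra observation that testing against a spanning set of $D$ directions $\bm{r}$ suffices.
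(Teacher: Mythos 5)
Your proof is correct and takes essentially the same route as the paper: the paper likewise uses nonnegativity of each fixed-$(\bm{r},\bm{g})$ supremum to conclude from $S_{max}(q,p)=0$ that the inner supremum vanishes for every $\bm{r}\in O_r$ and \emph{every} $\bm{g}\in\mathbb{S}^{D-1}$, and then reuses the Radon/Fourier-slice machinery of Theorem \ref{thm: SSD non-convergence} restricted to the orthogonal basis (its Remark \ref{rem: orthogonal SSD}). The only difference is cosmetic: at the end you intersect the $D$ spanning directions to get $\bm{s}_p=\bm{s}_q$ $q$-a.e.\ and hence $p=q$ directly, whereas the paper phrases this same step as equality of complete-conditional scores and cites Lemma 1 of \citet{singhal2019kernelized}.
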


\vspace{-4pt}
\subsection{Closed form Solution with the Kernel Trick}
\vspace{-6pt}

The optimal test function given $\bm{r}$ and $\bm{g}$ is intractable without further assumptions on the test function families. This introduces another scalability issue as optimizing these test functions explicitly can be time consuming.
Fortunately, we can apply the kernel trick to obtain its analytic form. Assume for each test function $f_{rg}\in\mathcal{H}_{rg}$, where $\mathcal{H}_{rg}$ is a scalar-valued RKHS equipped with kernel $k(\bm{x},\bm{x}';\bm{r},\bm{g})=k_{rg}(\bm{x}^T\bm{g},\bm{x}'^T\bm{g})$ that satisfies assumption 5 in appendix \ref{Sec:Basics} and $f_{rg}(\bm{x}^T\bm{g})=\langle f_{rg},k_{rg}(\bm{x}^T\bm{g},\cdot)\rangle_{\mathcal{H}_{rg}}$. We define the following quantities:
\begin{align}
    \xi_{p,r,g}(\bm{x},\cdot)&=s_p^r(\bm{x})k_{rg}(\bm{x}^T\bm{g},\cdot)+\bm{r}^T\bm{g}\nabla_{\bm{x}^T\bm{g}}k_{rg}(\bm{x}^T\bm{g},\cdot),
    \label{eq: SKSD test function}\\
      h_{p,r,g}(\bm{x},\bm{y})&=s^r_p(\bm{x})k_{rg}(\bm{x}^T\bm{g},\bm{y}^T\bm{g})s^r_p(\bm{y})+\bm{r}^T\bm{g}s^r_p(\bm{y})\nabla_{\bm{x}^T\bm{g}}k_{rg}(\bm{x}^T\bm{g},\bm{y}^T\bm{g})+\nonumber\\
      &\quad\,\,\bm{r}^T\bm{g}s^r_p(\bm{x})\nabla_{\bm{y}^T\bm{g}}k_{rg}(\bm{x}^T\bm{g},\bm{y}^T\bm{g})+(\bm{r}^T\bm{g})^2\nabla^2_{\bm{x}^T\bm{g},\bm{y}^T\bm{g}}k_{rg}(\bm{x}^T\bm{g},\bm{y}^T\bm{g}).
\label{eq: SKSD inner}
\end{align}
The following theorem describes the optimal test function inside SSD (Eq.(\ref{eq: SSD})) and maxSSD (Eq.(\ref{eq:maxSSD})). 
\begin{theorem}{(Closed form solution)}
If $\mathbb{E}_q[h_{p,r,g}(\bm{x},\bm{x})]<\infty$, then
\begin{equation}
    \begin{split}
        D^2_{rg}(q,p)&=||\sup_{f_{rg}\in\mathcal{H}_{rg},||f_{rg}||\leq 1}\mathbb{E}_q[{s}^r_p(\bm{x})f_{rg}(\bm{x}^T\bm{g})+\bm{r}^T\bm{g}\nabla_{\bm{x}^T\bm{g}}f_{rg}(\bm{x}^T\bm{g})]||^2\\
        &=||\mathbb{E}_q[\xi_{p,r,g}(\bm{x})]||^2_{\mathcal{H}_{rg}}
        =\mathbb{E}_{q(\bm{x})q(\bm{x}')}[h_{p,r,g}(\bm{x},\bm{x}')].
    \end{split}
    \label{eq: D_rg}
\end{equation}
\label{thm: SKSD Analytic form}
\end{theorem}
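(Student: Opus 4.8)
The plan is to recognize this as an instance of the standard "kernelization of a linear functional over an RKHS ball" argument — exactly the mechanism that yields the analytic form of KSD in Eq.~(\ref{eq: KSD}) — applied pointwise in the slicing indices $\bm{r},\bm{g}$. First I would fix $\bm{r}$ and $\bm{g}$ and define the linear functional $L_{rg}: \mathcal{H}_{rg}\to\mathbb{R}$ by $L_{rg}(f_{rg}) = \mathbb{E}_q[s^r_p(\bm{x})f_{rg}(\bm{x}^T\bm{g}) + \bm{r}^T\bm{g}\,\nabla_{\bm{x}^T\bm{g}}f_{rg}(\bm{x}^T\bm{g})]$. Using the reproducing property $f_{rg}(\bm{x}^T\bm{g}) = \langle f_{rg}, k_{rg}(\bm{x}^T\bm{g},\cdot)\rangle_{\mathcal{H}_{rg}}$ together with the standard fact that differentiation commutes with the inner product for a sufficiently smooth kernel (this is where assumption~5 in appendix~\ref{Sec:Basics} enters, guaranteeing $\nabla_{\bm{x}^T\bm{g}}k_{rg}(\bm{x}^T\bm{g},\cdot)\in\mathcal{H}_{rg}$), I would rewrite the integrand as $\langle f_{rg}, \xi_{p,r,g}(\bm{x},\cdot)\rangle_{\mathcal{H}_{rg}}$ with $\xi_{p,r,g}$ as in Eq.~(\ref{eq: SKSD test function}). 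Then, provided we can exchange the expectation $\mathbb{E}_q$ with the inner product, $L_{rg}(f_{rg}) = \langle f_{rg}, \mathbb{E}_q[\xi_{p,r,g}(\bm{x},\cdot)]\rangle_{\mathcal{H}_{rg}}$, so $L_{rg}$ is a bounded linear functional represented by the element $\mu_{rg} := \mathbb{E}_q[\xi_{p,r,g}(\bm{x},\cdot)]\in\mathcal{H}_{rg}$.

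Given that representation, the remaining steps are routine: by Cauchy–Schwarz, $\sup_{\|f_{rg}\|_{\mathcal{H}_{rg}}\le 1} L_{rg}(f_{rg}) = \|\mu_{rg}\|_{\mathcal{H}_{rg}}$, attained at $f_{rg}\propto\mu_{rg}$, which gives the first equality $D^2_{rg}(q,p) = \|\mathbb{E}_q[\xi_{p,r,g}(\bm{x})]\|^2_{\mathcal{H}_{rg}}$. For the second equality I would expand the squared norm as a double expectation, $\|\mathbb{E}_q[\xi_{p,r,g}]\|^2 = \mathbb{E}_{q(\bm{x})q(\bm{x}')}\langle \xi_{p,r,g}(\bm{x},\cdot), \xi_{p,r,g}(\bm{x}',\cdot)\rangle_{\mathcal{H}_{rg}}$, and then compute this inner product term by term using the reproducing property and its differentiated versions, e.g. $\langle k_{rg}(\bm{x}^T\bm{g},\cdot), k_{rg}(\bm{y}^T\bm{g},\cdot)\rangle = k_{rg}(\bm{x}^T\bm{g},\bm{y}^T\bm{g})$, $\langle k_{rg}(\bm{x}^T\bm{g},\cdot), \nabla_{\bm{y}^T\bm{g}}k_{rg}(\bm{y}^T\bm{g},\cdot)\rangle = \nabla_{\bm{y}^T\bm{g}}k_{rg}(\bm{x}^T\bm{g},\bm{y}^T\bm{g})$, and the mixed second derivative for the last term. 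Matching the four resulting terms against Eq.~(\ref{eq: SKSD inner}) yields $\langle \xi_{p,r,g}(\bm{x},\cdot),\xi_{p,r,g}(\bm{x}',\cdot)\rangle_{\mathcal{H}_{rg}} = h_{p,r,g}(\bm{x},\bm{x}')$, completing the chain of equalities.

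The main obstacle — and the only place the hypothesis $\mathbb{E}_q[h_{p,r,g}(\bm{x},\bm{x})]<\infty$ is really needed — is justifying that $\mathbb{E}_q[\xi_{p,r,g}(\bm{x},\cdot)]$ is a well-defined Bochner integral in $\mathcal{H}_{rg}$, equivalently that $\xi_{p,r,g}(\bm{x},\cdot)$ is Bochner-integrable, and that the interchange of $\mathbb{E}_q$ with $\langle f_{rg},\cdot\rangle_{\mathcal{H}_{rg}}$ is legitimate. Here I would note that $\|\xi_{p,r,g}(\bm{x},\cdot)\|^2_{\mathcal{H}_{rg}} = \langle\xi_{p,r,g}(\bm{x},\cdot),\xi_{p,r,g}(\bm{x},\cdot)\rangle_{\mathcal{H}_{rg}} = h_{p,r,g}(\bm{x},\bm{x})$ by the same term-by-term computation, so $\mathbb{E}_q\|\xi_{p,r,g}(\bm{x},\cdot)\|_{\mathcal{H}_{rg}} \le (\mathbb{E}_q[h_{p,r,g}(\bm{x},\bm{x})])^{1/2} < \infty$ by Jensen; Bochner integrability then follows, and the interchange with any bounded functional is automatic, as is the double-expectation expansion via Fubini. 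Everything else is one-dimensional in the relevant variable $\bm{x}^T\bm{g}$, so the slicing introduces no extra difficulty beyond carrying the indices $\bm{r},\bm{g}$ and the constant $\bm{r}^T\bm{g}$ through the computation; the result holds for each fixed $(\bm{r},\bm{g})$ and hence plugs directly into both Eq.~(\ref{eq: SSD}) and Eq.~(\ref{eq:maxSSD}).
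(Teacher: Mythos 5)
Your proposal is correct and follows essentially the same route as the paper's proof: establish $h_{p,r,g}(\bm{x},\bm{y})=\langle\xi_{p,r,g}(\bm{x},\cdot),\xi_{p,r,g}(\bm{y},\cdot)\rangle_{\mathcal{H}_{rg}}$, use $\mathbb{E}_q[h_{p,r,g}(\bm{x},\bm{x})]<\infty$ to get Bochner integrability of $\xi_{p,r,g}(\bm{x},\cdot)$ (via the same Jensen-type bound), interchange expectation and inner product, identify the supremum over the unit ball with the RKHS norm, and expand the squared norm as the double expectation. No gaps worth noting.
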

Next, we propose the kernelized version of SSD with orthogonal basis $O_r$, called SKSD. 
\begin{theorem}{(SKSD as a discrepancy)}
For two probability distributions $p$ and $q$, given assumptions 1,2 and 5 in appendix \ref{Sec:Basics} and $\mathbb{E}_q[h_{p,r,g}(\bm{x},\bm{x})]<\infty$ for all $\bm{r}$ and $\bm{g}$, we define SKSD as
\begin{equation}
    SK_{o}(q,p)=\sum_{\bm{r}\in O_r}\int_{\mathbb{S}^{D-1}}{p_g(\bm{g})D^2_{rg}(q,p)d\bm{g}},
    \label{eq: SKSD}
\end{equation}
which is equal to 0 if and only if $p=q$ a.e.
\label{thm: Validity of SKSD}
\end{theorem}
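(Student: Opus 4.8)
The plan is to dispose of nonnegativity and the ``if'' direction quickly via Stein's identity, and for the ``only if'' direction to reduce the vanishing of $SK_o$ to the pointwise vanishing of every $D^2_{rg}(q,p)$ and then recycle the score-matching / Radon-transform argument already underlying Theorem~\ref{thm: SSD non-convergence}. First, by Theorem~\ref{thm: SKSD Analytic form} each summand of $SK_o(q,p)$ is $\int_{\mathbb{S}^{D-1}}p_g(\bm{g})\,\|\mathbb{E}_q[\xi_{p,r,g}(\bm{x})]\|^2_{\mathcal{H}_{rg}}\,d\bm{g}\ge 0$, so $SK_o(q,p)\ge 0$; and since $p_g>0$ on the whole sphere and $\bm{g}\mapsto D^2_{rg}(q,p)$ is continuous under assumptions 1, 2 and 5, the equation $SK_o(q,p)=0$ upgrades to $D^2_{rg}(q,p)=0$ for every $\bm{r}\in O_r$ and \emph{every} $\bm{g}\in\mathbb{S}^{D-1}$. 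For the easy direction, if $p=q$ a.e.\ then $s^r_p=s^r_q$, and writing $f_0(\bm{x})=f_{rg}(\bm{x}^T\bm{g})$ the chain rule gives $\bm{r}^T\bm{g}\,\nabla_{\bm{x}^T\bm{g}}f_{rg}(\bm{x}^T\bm{g})=\bm{r}^T\nabla_{\bm{x}}f_0(\bm{x})$, so the integrand defining $D^2_{rg}$ equals $\bm{r}^T(\bm{s}_q(\bm{x})f_0(\bm{x})+\nabla_{\bm{x}}f_0(\bm{x}))$; as $f_0$ lies in the Stein class $\mathcal{F}_q$ (the boundary terms vanish by the regularity assumptions), Stein's identity gives $\mathbb{E}_q[\xi_{p,r,g}]=0$, hence $D^2_{rg}=0$ and $SK_o(q,p)=0$.

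For the ``only if'' direction, fix $\bm{r}\in O_r$ and $\bm{g}\in\mathbb{S}^{D-1}$. From $D^2_{rg}(q,p)=0$ and Theorem~\ref{thm: SKSD Analytic form}, $\mathbb{E}_q[\xi_{p,r,g}(\bm{x})]=0$ in $\mathcal{H}_{rg}$, i.e.\ the bounded linear functional $f_{rg}\mapsto\mathbb{E}_q[s^r_p(\bm{x})f_{rg}(\bm{x}^T\bm{g})+\bm{r}^T\bm{g}\nabla_{\bm{x}^T\bm{g}}f_{rg}(\bm{x}^T\bm{g})]$ is identically zero on $\mathcal{H}_{rg}$. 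By the universality-type condition in assumption 5, $\mathcal{H}_{rg}$ is dense in the one-dimensional Stein class used in SSD, so this vanishing extends to every admissible $f_{rg}$. Integrating by parts in the scalar variable $t=\bm{x}^T\bm{g}$ (the boundary terms again vanish), the relation $\mathbb{E}_q[s^r_p(\bm{x})f_{rg}(t)+\bm{r}^T\bm{g}\,f_{rg}'(t)]=0$ for all $f_{rg}$ becomes the pointwise identity $\mathbb{E}_q[s^r_p(\bm{x})\mid\bm{x}^T\bm{g}=t]\,q_g(t)=\bm{r}^T\bm{g}\,q_g'(t)$ for a.e.\ $t$, where $q_g$ is the marginal density of $\bm{x}^T\bm{g}$ under $q$.

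Now recognise the left-hand side as the Radon transform $\mathcal{R}[s^r_p\,q](\bm{g},t)$ and, using the intertwining identity $\mathcal{R}[\partial_{x_j}h](\bm{g},t)=g_j\,\partial_t\mathcal{R}h(\bm{g},t)$, the right-hand side as $\mathcal{R}[\bm{r}^T\nabla_{\bm{x}}q](\bm{g},t)$. Since this holds for every $\bm{g}\in\mathbb{S}^{D-1}$, invertibility of the Radon transform gives $s^r_p(\bm{x})q(\bm{x})=\bm{r}^T\nabla_{\bm{x}}q(\bm{x})=s^r_q(\bm{x})q(\bm{x})$, so $\bm{s}_p^T\bm{r}=\bm{s}_q^T\bm{r}$ on $\{q>0\}$; letting $\bm{r}$ range over the orthogonal basis $O_r$ yields $\bm{s}_p=\bm{s}_q$ on the support of $q$, and the support/boundary assumptions then force $p=q$ a.e., exactly as in the proof of Theorem~\ref{thm: SSD non-convergence} (cf.\ Corollary~\ref{coro: maxSSD}).

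I expect the main obstacle to be the middle step: passing rigorously from ``the functional vanishes on $\mathcal{H}_{rg}$'' to the pointwise conditional-score identity, and then to $p=q$. This requires (i) the density/universality of $\mathcal{H}_{rg}$ in the Stein class to be strong enough that the SSD argument can be reused, (ii) the Fubini/integration-by-parts manipulations (legitimised by $\mathbb{E}_q[h_{p,r,g}(\bm{x},\bm{x})]<\infty$ and the vanishing boundary terms) to hold simultaneously for almost every $\bm{g}$, and (iii) a clean invocation of Radon inversion on the resulting family of one-dimensional projections. The remaining pieces -- nonnegativity, the ``if'' direction, and the reduction $SK_o(q,p)=0\Rightarrow D^2_{rg}\equiv 0$ -- are routine given Theorems~\ref{thm: SKSD Analytic form} and~\ref{thm: SSD non-convergence}.
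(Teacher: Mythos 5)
Your overall architecture is the same as the paper's: reduce $SK_o=0$ to $D^2_{rg}=0$ for ($p_g$-a.e.) every $\bm{g}$, use Theorem~\ref{thm: SKSD Analytic form} to turn this into the vanishing of a kernel mean embedding, use the $C_0$-universality in assumption 5 to deduce that the Radon transform of $q(\bm{x})(\bm{s}_p(\bm{x})-\bm{s}_q(\bm{x}))^T\bm{r}$ vanishes for all $\bm{g}$, invert (you via Radon inversion and the intertwining identity, the paper via its rotation/change-of-variable lemmas plus the Fourier slice theorem), and conclude $\bm{s}_p^T\bm{r}=\bm{s}_q^T\bm{r}$ for all $\bm{r}\in O_r$, hence $p=q$. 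Your ``if'' direction, the intertwining computation $\mathcal{R}[\bm{r}^T\nabla q](\bm{g},t)=\bm{r}^T\bm{g}\,q_g'(t)$, and the direct basis argument at the end (in place of the paper's detour through complete conditional scores) are all fine; the continuity-in-$\bm{g}$ claim is unnecessary, since a.e.~$\bm{g}$ suffices for the inversion.

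The one step that does not work as written is the claim that $C_0$-universality makes $\mathcal{H}_{rg}$ ``dense in the one-dimensional Stein class'', so that the vanishing of the functional extends to every admissible $f_{rg}$. $C_0$-universality gives density in $C_0(\mathbb{R})$ in the \emph{sup norm} only, while your functional contains the derivative term $\bm{r}^T\bm{g}\,\mathbb{E}_q[f_{rg}'(\bm{x}^T\bm{g})]$, which is not controlled by uniform approximation of $f_{rg}$; so the extension to all Stein-class test functions is not licensed. The fix is to skip that extension entirely (as the paper does): integrate by parts \emph{within} the RKHS, obtaining $\int f_{rg}(t)B(t)\,dt=0$ for all $f_{rg}\in\mathcal{H}_{rg}$ with $B(\cdot)=\mathcal{R}\bigl[q\,(\bm{s}_p-\bm{s}_q)^T\bm{r}\bigr](\cdot,\bm{g})$, note that $B\in L^1(\mathbb{R})$ by Assumption 2 so $B\,dt$ is a finite signed measure, and then invoke the fact (the paper cites \citet{carmeli2010vector}) that a $C_0$-universal kernel has injective mean embedding on finite signed measures, giving $B=0$ a.e.\ directly. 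With that repair your argument coincides with the paper's proof up to notation, and the remainder of your write-up goes through.
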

Following the same idea of maxSSD (Eq.\ref{eq:maxSSD}), it suffices to use optimal slice direction $\bm{g}_r$ for each $\bm{r}\in O_r$, resulting in a slicing matrix $\bm{G}\in\mathbb{S}^{D\times (D-1)}$.
We name this discrepancy as maxSKSD, or \textit{maxSKSD-g} when we need to distinguish it from another variant described later.
\begin{corollary}{(maxSKSD)}
Assume the conditions in Theorem \ref{thm: Validity of SKSD} are satisfied. Then
\begin{equation}
    SK_{max}(q,p)=\sum_{\bm{r}\in O_r}{\sup_{\bm{g}_r}{D^2_{rg_r}(q,p)}}
    \label{eq: maxSKSD}
\end{equation}
is equal to 0 if and only if $p=q$ a.e.
\label{coro: max SKSD}
\end{corollary}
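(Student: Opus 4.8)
I want to prove Corollary \ref{coro: max SKSD}: that $SK_{max}(q,p) = \sum_{\bm{r}\in O_r}\sup_{\bm{g}_r} D^2_{rg_r}(q,p) = 0$ if and only if $p = q$ a.e. The ``if'' direction is immediate: if $p = q$ a.e., then $s_p^r = s_q^r$ a.e., and by the Stein identity each summand $D^2_{rg_r}(q,p)$ vanishes (this is already implicit in Theorem \ref{thm: Validity of SKSD}, whose ``if'' direction gives $SK_o(q,p) = 0$, and each $D^2_{rg}$ is a nonnegative squared RKHS norm by Theorem \ref{thm: SKSD Analytic form}, so the supremum over $\bm{g}_r$ is still $0$). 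The content is the ``only if'' direction.

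\textbf{Only-if direction via comparison with SKSD.} Suppose $SK_{max}(q,p) = 0$. Since each $D^2_{rg}(q,p) \geq 0$ (Theorem \ref{thm: SKSD Analytic form}), the vanishing of the sum of suprema forces $\sup_{\bm{g}_r} D^2_{rg_r}(q,p) = 0$ for every $\bm{r} \in O_r$, i.e.\ $D^2_{rg}(q,p) = 0$ for \emph{all} $\bm{g} \in \mathbb{S}^{D-1}$ and all $\bm{r} \in O_r$. But then the integrand in the definition of $SK_o$ (Eq.~\ref{eq: SKSD}) is identically zero, so $SK_o(q,p) = \sum_{\bm{r}\in O_r}\int_{\mathbb{S}^{D-1}} p_g(\bm{g}) D^2_{rg}(q,p)\, d\bm{g} = 0$. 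By Theorem \ref{thm: Validity of SKSD}, this gives $p = q$ a.e. So the corollary reduces to the already-proved Theorem \ref{thm: Validity of SKSD}, plus the elementary observation that a zero supremum over $\bm{g}$ implies a zero integral over $\bm{g}$ (the reverse of the usual majorization, valid here because the quantity is nonnegative). One should double-check the measurability/finiteness bookkeeping: the hypothesis $\mathbb{E}_q[h_{p,r,g}(\bm{x},\bm{x})] < \infty$ for all $\bm{r},\bm{g}$ is exactly what Theorems \ref{thm: SKSD Analytic form} and \ref{thm: Validity of SKSD} need, and it is assumed here, so no new conditions are required.

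\textbf{The main obstacle.} The only subtle point is the step ``$\sup_{\bm{g}} D^2_{rg}(q,p) = 0 \Rightarrow D^2_{rg}(q,p) = 0$ for all $\bm{g}$,'' which is trivial given nonnegativity, followed by ``$D^2_{rg} \equiv 0 \Rightarrow \int p_g(\bm{g}) D^2_{rg}\, d\bm{g} = 0$,'' also trivial. So there is essentially no obstacle: the real work was done in Theorem \ref{thm: Validity of SKSD}, and maxSKSD is a valid discrepancy precisely because it dominates $SK_o$ in the sense that $SK_{max}(q,p) \geq$ (a rescaled version of) $SK_o(q,p)$ pointwise, while sharing the same zero set. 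The one place I would be slightly careful is to confirm that taking the supremum over $\bm{g}_r$ rather than integrating does not break the ``if'' direction — i.e.\ that $D^2_{rg}(q,p) = 0$ for all $\bm{g}$ when $p=q$, not just for $p_g$-almost-all $\bm{g}$ — but this is clear from the closed form $D^2_{rg}(q,p) = \|\mathbb{E}_q[\xi_{p,r,g}(\bm{x})]\|^2_{\mathcal{H}_{rg}}$ and the Stein identity applied with the replacement $s_p^r = s_q^r$, which holds for every fixed $\bm{g}$.
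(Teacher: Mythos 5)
Your proof is correct and takes essentially the same route as the paper: the paper proves the analogous Corollary \ref{coro: maxSSD} by exactly this argument (a zero supremum together with nonnegativity of each inner term forces the integrated/orthogonal-basis discrepancy to vanish, after which the already-established validity theorem applies), and the kernelized case is handled by the identical reduction to Theorem \ref{thm: Validity of SKSD}. The two points you flag — that $D^2_{rg}\geq 0$ via the closed form in Theorem \ref{thm: SKSD Analytic form}, and that the Stein identity gives $D^2_{rg}(q,p)=0$ for every fixed $\bm{g}$ (not merely $p_g$-almost all) when $p=q$ — are precisely the facts the paper's argument rests on.
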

Figure \ref{fig: intuition and relation} (Left) clarifies the connections between the mentioned discrepancies. \wg{We emphasise that using a single projection $\bm{g}$ in maxSKSD may be insufficient when no single projected feature $\bm{x}^T\bm{g}$ is informative enough to describe the difference between $p$ and $q$. Instead, in maxSKSD, for each score projection $\bm{r}\in O_r$, we have a corresponding $\bm{g}_r$.}
\wg{One can also use the optimal $\bm{r}$ to replace the summation over $O_r$, which provides additional benefits in certain GOF tests. We call this discrepancy \textit{maxSKSD-{rg}}, and its validity can be proved accordingly. Interestingly, in appendix \ref{APP: Pathology},  we show under certain scenarios \textit{maxSKSD-g} can have inferior performance due to the noisy information provided by the redundant dimensions. Further, we show that such limitation can be efficiently addressed by using \textit{maxSKSD-{rg}}.}    

\vspace{-6pt}
\paragraph{Kernel choice and optimal $\bm{G}$}

\wg{RBF kernel with median heuristics is a common choice. However, better kernels, e.g.~deep kernels which evaluate a given kernel on the transformed input $\phi(\bm{x})$, might be preferred. It is non-trivial to directly use such kernel on SKSD or maxSKSD. 
We propose an adapted form of Eq.(\ref{eq: SKSD}) to incorporate such kernel and maintain its validity. We include the details in appendix \ref{App: Deep Kernel} and leave the experiments for future work.}

\wgr{The quality of sliced direction $\bm{G}$ is crucial for the performance of both \textit{maxSKSD-g} or \textit{maxSKSD-rg}. Indeed, it represents the projection directions that two distributions differ the most. The closed-form solutions of $\bm{G}$ is not analytic in general, in practice, finding the optimal $\bm{G}$ involves solving other difficult optimizations as well (projection $\bm{r}$ and test function $f_{rg}$). For the scope of this work, we obtained $\bm{G}$ by optimizing \textit{maxSKSD-g} or \textit{maxSKSD-rg} using standard gradient optimization, e.g. Adam, with random initialization. Still in some special cases (e.g.~$p$, $q$ are full-factorized), analytic solutions of optimal $\bm{G}$ exists, which is further discussed in appendix \ref{App: Closed form}.}


\vspace{-4pt}
\subsection{Application of maxSKSD}
\vspace{-4pt}
\paragraph{Goodness-of-fit Test} Assume the optimal test directions $\bm{g}_r\in\bm{G}$ are available, {maxSKSD (Eq.(\ref{eq: maxSKSD}))} can then be estimated using U-statistics \citep{hoeffding1992class,serfling2009approximation}. Given i.i.d.~samples $\{\bm{x}_i\}_{i=1}^N\sim q$, we have an unbiased minimum variance estimator:
\begin{equation}
    \reallywidehat{SK}_{max}(q,p)=\frac{1}{N(N-1)}\sum_{\bm{r}\in O_r}{\sum_{1\leq i\neq j\leq N}{h_{p,r,g_r}(\bm{x}_i,\bm{x}_j)}}.
    \label{eq: U maxSKSD}
\end{equation}
The asymptotic behavior of the estimator is analyzed in appendix \ref{APP: GOF Test}. We use bootstrap \citep{liu2016kernelized,huskova1993consistency,arcones1992bootstrap} to determine the threshold for rejecting the null hypothesis as indicated in algorithm \ref{alg: GOF Test}. The bootstrap samples can be calculated by 
\begin{equation}
    \reallywidehat{SK}_m^*=\sum_{1\leq i\neq j\leq N}{(w^m_i-\frac{1}{N})(w^m_j-\frac{1}{N})\sum_{\bm{r}\in O_r}{h_{p,r,g_r}(\bm{x}_i,\bm{x}_j)}}
    \label{eq: Bootstrap samples}
\end{equation}
where $(w^m_1,\ldots,w^m_N)_{m=1}^M$ are random weights drawn from multinomial distributions $\text{Multi}(N,\frac{1}{N},\ldots,\frac{1}{N})$. 

\begin{algorithm}[H]
\SetKwInOut{Input}{Input}
\SetKwInOut{Output}{Hypothesis}
\SetAlgoLined
\Input{Samples $\{\bm{x}_i\}_{i=1}^N \sim q(\bm{x})$, score function $\bm{s}_p(\bm{x})$, Orthogonal basis $O_r$, optimal test direction $\bm{g}_r$ for each $\bm{r}\in O_r$, kernel function $k_{rg}$, significant level $\alpha$, and bootstrap sample size $M$.}
\Output{$H_0$: $p=q$   v.s.   $H_1$: $q \neq p$}
Compute $\reallywidehat{SK}_{max}(q,p)$ using U-statistic Eq.(\ref{eq: U maxSKSD})\;
Generate $M$ bootstrap samples $\{\reallywidehat{SK}_m^*\}_{m=1}^M$ using Eq.(\ref{eq: Bootstrap samples})\;
Reject null hypothesis $H_0$ if the proportion $\reallywidehat{SK}_m^*>\reallywidehat{SK}_{max}(q,p)$ is less than $\alpha$.
 \caption{GOF Test with maxSKSD U-statistics}
 \label{alg: GOF Test}
\end{algorithm}

\begin{wrapfigure}[17]{r}{0.35\textwidth}
\centering
    \includegraphics[scale=0.16]{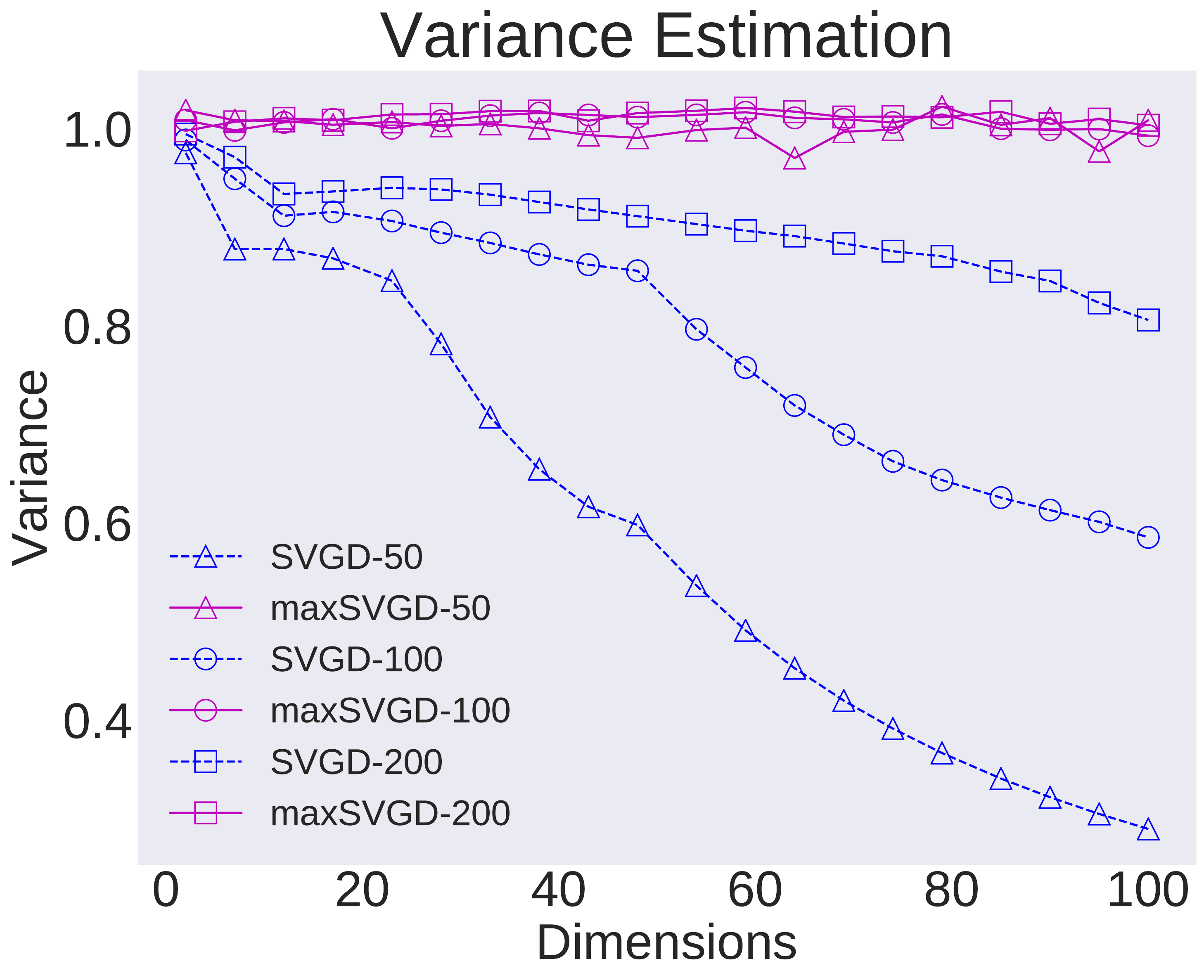}
\caption{Estimating the average variance of $p(\bm{x}) = \mathcal{N}(\bm{0},\bm{I})$ across dimensions using SVGD particles.
{SVGD-50 means the variance are estimated using 50 samples.} }
\label{fig: S_SVGD_Gaussian}
\end{wrapfigure}
\vspace{-6pt}
%
\paragraph{Model Learning} The proposed maxSKSD can be applied to model learning in two ways. First, it can be directly used as a training objective, in such case $q$ is the data distribution and $p$ is the model to be learned, and the learning algorithm performs $\min_{p} SK_{max}(q,p)$.
The second model learning scheme is to leverage the particle inference for latent variables and train the model parameters using an EM-like \citep{dempster1977maximum} algorithm. 
Similar to the relation between SVGD and KSD, we can derive a corresponding particle inference algorithm based on maxSKSD, called \textit{sliced-SVGD} (S-SVGD).
In short, we define a specific form of the perturbation as
$\bm{\phi}(\bm{x})=[\phi_{g_i}(\bm{x}^T\bm{g}_i),\ldots,\phi_{g_D}(\bm{x}^T\bm{g}_D)]^T$ and modify the proofs of Lemma \ref{lem: SVGD and KSD} accordingly. The resulting S-SVGD algorithm uses kernels defined on one dimensional projected samples,
which sidesteps the vanishing repulsive force problem of SVGD in high dimensions \citep{zhuo2017message,wang2018stein}. {We illustrate this in Figure \ref{fig: S_SVGD_Gaussian} by estimating the variance of a standard Gaussian with the particles obtained by SVGD or S-SVGD (see appendix \ref{App: S_SVGD_Gaussian}). We see that as the dimension increases, SVGD severely under-estimates the variance of $p$, while the S-SVGD remains robust.}
Furthermore, its validity is justified since in such case the KL gradient equals to maxSKSD which is a valid discrepancy. Readers are referred to appendix \ref{APP: Sliced SVGD} for the derivations. \wgr{We also give an analysis of their memory and computational cost for both GOF and model learning in appendix \ref{APP: computational memory cost}.}

\vspace{-6pt}
\section{Experiments}
\vspace{-4pt}
\subsection{Goodness of fit test}
\label{subsec: Experiment GOF}
\vspace{-6pt}
We evaluate maxSKSD (Eq.(\ref{eq: maxSKSD})) for GOF tests in high dimensional problems. First, we demonstrate its robustness to the increasing dimensionality using the Gaussian GOF benchmarks \citep{jitkrittum2017linear,huggins2018random,chwialkowski2016kernel}. Next, we show the advantage of our method for GOF tests on 50-dim \textit{Restricted Boltzmann Machine} (RBM) \citep{liu2016kernelized,huggins2018random,jitkrittum2017linear}. 
We included in comparison extensive baseline test statitics for GOF test: Gaussian or Cauchy random Fourier features (RFF) \citep{rahimi2008random}, KSD with RBF kernel \citep{liu2016kernelized,chwialkowski2016kernel}, finite set Stein discrepancy (FSSD) with random or optimized test locations \citep{jitkrittum2017linear}, random feature Stein discrepancy (RFSD) with L2 SechExp and L1 IMQ kernels \citep{huggins2018random}, and maximum mean discrepancy (MMD) \citep{gretton2012kernel} with RBF kernel.
Notice that we use gradient descent to obtain the test directions $\bm{g}_r$ (and potentially the slicing directions $\bm{r}$) for Eq.(\ref{eq: maxSKSD}).
\vspace{-5pt}
\subsubsection{GOF Tests with High dimensional Gaussian Benchmarks}
\label{subsub: benchmark GOF}
\vspace{-5pt}
We conduct 4 different benchmark tests with $p=\mathcal{N}(0,\bm{I})$: (1) \textbf{Null test}: $q=p$; (2) \textbf{Laplace}: $q(\bm{x})=\prod_{d=1}^D{\text{Lap}(x_d|0,1/\sqrt{2})}$ with mean/variance matched to $p$; (3) \textbf{Multivariate-t}: $q$ is fully factorized multivariate-t with $5$ degrees of freedom, \wgr{$0$ mean and scale $1$. In order to match the variance of $p$ and $q$, we change the variance of $p$ to $\frac{5}{5-2}$}; (4) \textbf{Diffusion}: $q(\bm{x})=\mathcal{N}(\bm{0},\bm{\Sigma}_1)$ where the variance of $1^{\text{st}}$-dim is 0.3 and the rest is the same as in $\bm{I}$. 
For the testing setup, we set the significance level $\alpha=0.05$. For FFSD and RFSD, we use the open-sourced code from the original publications. We only consider {maxSKSD-g} here as it already performs nearly optimally. We refer to appendix \ref{App: High dimensional benchmark GOF test} for details.

\begin{figure}
\vspace{-1em}
    \centering
    \includegraphics[scale=0.112]{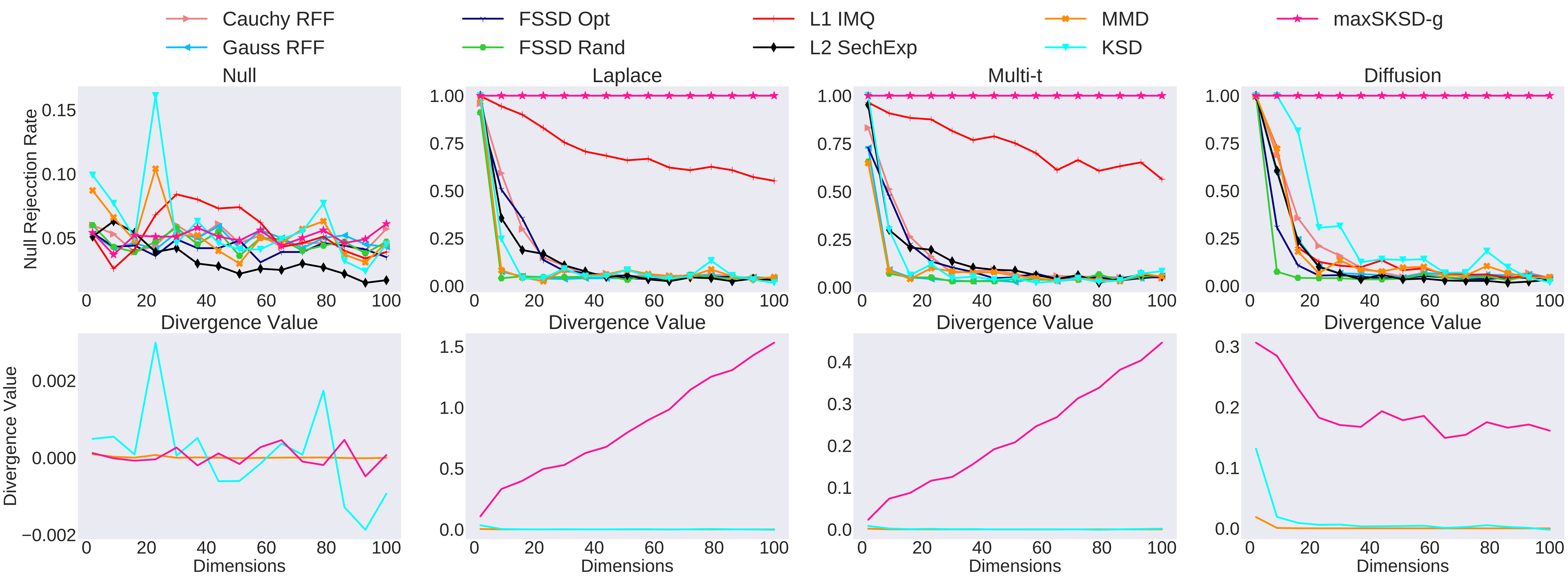}
    \caption{Each column reports GOF test results for a different alternative hypothesis, with the upper panel showing the rejection rate of the Null hypothesis and the lower panel showing the discrepancy value averaged over all trials. Both quantities are plotted w.r.t. the number of dimensions.}
    \label{fig:High dimensional benchmark GOF test}
    \vspace{-1em}
\end{figure}
{Figure} \ref{fig:High dimensional benchmark GOF test}
shows the GOF test performances and the corresponding discrepancy values. In summary, the proposed maxSKSD outperforms the baselines in all tests, where the result is robust to the increasing dimensions and the discrepancy values match the expected behaviours.
\vspace{-10pt}
\paragraph{Null} The left-most column in Figure \ref{fig:High dimensional benchmark GOF test} shows that all methods behave as expected as the rejection rate is closed to the significance level, except for RFSD with L2 SechExp kernel. All the discrepancy values oscillate around $0$, with the KSD being less stable.
\vspace{-10pt}
\paragraph{Laplace and Multivariate-t} The two middle columns of Figure \ref{fig:High dimensional benchmark GOF test} show that maxSKSD-g achieves a nearly perfect rejection rate consistently as the dimension increases, while the test power for all baselines decreases significantly. For the discrepancy values, {similar to the KL divergence between $q$ and $p$, maxSKSD-g linearly increases with dimensions due to the independence assumptions.}. 
%
\vspace{-10pt}
\paragraph{Diffusion} This is a more challenging setting since $p$ and $q$ only differ in one of their marginal distributions, which can be easily buried in high dimensions. As shown in the rightmost column of Figure \ref{fig:High dimensional benchmark GOF test}, all methods failed in high dimensions except maxSKSD-g, which still consistently achieves optimal performance. For the discrepancy values, we expect a positive constant due to the one marginal difference between $p$ and $q$. Only maxSKSD-g behaves as expected as the problem dimension increases. {The decreasing value at the beginning is probably due to the difficulty in finding the optimal direction $\bm{g}$ in high dimensions when the training set is small}.

\vspace{-3pt}
\subsubsection{RBM GOF test}
\label{subsub: RBM GOF}
\begin{wrapfigure}[14]{R}{0.46\textwidth}
\centering
\vspace{-16pt}
\hspace{-8pt}
    \includegraphics[scale=0.15]{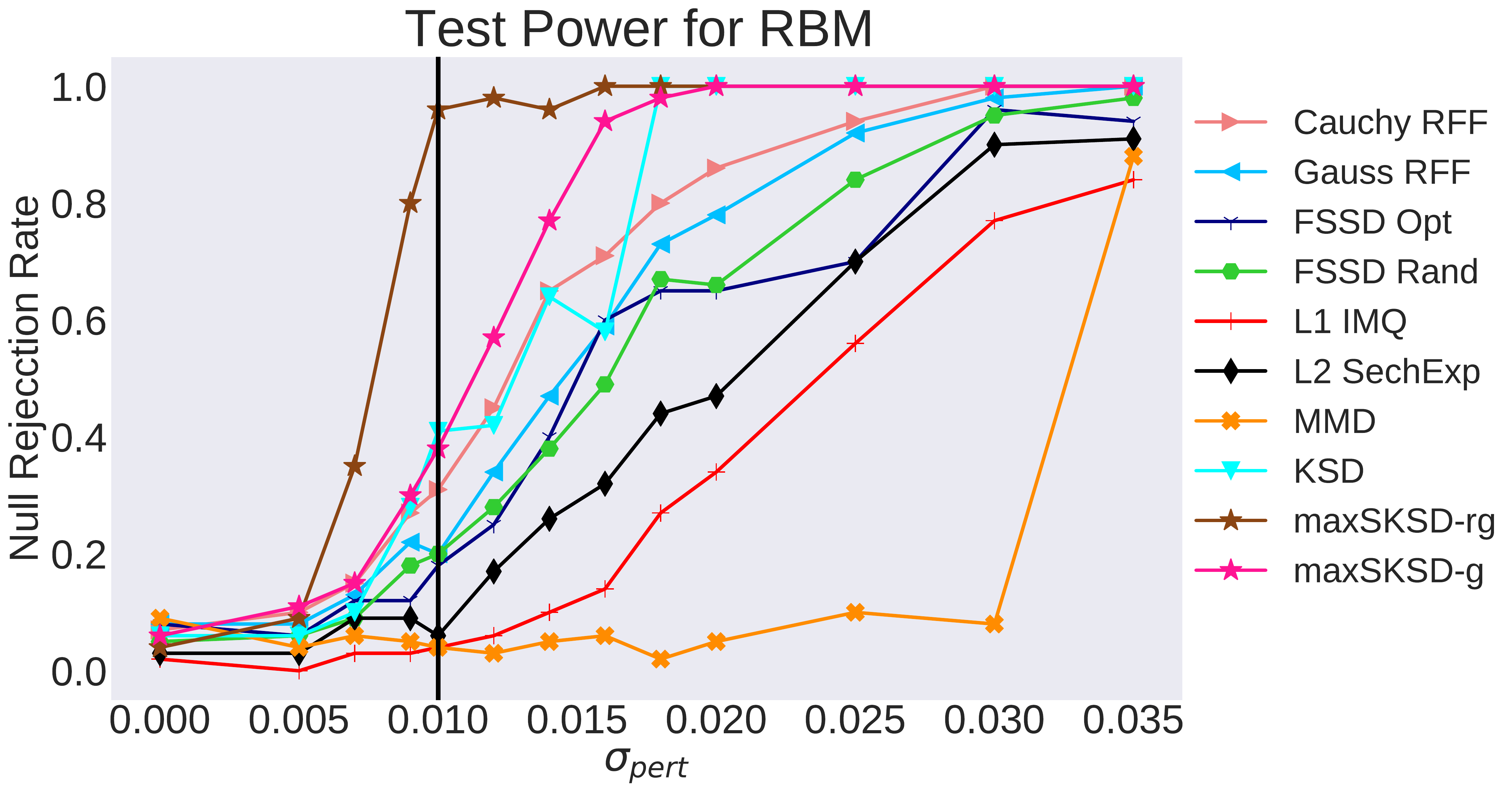}
\caption{RBM GOF Test with different levels of perturbation noise. The black vertical line indicates the perturbation level at $0.01$. }
\label{fig: RBM_All}
\end{wrapfigure}
We demonstrate the power maxSKSD for GOF tests on RBMs, but we now also include results for \textit{maxSKSD-rg}. 
We follow the test setups in \citet{liu2016kernelized,jitkrittum2017linear,huggins2018random} where different amounts of noise are injected into the weights to form the alternative hypothesis $q$. The samples are drawn using block Gibbs samplers. Refer to appendix {\ref{App: Setup RBM GOF test}} for details.
Figure \ref{fig: RBM_All} shows that maxSKSD based methods dominate the baselines, especially with maxSKSD-rg significantly outperforming the others. At perturbation level $0.01$, maxSKSD-rg achieves $0.96$ rejection rate, while others are all below $0.5$. This result shows the advantages of optimizing the slicing directions $\bm{r}$.
\vspace{-4pt}
\subsection{Model Learning}
\label{subsec: Experiment model training}
\vspace{-6pt}
We evaluate the efficiency of maxSKSD-based algorithms in training machine learning models. First, we use \textit{independent component analysis} (ICA) which is often used as a benchmark for evaluating training methods for energy-based model \citep{gutmann2010noise,hyvarinen2005estimation,ceylan2018conditional}. Our approach trains the ICA model by directly minimizing maxSKSD. Next, we evaluate the proposed S-SVGD particle inference algorithm, {when combined with \textit{amortization}} \citep{feng2017learning,pu2017vae}, in the training of a \textit{variational autoencoder} (VAE) \citep{kingma2013auto,rezende2014stochastic} on binarized MNIST. Appendix \ref{App: Bayesian NN} also 
shows superior results for S-SVGD when training a Bayesian neural network (BNN) on UCI datasets \citep{Dua:2019}.
\vspace{-3pt}

\subsubsection{ICA}
\vspace{-6pt}
ICA consists of a simple generative process $\bm{z}\sim\text{Lap}(0,1)$ and $\bm{x}=\bm{W}\bm{z}$, where the model parameters are a non-singular matrix $\bm{W}\in\mathbb{R}^{D\times D}$. The log density for $\bm{x}$ is $\log p(\bm{x})=\log p_z(\bm{W}^{-1}\bm{x})+C$, where the normalization constant $C$ can be ignored when training with Stein discrepancies. We train the models on data sampled from a randomly initialized ICA model and evaluate the corresponding test log likelihoods. We compare maxSKSD with KSD and the state-of-the-art LSD \citep{grathwohl2020cutting}.
For more details on the setup, we refer the reader to appendix \ref{App: ICA Setup}.

\begin{table}[]
\vspace{-1em}
\centering
\caption{Test NLL for different dimensional ICA with different objective functions. The above results are averaged over 5 independent runs of each methods.}
\label{tab: ICA NLL}
\vspace{-0.5em}
\begin{tabular}{c|lllllll}
\hline
Method                & \multicolumn{7}{c}{Dimension}                                                                                            \\
\multicolumn{1}{l|}{} & $D=10$              & $D=20$              & $D=40$              & $D=60$              & $D=80$              & $D=100$            & $D=200$            \\ \hline
KSD                   & \textbf{-10.23}          & -15.98          & -34.50         & -56.87         & -86.09         & -116.51        & -329.49       \\
LSD                   & {-10.42} & -14.54          & \textbf{-17.16} & \textbf{-15.05} & -12.39          & -5.49          & 46.63          \\
maxSKSD               & -10.45          & \textbf{-14.50} & -17.28          & -15.70          & \textbf{-11.91} & \textbf{-4.21} & \textbf{47.72} \\ \hline
\end{tabular}
\vspace{-0.7em}
\end{table}
Table \ref{tab: ICA NLL} shows that both maxSKSD and LSD are robust to increasing dimensions, with maxSKSD being better when $D$ is very large. Also at $D =200$, maxSKSD converges significantly faster than LSD (see Figure \ref{fig: ICA Training Curve} in appendix \ref{App: ICA Additional Plots}). This faster convergence is due to the closed-form solution for the optimal test functions,
whereas LSD requires adversarial training.
While KSD is also kernel-based, it suffers from the curse-of-dimensionality and fails to train the model properly for $D>20$. 
Instead the proposed maxSKSD can successfully avoid the problems of KSD with high dimensional
data. 
\vspace{-4pt}
\subsubsection{Amortized SVGD}
\vspace{-6pt}
Finally, we consider training VAEs with implicit encoders on dynamically binarized MNIST. The decoder is trained as in vanilla VAEs, but the encoder is trained by amortization \citep{feng2017learning,pu2017vae}, which minimizes the mean square error between the initial samples from the encoder, and the modified samples driven by the SVGD/S-SVGD dynamics (Algorithm \ref{alg: Amortized SVGD} in appendix \ref{App: Amortized_SVGD_Setup}).

We report performance in terms of test log-likelihood (LL). Furthermore we consider an imputation task, by removing the pixels in the lower half of the image and imputing the missing values using (approximate) posterior sampling from the VAE models. The performance is measured in terms of imputation diversity and correctness, {using label entropy and accuracy}.
For fair comparisons, we do not tune the coefficient of the repulsive force.
We refer to appendix \ref{App: Amortized_SVGD_Setup} for details.
\begin{table}[]
\begin{minipage}[t]{0.48\linewidth}\centering
\caption{Average log likelihood on first $5,000$ test images for different $D$ of latent dimensions.}
\label{tab: Amortized SVGD LL}
\vspace{-0.5em}
\begin{tabular}{c|llll}
\hline
Method                & \multicolumn{4}{c}{Latent Dim}                                                                           \\
\multicolumn{1}{l|}{} & \multicolumn{1}{c}{D=16} & \multicolumn{1}{c}{D=32} & \multicolumn{1}{c}{D=48} & \multicolumn{1}{c}{D=64} \\ \hline
Vanilla VAE           & -91.50                   & -90.39                   & -90.58                   & -91.50                   \\
SVGD VAE              & \textbf{-88.58}          & -90.43                   & -93.47                   & -94.88                   \\
S-SVGD VAE            & -89.17                   & \textbf{-87.55}          & \textbf{-87.74}          & \textbf{-87.78}          \\ \hline
\end{tabular}
\end{minipage}\hfill
\begin{minipage}[t]{0.4\linewidth}\centering
\caption{Label entropy and accuracy for imputed images. }
\label{tab: Label entropy}
\vspace{-0.5em}
\begin{tabular}{l|ll}
\hline
\multirow{2}{*}{Method} & \multirow{2}{*}{Entropy} & \multirow{2}{*}{Accuracy} \\
                        &                          &                           \\ \hline
Vanilla VAE             & 0.297                    & 0.718                     \\
SVGD VAE                & 0.538                    & 0.691                     \\
S-SVGD VAE              & \textbf{0.542}           & \textbf{0.728}            \\ \hline
\end{tabular}
\end{minipage}
\vspace{-1em}
\end{table}

Table \ref{tab: Amortized SVGD LL} reports the average test LL. We observe that S-SVGD is much more robust to the increasing latent dimensions compared to SVGD. To be specific, with $D=16$, SVGD performs the best where S-SVGD performs slightly worse than SVGD. However, when the dimension starts to increase, LL of SVGD drops significantly. For $D=64$, a common choice for latent space, it performs even significantly worse than vanilla VAE. On the other hand, S-SVGD is much more robust. {Notice that the purpose of this experiment is to show compare their robustness instead of achieving the state-of-the-art performance. Still the performance can be easily boosted, e.g. running longer S-SVGD steps before encoder update, we leave it for the future work.}

For the imputation task, we compute the label entropy and accuracy for the imputed images (Table \ref{tab: Label entropy}). We observe S-SVGD has higher label entropy compared to vanilla VAE and better accuracy compared to SVGD. This means both S-SVGD and SVGD capture the muli-modality nature of the posterior compared to uni-modal Gaussian distribution. However, high label entropy itself may not be a good indicator for the quality of the learned posterior. One can think of a counter-example that the imputed images are diverse but does not look like any digits. This may also gives a high label entropy but the quality of the posterior is poor. Thus, we use the accuracy to indicate the {``correctness''} of the imputed images, with higher label accuracy meaning the imputed images are closed to the original image. Together, a good model should give a higher label entropy along with the high label accuracy.  We observe S-SVGD has more diverse imputed images with high imputation accuracy. 

\subsection{summary of the experiments in appendix}
We present further empirical results on GOF tests and model learning in the appendix to demonstrate the advantages of the proposed maxSKSD. As a summary glance of the results:
\begin{itemize}
    \item In appendix \ref{APP: Pathology}, we analyse the potential limitations of maxSKSD-g and show that they can be mitigated by maxSKSD-rg, i.e.~optimising the slicing direction $\bm{r}$;
    \item In appendix \ref{App: Sampler Convergence}, we successfully apply maxSKSD to selecting the step size for \textit{stochastic gradient Hamiltonian Monte Carlo} (SGHMC) \citep{chen2014stochastic};
    \item In appendix \ref{App: Bayesian NN}, we show that the proposed S-SVGD approach out-performs the original SVGD on Bayesian neural network regression tasks.
\end{itemize}

\vspace{-3pt}
\section{Related Work}
\vspace{-6pt}
\paragraph{Stein Discrepancy}
SD \citep{gorham2015measuring} and KSD \citep{liu2016kernelized,chwialkowski2016kernel} are originally proposed for GOF tests. Since then research progress has been made to improve these two discrepancies.
For SD, LSD \citep{grathwohl2020cutting,hu2018stein} is proposed to increase the capacity of test functions using neural networks with $L_2$ regularization. 
On the other hand, FSSD \citep{jitkrittum2017linear} and RFSD \citep{huggins2018random} aim to reduce the computation cost of KSD from $O(n^2)$ to $O(n)$ where $n$ is the number of samples. Still the curse-of-dimensionality issue remains to be addressed in KSD, and the only attempt so far (to the best of our knowledge) is the \textit{kernelized complete conditional Stein discrepancy} (KCC-SD \citep{singhal2019kernelized}), which share our idea of avoiding kernel evaluations on high dimensional inputs but through comparing conditional distributions. 
KCC-SD requires the sampling from $q(x_d|\bm{x}_{-d})$, which often needs significant approximations in practice due to its intractability. This makes KCC-SD less suited for GOF test due to estimation quality in high dimensions. On the other hand, our approach does not require this approximation, and the corresponding estimator is well-behaved asymptotically. 
\vspace{-6pt}
\paragraph{Wasserstein Distance and Score matching}
Sliced Wasserstein distance (SWD) \citep{kolouri2016sliced} and sliced score matching (SSM) \citep{song2019sliced} also uses the ``slicing'' idea. However, their motivation is to address the computational issues rather than statistical difficulties in high dimensions. SWD leveraged the closed-form solution of 1D Wasserstein distance by projecting distributions onto 1D slices. SSM uses Hutchson's trick \citep{hutchinson1990stochastic} to approximate the trace of Hessian. 
\vspace{-6pt}
\paragraph{Particle Inference}
{\citet{zhuo2017message,wang2018stein}} proposed \textit{message passing SVGD} to tackle the well-known mode collapse problem of SVGD using local kernels in the graphical model. 
\wg{However, our work differs significantly in both theory and applications. Theoretically, the discrepancy behind their work is only valid if $p$ and $q$ have the same Markov blanket structure (refer to Section 3 in \cite{wang2018stein} for detailed discussion).  Thus, unlike our method, no GOF test and practical inference algorithm can be derived for generic cases. Empirically, the Markov blanket structure information is often unavailable, whereas our method only requires projections that can be easily obtained using optimizations. }
\textit{Projected SVGD} (pSVGD) is a very recent attempt \citep{chen2020projected} which updates the particles in an adaptively constructed low dimensional space, resulting in a biased inference algorithm. The major difference compared to S-SVGD is that our work still updates the particles in the original space with kernel being evaluated in 1D projections. Furthermore, S-SVGD can theoretically recover the correct target distribution. There is no real-world experiments provided in \citep{chen2020projected}, and a stable implementation of pSVGD is non-trivial, so we did not consider pSVGD when selecting the baselines.
\vspace{-3pt}
\section{Conclusion}
\vspace{-6pt}
We proposed sliced Stein discrepancy (SSD), as well as its scalable and kernelized version maxSKSD, to address the curse-of-dimensionality issues in Stein discrepancy. The key idea is to project the score function on one-dimensional slices and define (kernel-based) test functions on one-dimensional projections.
We also theoretically prove their validity as a discrepancy measure. We conduct extensive experiments including GOF tests and model learning to show maxSKSD's improved performance and robustness in high dimensions. There are three exciting avenues of future research. First, although validated by our theoretical study in appendix \ref{App: Deep Kernel}, practical approaches to incorporate deep kernels into SSD remains an open question. Second, the performance of maxSKSD crucially depends on the optimal projection direction, so better optimization methods to efficiently construct this direction is needed. {Lastly, we believe ``slicing'' is a promising direction for kernel design to increase the robustness to high dimensional problems in general. For example, MMD can be easily extended to high dimensional two-sample tests using this kernel design trick.}

\bibliography{iclr2021_conference}
\bibliographystyle{iclr2021_conference}

\clearpage

\appendix
\section{Definitions and Assumptions}
\label{Sec:Basics}
\begin{definition}{(\textbf{Stein Class} \citep{liu2016kernelized})}
Assume distribution $q$ has continuous and differentiable density $q(\bm{x})$. A function $f$ defined on the domain $\mathcal{X}\subset \mathbb{R}^D$, $f:\mathcal{X}\rightarrow \mathbb{R}$ is in the \textbf{{Stein class} of $q$} if {$f$ is smooth} and satisfies 
\begin{equation}
    \int_{\mathcal{X}}{\nabla_{x}(f(\bm{x})q(\bm{x}))d\bm{x}}=0
\end{equation}
\end{definition}
We can easily see that the above holds true for $\mathcal{X}=\mathbb{R}^D$ if 
\begin{equation}
    \lim_{||\bm{x}||\rightarrow\infty}{q(\bm{x})f(\bm{x})=0}
\end{equation}
This can be verified using integration by parts or divergence theorem. Specifically, if $q(\bm{x})$ vanishes at infinity, then it only requires the test function $f$ to be bounded. This definition can be generalized to a vector valued function $\bm{f}:\mathcal{X}\rightarrow \mathbb{R}^D$. We say such function $\bm{f}$ is in Stein class of $q$ if the member of $\bm{f}$, $f_i$, belongs to the Stein class of $q$ for all $i\in D$.

\begin{definition}{\citep{liu2016kernelized}}
A kernel $k(\bm{x},\bm{x}')$ is said to be in the {Stein class} of $q$ if $k(\bm{x},\bm{x}')$ has continuous second order partial derivatives, and both $k(\bm{x},\cdot)$ and $k(\cdot,\bm{x})$ are in the Stein class of $q$ for any fixed $\bm{x}$
\label{def: Kernel stein class}
\end{definition}

\paragraph{Radon Transform}
In machine learning literature, Radon transform has been used as the primary tool to derive sliced Wasserstein distance \citep{kolouri2019generalized,deshpande2019max,deshpande2018generative}. To be specific,
the standard Radon transform, denoted as $\mathcal{R}$, is a map from $L^1$ integrable functions $I\in L^1(\mathbb{R}^D)$ to the infinite set of its integrals over the hyperplane of $\mathbb{R}^D$. Specifically, for $L^1$ integrable functions:
\begin{equation}
    L^1(\mathbb{R}^D)=\{I:\mathbb{R}^D\rightarrow\mathbb{R}\ | \ \int_{\mathbb{R}^D}{|I(\bm{x})|d\bm{x}}<\infty\},
\end{equation}
the Radon transform is defined by
\begin{equation}
    \mathcal{R}[I](l,\bm{g})=\int_{\mathbb{R}^D}{I(\bm{x})\delta(l-\langle\bm{x},\bm{g}\rangle)d\bm{x}}
    \label{eq: Radon Transform}
\end{equation}
for $(l,\bm{g})\in \mathbb{R}\times \mathbb{S}^{D-1}$ where $\mathbb{S}^{D-1}\subset \mathbb{R}^D$ stands for a unit sphere in $\mathbb{R}^D$. For fixed $\bm{g}$, this defines a continuous function $\mathcal{R}[I](\cdot,\bm{g}):\mathbb{R}\rightarrow\mathbb{R}$ which is the projection of function $I$ on to the hyper-plane with its normal vector defined by $\bm{g}$ and offset defined by $l$. 

In the following we state the assumptions that we used to prove our main results.

\paragraph{Assumption 1}\label{assumption 1}(Properties of densities) Assume the two probability distributions $p$, $q$ has continuous differentiable density $p(\bm{x})$, $q(\bm{x})$ supported on $\mathbb{R}^D$.  Density $q$ satisfies: 
$\lim_{||\bm{x}||\rightarrow \infty}{q(\bm{x})= 0}$.
\paragraph{Assumption 2}\label{assumption 2}(Regularity of score functions) Denote the score function of $p(\bm{x})$ as $\bm{s}_p(\bm{x})=\nabla_{\bm{x}}\log p(\bm{x}) \in \mathbb{R}^D$ and score function of $q(\bm{x})$ accordingly. Assume the score functions satisfy 
\begin{equation}
    \begin{split}
        &\int_{\mathbb{R}^D}{q(\bm{x})|(s_p(\bm{x})-s_q(\bm{x}))^T\bm{r}|d\bm{x}}<\infty\\
        &\int_{\mathbb{R}^D}{q(\bm{x})||(s_p(\bm{x})-s_q(\bm{x}))^T\bm{r}||^2d\bm{x}}<\infty
    \end{split}
    \label{eq: assumption 2}
\end{equation}
for all $\bm{r}$
where $\bm{r}\in \mathbb{S}^{D-1}$ is a vector sampled from a uniform distribution over a unit ball $\mathbb{S}^{D-1}$. In other words, the score difference, when projected on the $\bm{r}$ direction, is both $L^1$ and $L^2$ integrable with respect to the probability measure defined by $q(\bm{x}) d\bm{x}$. These conditions are used to ensure both the Radon transform and the proposed divergence are well defined. 

\paragraph{Assumption 3}{\label{assumption 3}}(Stein Class of test functions) Assume the test function $f(\cdot;\bm{r},\bm{g}):\mathbb{R}^D\rightarrow \mathbb{R}$ is smooth and belongs to the Stein class of $q$.


\paragraph{Assumption 4}{\label{assumption 4}}(Bounded Radon transformed functions) Define 
\begin{equation}
    I_{q,p}=q(\bm{x})(s_p(\bm{x})-s_q(\bm{x}))^T\bm{r}
\end{equation}
We assume the Radon transformation of $I_{q,p}$, $\mathcal{R}[I_{q,p}](l,\bm{g})$ is bounded for all $\bm{g}$, where $\bm{g}$ is sampled from a uniform distribution over a unit ball $\mathbb{S}^{D-1}$. Namely, $||\mathcal{R}[I_{q,p}](l,\bm{g})||_{\infty}<\infty$

\paragraph{Assumption 5}{\label{assumption 5}}(Properties of kernels) For the RKHS $\mathcal{H}_{r,g}$ equipped with kernel function $k(\cdot,\cdot;\bm{r},\bm{g})$ defined as $k(\bm{x},\bm{x}';\bm{r},\bm{g})=k_{rg}(\bm{x}^T\bm{g},\bm{x}'^T\bm{g})$, we assume the kernel $k_{rg}$ is $C_0$-universal and $k$ belongs to the Stein class of $q$ for all $\bm{r}$ and $\bm{g}$. We further assume the kernel $k_{rg}$ is uniformly bounded such that $\sup_{\bm{x}}k_{rg}(\bm{x}^T\bm{g},\bm{x}^T\bm{g})<\infty$ for all $\bm{g}$ and $\bm{r}$. For example, the RBF kernel is a suitable choice for $k_{rg}$. 
\section{SSD Related propositions and theorems }
\subsection{Understanding the proposed Slice Stein Discrepancies}
\label{subsec: SSD Intuition}
We provide an explanation on the roles played by $\bm{r}$ and $\bm{g}$. We follow the same notations used in defining SSD (Eq.(\ref{eq: SSD})). The key idea to tackle the curse-of-dimensionality is to project both the score function $\bm{s}_p(\bm{x})\in\mathbb{R}^D$ and test function input $\bm{x}\in\mathbb{R}^D$. First, the \emph{slicing direction} $\bm{r}$ is introduced to project the score function, i.e. $s_p^r(\bm{x})=\bm{s}_p(\bm{x})^T\bm{r}$. By doing so, if $s_p^r(\bm{x})=s_q^r(\bm{x})$ for all $\bm{r}\in \mathbb{S}^{D-1}$, then $\bm{s}_p(\bm{s})=\bm{s}_q(\bm{x})$ and $p=q$ a.e. These equality conditions can be checked using Stein discrepancy (section \ref{sub:KSD}) by replacing $\bm{s}_p(\bm{x})$ with $s_p^r(\bm{x})$. Now it remains to address the scalability issue for the test functions as the score projection operation does not reduce the dimensionality of the test function input $\bm{x}$. In fact, using similar ideas from section \ref{sub:KSD}, the optimal test function to describe the difference between the projected score is proportional to $s_p^r(\bm{x})-s_q^r(\bm{x})$, which is an $\mathbb{R}^D\rightarrow\mathbb{R}$ function and thus it still utilizes the information in the original space $\bm{x} \in \mathbb{R}^D$. 

To resolve the high dimensionality of $\bm{x}$, it is preferred to use a test function that is defined on the one-dimensional input $\mathbb{R}\rightarrow \mathbb{R}$. However, using the projected input along the slicing direction $\bm{r}$ for the test function is insufficient to tell differences between the projected scores due to the information loss, as shown in the pathological example at the end of this section. Therefore, we need to find a way to express the projected score difference using a wide range of one-dimensional representations. 

{Our solution takes inspiration from the idea of CT-scans. To be precise, we test the difference of the projected score along a \emph{test direction} $\bm{g}$, by projecting $q(\bm{x})(s_p^r(\bm{x})-s_q^r(\bm{x}))$ to direction $\bm{g}$. {This is exactly the Radon transform of function $q(\bm{x})(s_p^r(\bm{x})-s_q^r(\bm{x}))$ in the direction $\bm{g}$, which is an $\mathbb{R}\rightarrow\mathbb{R}$ function with input $\bm{x}^{T}\bm{g}$.} Importantly, if the Radon transformed projected score difference is zero for all $\bm{g}$, the invertibility of Radon transform tell us the projected score difference is zero, and then $p=q$ a.e. if it holds true for all $\bm{r}$. Again this equality condition for the Radon transformed projected score difference can be checked in a similar way as in Stein discrepancy by defining test functions with input $\bm{x}^T \bm{g}$.}



To see why using a test direction $\bm{g}$ is necessary, we provide a counter-example in the case of using orthonormal slicing basis $O_r$. That is, if we set $\bm{g}=\bm{r}$ for $\bm{r} \in O_r$, then there exists a pair of distributions $p\neq q$ such that the following discrepancy equals to zero: 
\begin{equation}
    D(q,p)=\sum_{\bm{r}\in O_r}\sup_{f_r\in\mathcal{F}_q}\mathbb{E}_q[s_p^r(\bm{x})f_r(\bm{x}^T\bm{r})+\nabla_{\bm{x}^T\bm{r}}f_r(\bm{x}^T\bm{r})].
\end{equation}
To see this, we first select $O_r$ to be the standard orthonormal basis of $\mathbb{R}^D$ (i.e. the basis formed by one-hot vectors) w.l.o.g., as all the orthonormal basis in $\mathbb{R}^D$ are equivalent up to rotations or reflections. Now consider two probability distributions $p$ and $q$ supported on $\mathbb{R}^D$, where $p(\bm{x})=\prod_{i}^D{p(x_i)}$ and $q(x_i)=p(x_i)$. Importantly, $q$ distribution might not be factorized. Then we have
\begin{equation}
    \begin{split}
        D(q,p)&=\sum_{i}^D{\sup_{f_i\in\mathcal{F}_q}\mathbb{E}_{q_i}[s_p^i(x_i)f_i(x_i)+\nabla_{x_i}f_i(x_i)]} =0
    \end{split}
\end{equation}
where $s_p^i(x_i)=\nabla_{x_i}\log p(x_i)$ and $q_i=q(x_i)$. The second equality is from Stein identity due to the matching marginal of $p$ and $q$. However, it is not necessary that $p=q$, e.g. each dimensions in $q$ is correlated.  The main reason for this counter-example is that the test function only observes the marginal input $x_i$ and ignores any correlations that may exist in $q$.


\label{App:Theorem SSD non convergence}
\subsection{Proof of Theorem \ref{thm: SSD non-convergence}}
We split the proof of theorem \ref{thm: SSD non-convergence} into two parts. First, we prove the `if' part by the following proposition.
\begin{prop}{(SSD Detect Convergence)} If two distributions $p=q$ a.e., and assumption 1-4 are satisfied, then 
$S(q,p)=0$.
\label{prop: SSD convergence}
\end{prop}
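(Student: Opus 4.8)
The plan is to show that each term in the outer expectation of $S(q,p)$ (Eq.~(\ref{eq: SSD})) vanishes when $p=q$ a.e. Fix $\bm{r},\bm{g}\in\mathbb{S}^{D-1}$ and an arbitrary test function $f_{rg}$ in the Stein class of $q$, with $f(\cdot;\bm{r},\bm{g})=f_{rg}(\bm{x}^T\bm{g})$. The inner quantity to control is
\begin{equation}
    \mathbb{E}_q\!\left[s^r_p(\bm{x})f_{rg}(\bm{x}^T\bm{g})+\bm{r}^T\bm{g}\,\nabla_{\bm{x}^T\bm{g}}f_{rg}(\bm{x}^T\bm{g})\right].
\end{equation}
Since $p=q$ a.e., we have $\bm{s}_p(\bm{x})=\bm{s}_q(\bm{x})$ and hence $s^r_p(\bm{x})=\bm{s}_q(\bm{x})^T\bm{r}$ for (Lebesgue-almost) every $\bm{x}$; we may therefore replace $s^r_p$ by $s^r_q$ inside the expectation without changing its value. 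So it suffices to prove that
\begin{equation}
    \mathbb{E}_q\!\left[(\bm{s}_q(\bm{x})^T\bm{r})\,f_{rg}(\bm{x}^T\bm{g})+\bm{r}^T\bm{g}\,\nabla_{\bm{x}^T\bm{g}}f_{rg}(\bm{x}^T\bm{g})\right]=0 .
\end{equation}

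The key observation is that $f(\bm{x})=f_{rg}(\bm{x}^T\bm{g})$ satisfies $\nabla_{\bm{x}}f(\bm{x})=\bm{g}\,\nabla_{\bm{x}^T\bm{g}}f_{rg}(\bm{x}^T\bm{g})$ by the chain rule, so $\bm{r}^T\nabla_{\bm{x}}f(\bm{x})=(\bm{r}^T\bm{g})\nabla_{\bm{x}^T\bm{g}}f_{rg}(\bm{x}^T\bm{g})$. Thus the integrand equals $\bm{r}^T\!\left(\bm{s}_q(\bm{x})f(\bm{x})+\nabla_{\bm{x}}f(\bm{x})\right)$, and the claim reduces to the vector Stein identity
\begin{equation}
    \mathbb{E}_q\!\left[\bm{s}_q(\bm{x})f(\bm{x})+\nabla_{\bm{x}}f(\bm{x})\right]=\bm{0},
\end{equation}
which holds because $f(\cdot;\bm{r},\bm{g})$ belongs to the Stein class of $q$ by Assumption 3: indeed, writing $\bm{s}_q(\bm{x})q(\bm{x})=\nabla_{\bm{x}}q(\bm{x})$, the bracket integrates to $\int_{\mathbb{R}^D}\nabla_{\bm{x}}\big(f(\bm{x})q(\bm{x})\big)\,d\bm{x}=\bm{0}$ by the definition of the Stein class (equivalently, since $q$ vanishes at infinity by Assumption 1 and $f$ is bounded/smooth, integration by parts gives no boundary term). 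Dotting with $\bm{r}$ gives $0$ for every choice of $f_{rg}$, so the supremum over $f_{rg}\in\mathcal{F}_q$ is $0$ (the value $0$ being attained, e.g., by $f_{rg}\equiv 0$); hence the integrand of the outer $\mathbb{E}_{p_r,p_g}$ is identically $0$, and therefore $S(q,p)=0$.

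The only subtlety — and the one place to be careful rather than the "main obstacle" — is integrability: we must ensure the expectations above are finite so that splitting the bracket and invoking the Stein identity is legitimate. This is exactly what Assumptions 2 and 4 secure: Assumption 2 gives $L^1$ (and $L^2$) integrability of the projected score difference against $q(\bm{x})d\bm{x}$, and combined with Assumption 3 (smoothness and Stein-class membership of $f_{rg}$, which in particular makes $f_{rg}$ bounded since $q\to0$ at infinity) and Assumption 4 (boundedness of the relevant Radon-transformed quantity), each term $\mathbb{E}_q[s^r_q f_{rg}]$ and $\mathbb{E}_q[\bm{r}^T\bm{g}\,\nabla f_{rg}]$ is finite. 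So the decomposition and the application of Stein's identity are valid, completing the proof. (There is essentially no hard step here; the content of the theorem is really in the converse, "only if", direction, which is handled separately.)
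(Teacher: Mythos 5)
Your proof is correct and follows essentially the same route as the paper: both apply Stein's identity to the projected test function $f_{rg}(\bm{x}^T\bm{g})$ (the paper packages it as the vector function $\bm{r}\,f_{rg}(\bm{x}^T\bm{g})$ and takes the trace, while you dot the scalar Stein identity with $\bm{r}$ — the same computation via the chain rule $\bm{r}^T\nabla_{\bm{x}}f_{rg}(\bm{x}^T\bm{g})=\bm{r}^T\bm{g}\,\nabla_{\bm{x}^T\bm{g}}f_{rg}(\bm{x}^T\bm{g})$), then use $\bm{s}_p=\bm{s}_q$ a.e. One minor slip in your parenthetical: Stein-class membership of $f_{rg}$ does not by itself make $f_{rg}$ bounded (the implication runs the other way — boundedness of $f$ plus $q$ vanishing at infinity suffices for Stein-class membership), but this does not affect the argument.
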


\begin{proof}
To prove SSD can detect convergence of $q$ and $p$, we first introduce the Stein identity \citep{stein2004use,liu2016kernelized}.
\begin{lemma}{(Stein Identity)}
Assume $q$ is a smooth density satisfied assumption 1, then we have 
\[
\mathbb{E}_{q}[\bm{s}_q(\bm{x})f(\bm{x})^T+\nabla f(\bm{x})]=\bm{0}
\]
for any functions $f:\mathbb{R}^D\rightarrow\mathbb{R}^D$ in Stein class of $q$.
\end{lemma}
\noindent From the Stein identity, and $p=q$ a.e., we can take the trace of the Stein identity:
\[
\int{q(\bm{x})[\bm{s}_q(\bm{x})^TF(\bm{x})+\nabla^T_{\bm{x}}F(\bm{x})]d\bm{x}}=0
\]
where $F(\bm{x}):\mathbb{R}^D\rightarrow\mathbb{R}^D$ and it belongs to the Stein class of $q$.\\
Next, we choose a special form for $F(\bm{x})$. For particular sliced direction pair $\bm{r}$ and $\bm{g}$, we define
\[
F(\bm{x})=\left[
\begin{array}{c}
     r_1f_{r,g}(\bm{x}^T\bm{g})\\
     r_2f_{r,g}(\bm{x}^T\bm{g})\\
     \vdots\\
     r_Df_{r,g}(\bm{x}^T\bm{g})
\end{array}
\right]
\]
where $\bm{r}=[r_1,r_2,\ldots,r_D]^T$.

From the assumption 3 and definition of Stein class of $q$ for vector functions in section \ref{sub:KSD}, it is trivial that $F(\bm{x})$ belongs to the Stein class of $q$.
Substitute this $F(\bm{x})$ into Stein discrepancy Eq.(\ref{eq: Stein Discrepancy}), we have 
\[
\begin{split}
    &\int{q(\bm{x})[\bm{s}_q(\bm{x})^TF(\bm{x})+\nabla^T_{\bm{x}}F(\bm{x})]d\bm{x}}=0\\
    \Rightarrow&\int{q(\bm{x})[\bm{s}_q(\bm{x})^T\bm{r}f_{r,g}(\bm{x}^T\bm{g})+\bm{r}^T\bm{g}\nabla_{\bm{x}^T\bm{g}}f_{r,g}(\bm{x}^T\bm{g})]d\bm{x}}=0\\
\end{split}
\]
for all test functions $f_{r,g}$ that belongs to Stein class of $q$. 
Therefore, Eq.(\ref{eq: SSD}) is 0 if $p=q$ a.e.
\end{proof}

The 'only if' part of theorem \ref{thm: SSD non-convergence} is less direct to prove.
Before we start this journey, we need to introduce some properties relating to Radon transform.
\begin{lemma}{(Fourier Slice Theorem\citep{bracewell1956strip})}
For a particular smooth function $f(\bm{x}):\mathbb{R}^D\rightarrow\mathbb{R}$ that satisfies assumptions of Radon transforms, we define $\mathcal{F}_D$ as the D dimensional Fourier transform operator, $\mathcal{S}_1$ as a slice operator which extracts 1 dimensional central slice of a function and $\mathcal{R}$ as the Radon transform operator. Thus, for a slice direction $\bm{g}$, we have the following equivalence
\begin{equation}
    \mathcal{S}_1[\mathcal{F}_D[f]](\omega,\bm{g})=\mathcal{F}_1[\mathcal{R}[f](l,\bm{g})](\omega).
\end{equation}
\label{thm:Fourier Slice Theorem}
\end{lemma}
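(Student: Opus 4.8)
The plan is to verify the identity by a direct computation that rewrites the right-hand side as the left-hand side. First I would write out the one-dimensional Fourier transform of the Radon slice explicitly,
\[
\mathcal{F}_1[\mathcal{R}[f](\cdot,\bm{g})](\omega)=\int_{\mathbb{R}}\mathcal{R}[f](l,\bm{g})\,e^{-i\omega l}\,dl,
\]
and then substitute the definition of the Radon transform from Eq.(\ref{eq: Radon Transform}), which turns this into a double integral over $\mathbb{R}\times\mathbb{R}^D$ of $f(\bm{x})\,\delta(l-\langle\bm{x},\bm{g}\rangle)\,e^{-i\omega l}$.

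Next I would exchange the order of integration (Fubini, legitimate since $f\in L^1(\mathbb{R}^D)$ under the standing Radon-transform assumptions and $|e^{-i\omega l}|=1$), integrate out $l$ first, and use the sifting property of the Dirac delta to collapse the inner integral to $e^{-i\omega\langle\bm{x},\bm{g}\rangle}$. What remains is $\int_{\mathbb{R}^D}f(\bm{x})\,e^{-i\omega\langle\bm{x},\bm{g}\rangle}\,d\bm{x}$, which is precisely $\mathcal{F}_D[f]$ evaluated at the point $\omega\bm{g}$, i.e.\ the restriction of the $D$-dimensional Fourier transform to the central line through the origin in direction $\bm{g}$; by definition this is $\mathcal{S}_1[\mathcal{F}_D[f]](\omega,\bm{g})$, which completes the argument.

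To keep the delta-function manipulations honest, an alternative route I would present is to avoid $\delta$ entirely: decompose $\bm{x}=l\bm{g}+\bm{y}$ with $\bm{y}$ ranging over the hyperplane $\bm{g}^{\perp}$, so that $\mathcal{R}[f](l,\bm{g})=\int_{\bm{g}^{\perp}}f(l\bm{g}+\bm{y})\,d\bm{y}$; then the computation above becomes an ordinary application of Fubini on $\mathbb{R}\times\bm{g}^{\perp}\cong\mathbb{R}^D$ combined with $\langle l\bm{g}+\bm{y},\bm{g}\rangle=l$. The only genuine subtlety — and hence the step I expect to need the most care — is the integrability bookkeeping: one must know that $\mathcal{R}[f](\cdot,\bm{g})\in L^1(\mathbb{R})$ so that its one-dimensional Fourier transform makes pointwise sense, which follows from $f\in L^1(\mathbb{R}^D)$ together with Tonelli's theorem. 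For the cleanest statement one usually assumes $f$ is Schwartz (or at least that $f$ and $\widehat{f}$ are both integrable), which is exactly what the phrase ``satisfies assumptions of Radon transforms'' is meant to encode; everything else in the proof is routine.
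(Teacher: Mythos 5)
Your argument is correct and complete: the substitution of the delta-function definition of $\mathcal{R}[f]$, the Fubini/Tonelli exchange justified by $f\in L^1(\mathbb{R}^D)$, and the sifting of the delta to obtain $\int_{\mathbb{R}^D} f(\bm{x})e^{-i\omega\langle\bm{x},\bm{g}\rangle}d\bm{x}=\mathcal{F}_D[f](\omega\bm{g})$ is exactly the standard proof of the projection-slice theorem, and your alternative decomposition $\bm{x}=l\bm{g}+\bm{y}$, $\bm{y}\in\bm{g}^{\perp}$, is a clean way to avoid distributional manipulations. Note that the paper itself does not prove this lemma at all — it is imported by citation to Bracewell (1956) — so your write-up simply supplies the omitted standard derivation, with the integrability bookkeeping (that $\mathcal{R}[f](\cdot,\bm{g})\in L^1(\mathbb{R})$ for the slice Fourier transform to make sense) handled correctly.
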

\noindent This theorem implies the following two operations are equivalent.
\begin{itemize}
    \item First apply $D$ dimensional Fourier transform to a function $f$ and then take a slice that goes through the origin with direction $\bm{g}$ from the transformed function.
    \item First apply the Radon transform with direction $\bm{g}$ to the function $f$ and then apply one dimensional Fourier transform to the projected function.
\end{itemize}
Next we show some properties related to the rotated or reflected distributions.
\begin{lemma}{(Marginalization Invariance of rotated distribution)}
Assume we have a probability distribution $q$ supported on $\mathbb{R}^D$, a orthogonal matrix $\bm{G}\in\mathbb{R}^{D\times D}$ and a test function $f:\mathbb{R}^D\rightarrow\mathbb{R}$, we can define the corresponding rotated distribution $q_G$ after applying the rotation matrix $\bm{G}$. Thus, we have the following identity
\begin{equation}
    \int{q_G(\bm{x})f(\bm{G}^{-1}\bm{x})d\bm{x}}=\int{q(\bm{x})f(\bm{x})d\bm{x}}.
\end{equation}
\label{lemma: Marginalization Invariance}
\end{lemma}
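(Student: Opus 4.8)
The plan is to recognize $q_G$ as the pushforward of $q$ under the linear map $\bm{x}\mapsto\bm{G}\bm{x}$ and then apply the standard change-of-variables formula, using that an orthogonal matrix has Jacobian determinant of absolute value one, so that no volume distortion appears.

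First I would make precise the definition of the rotated density: if $\bm{X}\sim q$, then $q_G$ is the density of $\bm{Y}=\bm{G}\bm{X}$. By the change-of-variables formula for densities of linear transformations, $q_G(\bm{y})=q(\bm{G}^{-1}\bm{y})\,\lvert\det(\bm{G}^{-1})\rvert$. Since $\bm{G}$ is orthogonal we have $\bm{G}^T\bm{G}=\bm{I}$, hence $(\det\bm{G})^2=1$ and $\lvert\det\bm{G}\rvert=\lvert\det\bm{G}^{-1}\rvert=1$; therefore $q_G(\bm{y})=q(\bm{G}^{-1}\bm{y})$. (Equivalently, $\bm{G}^{-1}=\bm{G}^T$ is again orthogonal, which is all that is used.)

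Next I would substitute this expression into the left-hand side, giving $\int q(\bm{G}^{-1}\bm{x})\,f(\bm{G}^{-1}\bm{x})\,d\bm{x}$, and then perform the change of variables $\bm{u}=\bm{G}^{-1}\bm{x}$, i.e. $\bm{x}=\bm{G}\bm{u}$, whose Jacobian has absolute determinant $\lvert\det\bm{G}\rvert=1$. The integral becomes $\int q(\bm{u})\,f(\bm{u})\,d\bm{u}$, which is precisely the right-hand side, completing the proof.

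The argument is elementary; the only point requiring a small amount of care is justifying that the manipulations are legitimate, i.e. that $qf$ is integrable so the substitution is valid — but since $f$ is a test function (bounded, in the Stein class of $q$) and $q$ is a probability density, the product $qf$ is $L^1$, so there is no subtlety. I do not anticipate any genuine obstacle: the lemma is essentially a bookkeeping statement whose role is to let us transfer identities about axis-aligned (marginal) projections of a distribution to identities about projections along an arbitrary orthonormal slicing basis, which is exactly what is needed in the ``only if'' direction of Theorem~\ref{thm: SSD non-convergence}.
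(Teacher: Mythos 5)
Your proposal is correct and follows essentially the same route as the paper's proof: define $q_G(\bm{y})=q(\bm{G}^{-1}\bm{y})\lvert\det\bm{G}^{-1}\rvert=q(\bm{G}^{-1}\bm{y})$ via the linear change-of-variables formula and the fact that an orthogonal matrix has unit absolute determinant, then substitute and change variables back to obtain the right-hand side. Your additional remarks on $\lvert\det\bm{G}\rvert=1$ and the integrability of $qf$ are just slightly more explicit bookkeeping than the paper provides.
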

\begin{proof}
By the definition of rotation and change of variable formula, we define $\bm{y}=\bm{G}\bm{x}$, we can show
\[
\begin{split}
    q_G(\bm{y})&=q(\bm{x})|\bm{G}^{-1}|\\
    &=q(\bm{G}^{-1}\bm{y})\times 1\\
    &=q(\bm{G}^{-1}\bm{y})
\end{split}
\]
where $|\bm{G}^{-1}|$ represents the determinant of the inverse rotation matrix. Thus, by change of variable formula, we have
\[
\begin{split}
    &\int{q_{G}(\bm{y})f(\bm{G}^{-1}\bm{y})d\bm{y}}\\
    &=\int{q(\bm{G}^{-1}\bm{y})f(\bm{G}^{-1}\bm{y})d\bm{y}}\\
    &=\int{q(\bm{x})f(\bm{x})|\bm{G}|d\bm{x}}\\
    &=\int{q(\bm{y})f(\bm{y})d\bm{y}}.
\end{split}
\]
\end{proof}
\noindent This identity is useful when dealing with the rotated distributions. 
Next, we introduce the generalization of change-of-variable formula, which is often used in differential geometry.
\begin{lemma}{(Change of Variable Formula using Matrix Volume \citep{ben1999change})}
If $\mathcal{U}$ and $\mathcal{V}$ are sets in spaces with different dimensions, say $\mathcal{U}\in\mathbb{R}^n$ and $\mathcal{V}\in\mathbb{R}^m$ with $n>m$, and $\phi:\mathcal{U}\rightarrow\mathcal{V}$ is a continuously differentiable injective function and $f:\mathbb{R}^m\rightarrow\mathbb{R}$ is integrable on $\mathcal{V}$, we have the following change of variable formula:
\begin{equation}
    \int_{\mathcal{V}}{f(\bm{v})d\bm{v}}=\int_{\mathcal{U}}{(f\circ\phi)(\bm{u})\text{vol}J_{\phi}(\bm{u})d\bm{u}}
\end{equation}
where $\text{vol}J_{\phi}(\bm{u})$ is the matrix volume of the Jacobian matrix $J_{\phi}(\bm{u})=\partial(v_1,\ldots,v_m)/\partial(u_1,\ldots,u_n)$. Particularly, if $J_{\phi}(\bm{u})$ is of full column rank, then $\text{vol} J_{\phi}=\sqrt{\det J_{\phi}^TJ_{\phi}}$.
\label{lemma: Change of Variable Matrix Volume}
\end{lemma}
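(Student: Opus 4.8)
The plan is to prove the identity by reducing it, infinitesimally, to the classical ($m=n$) change-of-variables theorem. Note first that for $\phi$ to be injective the domain must be the \emph{lower}-dimensional space, so I read the statement with $\mathcal{U}\subset\mathbb{R}^m$, $\mathcal{V}\subset\mathbb{R}^n$ and $m<n$, so that $\phi$ parametrizes an $m$-dimensional surface in $\mathbb{R}^n$, its Jacobian $J_\phi$ is $n\times m$, and ``full column rank'' means rank $m$. The quantity to interpret is the matrix volume $\mathrm{vol}\,J_\phi$, which I would identify with the local $m$-dimensional stretching factor of $\phi$. Writing a singular value decomposition $J_\phi = U\Sigma V^{\top}$, where $U\in\mathbb{R}^{n\times m}$ has orthonormal columns, $V\in\mathbb{R}^{m\times m}$ is orthogonal, and $\Sigma=\mathrm{diag}(\sigma_1,\dots,\sigma_m)$, one gets $J_\phi^{\top}J_\phi = V\Sigma^2 V^{\top}$, hence $\det(J_\phi^{\top}J_\phi)=\prod_i\sigma_i^2$ and $\sqrt{\det(J_\phi^{\top}J_\phi)}=\prod_i\sigma_i$. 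Thus the matrix volume is exactly the product of the principal stretch factors, and it reduces to $|\det J_\phi|$ when $m=n$, which signals that the right route is a linear-then-local argument.

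\textbf{Linear case.} First I would establish the formula for an affine model map $\bm{u}\mapsto \bm{a}+A\bm{u}$ with $A\in\mathbb{R}^{n\times m}$ of full column rank. Using $A=U\Sigma V^{\top}$, the orthogonal factor $V^{\top}$ is a measure-preserving rotation of $\mathbb{R}^m$, the diagonal factor $\Sigma$ scales the $m$-dimensional volume element by $\prod_i\sigma_i$, and $U$ embeds $\mathbb{R}^m$ isometrically into $\mathbb{R}^n$ (orthonormal columns), preserving $m$-dimensional measure onto its image. Consequently, for any measurable cell $Q\subset\mathbb{R}^m$, the $m$-dimensional surface measure of $A(Q)$ equals $\big(\prod_i\sigma_i\big)\lambda_m(Q)=\sqrt{\det(A^{\top}A)}\,\lambda_m(Q)$. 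This is precisely the claimed identity with $f\equiv 1$; extending to indicator functions, then simple functions, then general integrable $f$ by the standard monotone approximation yields the linear statement.

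\textbf{Localization and limit.} For the nonlinear $\phi\in C^1$ I would use that $J_\phi$ is continuous, hence uniformly continuous on compact subsets of $\mathcal{U}$. Partition $\mathcal{U}$ into small cells $\{Q_k\}$ and, on each, apply Taylor's theorem, $\phi(\bm{u})=\phi(\bm{u}_k)+J_\phi(\bm{u}_k)(\bm{u}-\bm{u}_k)+o(\mathrm{diam}\,Q_k)$, so that $\phi$ is uniformly close to the affine map $A_k(\bm{u})=\phi(\bm{u}_k)+J_\phi(\bm{u}_k)(\bm{u}-\bm{u}_k)$. By the linear case, the surface measure of $\phi(Q_k)$ is approximately $\sqrt{\det(J_\phi(\bm{u}_k)^{\top}J_\phi(\bm{u}_k))}\,\lambda_m(Q_k)$, and injectivity of $\phi$ makes the images $\phi(Q_k)$ disjoint up to measure-zero overlaps, so they tile $\mathcal{V}$. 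The finite sum $\sum_k f(\phi(\bm{u}_k))\sqrt{\det(J_\phi(\bm{u}_k)^{\top}J_\phi(\bm{u}_k))}\,\lambda_m(Q_k)$ is simultaneously a Riemann sum for $\int_{\mathcal{U}}(f\circ\phi)\,\mathrm{vol}\,J_\phi\,d\bm{u}$ and an approximation of $\int_{\mathcal{V}}f\,d\bm{v}$; letting $\max_k\mathrm{diam}\,Q_k\to 0$ and invoking uniform continuity of $\mathrm{vol}\,J_\phi$ and of $f\circ\phi$ identifies the two limits, giving the formula.

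\textbf{Main obstacle.} The crux is the rigorous error control in the localization step: I must show that replacing $\phi$ by its affine approximant $A_k$ perturbs the $m$-dimensional surface measure of the image by $o(\lambda_m(Q_k))$ \emph{uniformly} in $k$, so that the accumulated error vanishes in the limit. This calls for a quantitative sandwiching estimate, bounding the measure of $\phi(Q_k)$ between those of images of slightly shrunk and enlarged affine cells, which relies on the full-rank hypothesis (so each $A_k$ is bi-Lipschitz onto its image with constants controlled by the singular values) together with uniform continuity of $J_\phi$. This is exactly the content of the area formula of geometric measure theory, which I would either cite or reproduce to make the limiting argument airtight; the passage from simple to general integrable $f$ is then routine.
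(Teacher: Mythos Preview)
The paper does not actually prove this lemma; it is stated with a citation to \cite{ben1999change} and then used as a black box in the proof of Lemma~\ref{lemma: Conditional marginalization}. So there is no ``paper's own proof'' to compare against.

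Your proposal is a correct and standard route to the result---what you have sketched is essentially the proof of the area formula from geometric measure theory. You are right to flag the dimensional inconsistency in the statement: a continuously differentiable injective $\phi$ from (an open piece of) $\mathbb{R}^n$ into $\mathbb{R}^m$ with $n>m$ cannot exist, and ``full column rank'' for an $m\times n$ Jacobian with $m<n$ is vacuous. The intended reading, consistent both with Ben-Israel's paper and with how the lemma is applied here (a $(D-1)$-dimensional hyperplane in $\mathbb{R}^D$ mapped to $\mathbb{R}^{D-1}$), is a parametrization of a lower-dimensional surface, exactly as you reinterpret it. Your SVD argument for the linear case is clean and gives the right geometric meaning to $\sqrt{\det J_\phi^\top J_\phi}$ as the product of singular values; the localization-plus-limit step is the standard skeleton, and you correctly identify the uniform sandwiching of surface measure under affine approximation as the place where real work (or a citation of the area formula) is needed. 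In short: your sketch is sound, and since the paper simply imports the result, your write-up already goes further than the paper does.
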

Next, we derive the key lemma that establishes the relationship between the conditional expectation of rotated distribution and Radon transform of the original distribution. 

\begin{lemma}{(Conditional Expectation = Radon Transform)}
For a particular test direction $\bm{g}_d \in \mathbb{S}^{D-1}$, we can define an arbitrary rotation matrix $\bm{G}\in\mathbb{R}^{D\times D}$ that the $d^{\text{th}}$ entry is the test direction $\bm{g}_d$. We assume the probability distribution $q(\bm{x})$ is supported on $\mathbb{R}^{D}$, and $\bm{x}_{-d}$ represents $\bm{x}\backslash x_{d}$ (all elements of $\bm{x}$ except $x_d$). Further, let define the mapping $\bm{x}=\bm{G}\bm{u}$ for $\bm{u}\in\mathbb{R}^{D}$ and $x_d$ is a constant $x_d=p$. Thus, with the smooth test function $f:\mathbb{R}^D\rightarrow\mathbb{R}$ and the assumptions in Radon transformation being true, we have the following identity:
\begin{equation}
    \int_{\mathcal{X}_d}{q_G(x_d,\bm{x}_{-d})f(\bm{G}^{-1}\bm{x})d\bm{x}_{-d}}=\int{q(\bm{u})f(\bm{u})\delta(p-\bm{u}^T\bm{g}_d)d\bm{u}}
\end{equation}
where $\mathcal{X}_d=\{\bm{x}\in\mathbb{R}^D|x_d=p\}$.
\label{lemma: Conditional marginalization}
\end{lemma}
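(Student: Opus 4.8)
The plan is to prove the identity $\int_{\mathcal{X}_d} q_G(x_d,\bm{x}_{-d}) f(\bm{G}^{-1}\bm{x})\, d\bm{x}_{-d} = \int q(\bm{u}) f(\bm{u})\,\delta(p - \bm{u}^T\bm{g}_d)\, d\bm{u}$ by treating both sides as integrals over a hyperplane and matching them via a careful change of variables. First I would unpack the left-hand side: by Lemma \ref{lemma: Marginalization Invariance}, the density $q_G(\bm{x})$ of the rotated random variable $\bm{y} = \bm{G}\bm{u}$ equals $q(\bm{G}^{-1}\bm{x}) = q(\bm{G}^{T}\bm{x})$ since $\bm{G}$ is orthogonal. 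So the left side becomes $\int_{\{x_d = p\}} q(\bm{G}^{T}\bm{x}) f(\bm{G}^{T}\bm{x})\, d\bm{x}_{-d}$, i.e.\ the integral of the function $(q f)\circ \bm{G}^{T}$ restricted to the affine hyperplane $\{x_d = p\}$, integrated against the $(D-1)$-dimensional Lebesgue measure on the free coordinates $\bm{x}_{-d}$.

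Next I would rewrite the right-hand side as a hyperplane integral as well. The delta function $\delta(p - \bm{u}^T\bm{g}_d)$ concentrates the integral on the hyperplane $\{\bm{u} : \bm{u}^T\bm{g}_d = p\}$; since $\|\bm{g}_d\| = 1$, the standard coarea/disintegration identity gives $\int q(\bm{u})f(\bm{u})\delta(p-\bm{u}^T\bm{g}_d)\,d\bm{u} = \int_{\{\bm{u}^T\bm{g}_d = p\}} q(\bm{u})f(\bm{u})\, d\sigma(\bm{u})$, where $d\sigma$ is the $(D-1)$-dimensional surface (Lebesgue) measure on that hyperplane. Then I would connect the two hyperplanes through the orthogonal map: since $\bm{g}_d$ is the $d$-th row of $\bm{G}$, we have $\bm{u}^T\bm{g}_d = (\bm{G}\bm{u})_d = x_d$ under $\bm{x} = \bm{G}\bm{u}$, so $\bm{G}$ maps the hyperplane $\{\bm{u}^T\bm{g}_d = p\}$ bijectively onto $\{x_d = p\}$. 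Parametrizing $\{x_d = p\}$ by the free coordinates $\bm{x}_{-d}$, the composite map $\bm{x}_{-d} \mapsto \bm{u} = \bm{G}^{T}\bm{x}$ (with $x_d = p$ fixed) is a continuously differentiable injection from $\mathbb{R}^{D-1}$ into the hyperplane $\{\bm{u}^T\bm{g}_d = p\} \subset \mathbb{R}^D$, so Lemma \ref{lemma: Change of Variable Matrix Volume} applies with $\phi(\bm{x}_{-d}) = \bm{G}^{T}(\bm{x}_{-d}; p)$; the Jacobian is the submatrix of $\bm{G}^{T}$ obtained by dropping the column corresponding to $x_d$, which has orthonormal columns, so $\operatorname{vol} J_\phi = \sqrt{\det(J_\phi^T J_\phi)} = 1$. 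Hence $\int_{\{\bm{u}^T\bm{g}_d=p\}} q(\bm{u})f(\bm{u})\,d\sigma(\bm{u}) = \int_{\mathbb{R}^{D-1}} q(\bm{G}^{T}\bm{x}) f(\bm{G}^{T}\bm{x})\, d\bm{x}_{-d}$, which is exactly the left-hand side, completing the proof.

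The main obstacle I anticipate is making the surface-measure bookkeeping fully rigorous: one must be careful that the $(D-1)$-dimensional Hausdorff/Lebesgue measure induced on the hyperplane $\{\bm{u}^T\bm{g}_d = p\}$ is exactly what the delta-function identity $\int f\,\delta(p - \bm{u}^T\bm{g}_d)\,d\bm{u} = \int_{\{\bm{u}^T\bm{g}_d = p\}} f\, d\sigma$ produces — this relies on $\|\bm{g}_d\| = 1$ (otherwise a factor $1/\|\bm{g}_d\|$ appears) — and that this coincides with the pushforward of $d\bm{x}_{-d}$ under the orthogonal parametrization, with unit Jacobian volume. A secondary subtlety is keeping the orientation and indexing of $\bm{G}$, $\bm{G}^{-1} = \bm{G}^T$, and the "$d$-th entry" convention consistent throughout; once the hyperplane identification is pinned down, the orthogonality of $\bm{G}$ makes all Jacobian factors equal to one and the two sides match termwise.
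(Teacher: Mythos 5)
Your proof is correct and takes essentially the same route as the paper's: rewrite $q_G$ via Lemma \ref{lemma: Marginalization Invariance}, parametrize the hyperplane by $\bm{x}_{-d}$ and apply Lemma \ref{lemma: Change of Variable Matrix Volume} with the Jacobian formed from the orthonormal vectors $\{\bm{g}_i\}_{i\neq d}$ so that $\mathrm{vol}\,J=1$, then identify the resulting hyperplane integral with the $\delta$-function (Radon transform) expression. The only differences are cosmetic: you argue from the right-hand side toward the left and make explicit the $\|\bm{g}_d\|=1$ normalization in the delta-to-surface-measure step, which the paper leaves implicit.
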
 
\begin{proof}
\noindent From the definition of $\bm{x}=\bm{G}\bm{u}$, we can define the rotation matrix $\bm{G}$ as following:
\[
\bm{G}=\left[
\begin{array}{c}
     \bm{g}_1^T  \\
     \vdots \\
     \bm{g}_d^T\\
     \vdots\\
     \bm{g}_D^T
\end{array}
\right]
\]
where $\bm{g}_d=[g_{d,1},\ldots,g_{d,D}]^T$. Thus, assume $x_d=p$, we can write down 
\[
\bm{x}=\bm{G}\bm{u}=\left[
\begin{array}{c}
     \sum_{k=1}^D{g_{1,k}u_{k}}\\
     \vdots\\
     p\\
     \vdots\\
     \sum_{k=1}^D{g_{D,k}u_k}
\end{array}
\right].
\]
Thus, the Jacobian matrix can be written as 
\[
J=\frac{\partial(\bm{G}\bm{u})}{\partial\bm{u}}=\left[
\begin{array}{cccccc}
     g_{1,1}&\ldots&g_{d-1,1}&g_{d+1,1}&\ldots&g_{D,1}  \\
     \vdots&\vdots &\vdots&\vdots&\vdots&\vdots\\
     g_{1,D}&\ldots&g_{d-1,D}&g_{d+1,D}&\ldots&g_{D,D}
\end{array}
\right]=\left[
\begin{array}{cccccc}
     \bm{g}_1&\ldots&\bm{g}_{d-1}&\bm{g}_{d+1}&\ldots&\bm{g}_{D}
\end{array}
\right].
\]
By the definition of rotation matrix, the Jacobian matrix is clearly full column rank. Thus, from Lemma \ref{lemma: Change of Variable Matrix Volume}, we have
\[
\begin{split}
    \text{vol} J&=\sqrt{\det J^TJ}=\left[
\begin{array}{c}
     \bm{g}^T_{1}\\
     \ldots\\
     \bm{g}^T_{d-1}\\
     \bm{g}^T_{d+1}\\
     \ldots\\
     \bm{g}_{D}^T
\end{array}
\right]\left[
\begin{array}{cccccc}
     \bm{g}_1&\ldots&\bm{g}_{d-1}&\bm{g}_{d+1}&\ldots&\bm{g}_{D}
\end{array}
\right] =\bm{I},
\end{split}
\]
where $\bm{I}\in\mathbb{R}^{(D-1)\times(D-1)}$ is the identity matrix. Then we directly apply the results in Lemma \ref{lemma: Change of Variable Matrix Volume} and Lemma \ref{lemma: Marginalization Invariance}, we have
\[
\begin{split}
    &\int_{\mathcal{X}_d}{q_{G}(x_d,\bm{x}_{-d})f(\bm{G}^{-1}\bm{x})d\bm{x}_{-d}}\\
    &=\int_{\mathcal{X}_d}{q(\bm{G}^{-1}\bm{x})f(\bm{G}^{-1}\bm{x})\times1d\bm{x}_{-d}}\\
    &\overset{\text{Lemma \ref{lemma: Change of Variable Matrix Volume}}}{=} \int_{\mathcal{U}}{q(\bm{u})f(\bm{u})\text{vol} \bm{J}d\bm{u}}
\end{split}
\]
where $\mathcal{U}=\{\bm{u}\in\mathbb{R}^D|\bm{g}_d^T\bm{u}=p\}$.
Thus, we have
\[
\int_{\mathcal{X}_d}{q_G(x_d,\bm{x}_{-d})f(\bm{G}^{-1}\bm{x})d\bm{x}_{-d}}=\int{q(\bm{u})f(\bm{u})\delta(p-\bm{u}^T\bm{g}_d)d\bm{u}}
\].

\end{proof}

Now, we can prove the 'only if' part of Theorem \ref{thm: SSD non-convergence} using the above lemmas.
\begin{proof}

In order to prove equation (Eq.(\ref{eq: SSD})) being 0 implies $p=q$ a.e., the strategy is to construct a lower bound for Eq.(\ref{eq: SSD}) by choosing a particular test function. We also need to make sure this lower bound is greater or equal to 0 and is 0 only if $p=q$ a.e. Thus, if the Eq.(\ref{eq: SSD}) is 0, it implies the lower bound is 0 and $q=p$ a.e.\\
Consider the inner supreme inside the Eq.(\ref{eq: SSD}), by Proposition \ref{prop: SSD convergence}, we have
\begin{equation}
    \begin{split}
&\mathbb{E}_{q}[\bm{s}_p(\bm{x})^T\bm{r}f_{r,g}(\bm{x}^T\bm{g})+\bm{r}^T\bm{g}\nabla_{\bm{x}^T\bm{g}}f_{r,g}(\bm{x}^T\bm{g})]\\
&=\mathbb{E}_{q}[(\bm{s}_p(\bm{x})-\bm{s}_q(\bm{x}))^T\bm{r}f_{r,g}(\bm{x}^T\bm{g})].
\end{split}
\label{eq:steps 1}
\end{equation}

Now we apply the Lemma \ref{lemma: Marginalization Invariance}, and assume $\bm{y}=\bm{G}\bm{x}$ and $y_d=\bm{g}^T\bm{x}$, then, Eq.(\ref{eq:steps 1}) can be rewritten as 
\begin{equation}
    \begin{split}
        &\int{q(\bm{x})[\bm{s}_p(\bm{x})-\bm{s}_q(\bm{x})]^T\bm{r}f_{r,g}(\bm{x}^T\bm{g})d\bm{x}}\\
        &=\int{q_G(y_d,\bm{y_{-d}})[\nabla_{\bm{G}^{-1}\bm{y}}\log{\frac{p(\bm{G}^{-1}\bm{y})}{q(\bm{G}^{-1}\bm{y})}}]^T\bm{r}f_{r,g}(y_d)d\bm{y}_{-d}dy_d}.
    \end{split}
    \label{eq:steps 2}
\end{equation}
The next step is to choose a specific form for the test function $f_{r,g}(y_d)$. Define 
\begin{equation}
    f_{r,g}(y_d)=\int{q_G(y_d,\bm{y}_{-d})[\nabla_{\bm{G}^{-1}\bm{y}}\log{\frac{p(\bm{G}^{-1}\bm{y})}{q(\bm{G}^{-1}\bm{y})}}]^T\bm{r}d\bm{y}_{-d}}.
    \label{eq:Special test function}
\end{equation}
First, we need to make sure this selected test function indeed satisfies assumption 3, namely, it needs to be in the Stein class of $q$. By Lemma \ref{lemma: Conditional marginalization}, this selected test function can be re-written into 
\[
\begin{split}
    &\int{q_G(y_d,\bm{y}_{-d})[\nabla_{\bm{G}^{-1}\bm{y}}\log{\frac{p(\bm{G}^{-1}\bm{y})}{q(\bm{G}^{-1}\bm{y})}}]^T\bm{r}d\bm{y}_{-d}}=\int{q(\bm{x})[\nabla_{\bm{x}}{\log{\frac{p(\bm{x})}{q(\bm{x})}}}]^T\bm{r}\delta(y_d-\bm{x}^T\bm{g})d\bm{x}}\\
    &=\mathcal{R}[I_{q,p}](y_d,\bm{g}).
\end{split}
\]
This is exactly the Radon transform of the function $I_{q,p}=q(\bm{x})(\bm{s}_p(\bm{x})-\bm{s}_q(\bm{x}))^T\bm{r}$. By assumption 4, this Radon transform is bounded. Thus, together with assumption 1, we can show this Radon transformed function indeed belongs to the Stein class of $q$ \citep{liu2016kernelized}. 

Now by substituting this specific test function Eq.(\ref{eq:Special test function}) into Eq.(\ref{eq:steps 2}), and defining $\bm{u}=[u_1,\ldots,y_d,\ldots,u_D]^T$, we have
\begin{equation}
    \begin{split}
    &\int{q_G(y_d,\bm{y}_{-d})[\nabla_{\bm{G}^{-1}\bm{y}}\log{\frac{p(\bm{G}^{-1}\bm{y})}{q(\bm{G}^{-1}\bm{y})}}]^T\bm{r}\int{q_G(y_d,\bm{u}_{-d})[\nabla_{\bm{G}^{-1}\bm{u}}\log{\frac{p(\bm{G}^{-1}\bm{u})}{q(\bm{G}^{-1}\bm{u})}}]^T\bm{r}d\bm{u}_{-d}}d\bm{y}_{-d}dy_d}\\
    &=\int\left\{\int{q_G(y_d,\bm{y}_{-d})[\nabla_{\bm{G}^{-1}\bm{y}}\log{\frac{p(\bm{G}^{-1}\bm{y})}{q(\bm{G}^{-1}\bm{y})}}]^T\bm{r}d\bm{y}_{-d}}\right\}\\
    &\left\{\int{q_G(y_d,\bm{u}_{-d})[\nabla_{\bm{G}^{-1}\bm{u}}\log{\frac{p(\bm{G}^{-1}\bm{u})}{q(\bm{G}^{-1}\bm{u})}}]^T\bm{r}d\bm{u}_{-d}}\right\}dy_d\\
    &=\int{f_{r,g}^2(y_d)dy_d}=\circled{1}\\
    &\geq0.
\end{split}
\label{eq:step 3}
\end{equation}
Thus we have constructed a lower bound (Eq.(\ref{eq:step 3})) for the supremum in Eq.(\ref{eq: SSD}) and it is greater than 0. Next, we show the expectation of this lower bound over $p_g$ and $p_r$ is 0 only if $p=q$ a.e.. ,If so then Eq.(\ref{eq: SSD}) is $0$ only if $p=q$ a.e..

First, it is clearly that $\circled{1}=0$ iff. $f_{r,g}(y_d)=0$ a.e.
By Lemma \ref{lemma: Conditional marginalization}, we have $f_{r,g}(y_d)=\mathcal{R}[I_{q,p}](y_d,\bm{g})$. 
Thus, we have
\[
\circled{1}=0 \quad \Rightarrow \quad \mathcal{R}[I_{q}](y_d,\bm{g})=\mathcal{R}[I_{p}](y_d,\bm{g}) \ a.e.
\]
where $I_q=q(\bm{x})\bm{s}_q(\bm{x})^T\bm{r}$ and $I_p=q(\bm{x})\bm{s}_p(\bm{x})^T\bm{r}$.

Now we define the $D$ dimensional Fourier transform operator $\mathcal{F}_D$, slice operator $\mathcal{S}_1$ as in Theorem \ref{thm:Fourier Slice Theorem}. Based on Fourier sliced lemma \ref{thm:Fourier Slice Theorem}, we have
\begin{equation}
\begin{split}
    &\mathcal{R}[I_{q}](y_d,\bm{g})=\mathcal{R}[I_{p}](y_d,\bm{g})\\
    \Rightarrow&\mathcal{F}_1[\mathcal{R}[I_{q}](y_d,\bm{g})]=\mathcal{F}_1[\mathcal{R}[I_{p}](y_d,\bm{g})]\\
    \Rightarrow&\mathcal{S}_1[\mathcal{F}_D[I_q]](\cdot,\bm{g})=\mathcal{S}_1[\mathcal{F}_D(I_p)](\cdot,\bm{g}).
\end{split}
\label{eq:step 4}
\end{equation}
This means the one dimensional slice at direction $\bm{g}$ for Fourier transform $\mathcal{F}_D(I_{q})$ and $\mathcal{F}_D(I_{p})$ are the same. Also note that the discrepancy (Eq.(\ref{eq: SSD})) is defined by integrating over test directions $\bm{g}$ with a uniform distribution $p_g(\bm{g})$ over $\mathbb{S}^{D-1}$. This means if the discrepancy is zero, then Eq.(\ref{eq:step 4}) must hold true for $\bm{g}$ a.e. over the hyper-sphere. Thus, we can show 
\begin{equation}
    \mathcal{F}_D(I_q)=\mathcal{F}_D(I_p) \;\;\;\; \text{a.e.}
    \label{eq:fourier equivalence}
\end{equation}
It is well-known that the Fourier transform is injective, thus, for any direction $\bm{r}$, we have
\begin{equation}
\begin{split}
    &\mathcal{F}_D(I_q)=\mathcal{F}_D(I_p)\\
    \Rightarrow&I_q=I_p\\
    \Rightarrow&q(\bm{x})\bm{s}_q(\bm{x})^T\bm{r}=q(\bm{x})\bm{s}_p(\bm{x})^T\bm{r}\\
    \Rightarrow&\bm{s}_q(\bm{x})^T\bm{r}=\bm{s}_p(\bm{x})^T\bm{r}
\end{split}
    \label{eq:marginalization equivalence}
\end{equation}
The $\mathcal{S}(q,p)$ (Eq.(\ref{eq: SSD})) also integrates over sliced directions $\bm{r} \in \mathbb{S}^{D-1}$, thus, we have
\[
   \bm{s}_q(\bm{x})^T\bm{r}=\bm{s}_p(\bm{x})^T\bm{r} \text{ for all }\bm{r} \quad \Rightarrow \quad \bm{s}_q(\bm{x})=\bm{s}_p(\bm{x}) \quad \Rightarrow \quad p=q \;\;\;\text{a.e.}
\]
This finishes the proof of the ``only if'' part: $\mathcal{S}(q,p)\geq0$ and is $0$ only if $q=p$ a.e.
\end{proof}

\subsection{Proof of Corollory \ref{coro: maxSSD}}
To prove the corollory \ref{coro: maxSSD}, we first propose a variant of SSD (Eq.(\ref{eq: SSD})) by relaxing the score projection $\bm{r}$. We call it \textit{orthogonal basis SSD}. 

\begin{rem}{(Orthogonal basis for SSD)}
It is not necessary to integrate over all possible $\bm{r}\in\mathbb{S}^{D-1}$ for Theorem \ref{thm: SSD non-convergence} to hold true. In fact, it suffices to use a set of projections that forms the orthogonal basis $O_r$ of $\mathbb{R}^D$. In such case we have
\begin{equation}
    S_o(q,p)=\sum_{\bm{r}\in O_r}{\int_{\mathbb{S}^{D-1}}{p_g(\bm{g})\sup_{f_{rg}\in\mathcal{F}_q}{\mathbb{E}_q[{s}^r_p(\bm{x})f_{rg}(\bm{x}^T\bm{g})+\bm{r}^T\bm{g}\nabla_{\bm{x}^T\bm{g}}f_{rg}(\bm{x}^T\bm{g})]d\bm{g}}}}
    \label{eq:orthogonal SSD}
\end{equation}
is zero if and only if $p=q$ a.e. One simple choice for $O_r$ can be $O_r=\{\bm{r}_1,\ldots,\bm{r}_D\}$ where $\bm{r}_d$ is one-hot vector with value 1 in d$^{\text{th}}$ component.
\label{rem: orthogonal SSD}
\end{rem}
To prove Remark \ref{rem: orthogonal SSD}, we only need to slightly modify the last few steps in the proof of Theorem \ref{thm: SSD non-convergence}.

\begin{proof}
We focus on the `only if' part as the other part is trivial. Without loss of generality, we set $O_r=\{\bm{r}_1,\ldots,\bm{r}_D\}$ where $\bm{r}_d$ is one-hot vector with value 1 in $i^{\text{th}}$ component. For general $O_r$, we can simply apply a inverse rotation matrix $\bm{R}^{-1}$ to recover this special case. 

From Eq.(\ref{eq:marginalization equivalence}), we have for direction $\bm{r}_d$,
\[
    \begin{split}
        &\bm{s}_q(\bm{x})^T\bm{r}_d=\bm{s}_p(\bm{x})^T\bm{r}_d\\
        \Rightarrow&\nabla_{x_d}\log q(x_d,\bm{x}_{-d})=\nabla_{x_d}\log p(x_d,\bm{x}_{-d})\\
        \Rightarrow&\nabla_{x_d}\log q(x_d|\bm{x}_{-d})=\nabla_{x_d}\log p(x_d|\bm{x}_{-d}).\\
    \end{split}
\]
If the above holds true for all directions $\bm{r}_d \in O_r$, then the score of the complete conditional for $q$ and $p$ are equal. Then from Lemma 1 in \citep{singhal2019kernelized}, we have $p=q$ a.e.
\end{proof}
Now we can prove Corollory \ref{coro: maxSSD} using Remark \ref{rem: orthogonal SSD}.
\begin{proof}
It is trivial to show $S_{max}(q,p)=0$ if $p=q$ a.e. (Stein Identity). Now assume $S_{max}(q,p)=0$, this means for any direction $\bm{r} \in O_r$, and $\bm{g}\in\mathbb{S}^{D-1}$, we have
\[
\sup_{f_{rg}\in\mathcal{F}_q}{\mathbb{E}_q[{s}^r_p(\bm{x})f_{rg}(\bm{x}^T\bm{g})+\bm{r}^T\bm{g}_r\nabla_{\bm{x}^T\bm{g}_r}f_{rg}(\bm{x}^T\bm{g})]}=0
\]
This is because we have show in the proof of Theorem \ref{thm: SSD non-convergence} that the above term is greater or equal to 0. Then we can directly use Remark \ref{rem: orthogonal SSD} to show $S_{max}(q,p)=0$ only if $q=p$ a.e.
\end{proof}

\section{SKSD Related Theorems}
\subsection{Proof of Theorem \ref{thm: SKSD Analytic form}}
\label{App:SKSD Analytic form}
\begin{proof}
First, we can verify the following equality using the proof techniques in \citep{liu2016kernelized,chwialkowski2016kernel}:
\begin{equation}
    h_{p,r,g}(\bm{x},\bm{y})=\langle\xi_{p,r,g}(\bm{x},\cdot),\xi_{p,r,g}(\bm{y},\cdot)\rangle_{\mathcal{H}_{rg}}.
\end{equation}
Next, we show that $\xi_{p,r,g}(\bm{x},\cdot)$ is Bochner integrable \citep{christmann2008support}, i.e.
\begin{equation}
    \mathbb{E}_q||\xi_{p,r,g}(\bm{x})||_{\mathcal{H}_{rg}}\leq\sqrt{\mathbb{E}_q||\xi_{p,r,g}(\bm{x})||^2_{\mathcal{H}_{rg}}}=\sqrt{\mathbb{E}_q[h_{p,r,g}(\bm{x},\bm{x})]}\leq \infty.
\end{equation}
Thus, we can interchange the expectation and the inner product. Finally we finish the proof by re-writing the supremum in $S_o(q,p)$: (Eq.(\ref{eq:orthogonal SSD}))
\begin{equation}
    \begin{split}
        &||\sup_{f_{rg}\in\mathcal{H}_{rg},||f_{rg}||\leq 1}\mathbb{E}_q[{s}^r_p(\bm{x})f_{rg}(\bm{x}^T\bm{g})+\bm{r}^T\bm{g}\nabla_{\bm{x}^T\bm{g}}f_{rg}(\bm{x}^T\bm{g})]||^2\\
        =&||\sup_{f_{rg}\in\mathcal{H}_{rg},||f_{rg}||\leq 1}{\mathbb{E}_q[\langle{s}^r_p(\bm{x})k_{rg}(\bm{x}^T\bm{g},\cdot)+\bm{r}^T\bm{g}\nabla_{\bm{x}^T\bm{g}}k_{rg}(\bm{x}^T\bm{g},\cdot),f_{rg}\rangle_{\mathcal{H}_{rg}}}]||^2\\
        =&||\sup_{f_{rg}\in\mathcal{H}_{rg},||f_{rg}||\leq 1}\langle f_{rg},\mathbb{E}_q[{s}^r_p(\bm{x})^T\bm{r}k_{rg}(\bm{x}^T\bm{g},\cdot)+\bm{r}^T\bm{g}\nabla_{\bm{x}^T\bm{g}}k_{rg}(\bm{x}^T\bm{g},\cdot)]\rangle_{\mathcal{H}_{rg}}||^2\\
        =&||\mathbb{E}_q[\xi_{p,r,g}(\bm{x})]||^2_{\mathcal{H}_{rg}}\\
        =&\langle\mathbb{E}_q[\xi_{p,r,g}(\bm{x},\cdot)],\mathbb{E}_q[\xi_{p,r,g}(\bm{x}',\cdot)]\rangle_{\mathcal{H}_{rg}}\\
        =&\mathbb{E}_{\bm{x},\bm{x}'\sim q}[h_{p,r,g}(\bm{x},\bm{x}')].
    \end{split}
\end{equation}
\end{proof}
\subsection{Proof of Theorem \ref{thm: Validity of SKSD}}
\label{App: Validity of SKSD}
\begin{proof}
First, we assume $p=q$ a.e. To show $SK_o(q,p)=0$, we only need to show $D^2_{rg}(q,p)=0$ for all $\bm{r}$ and $\bm{g}$. From Theorem \ref{thm: SKSD Analytic form}, we have
\[
D^2_{r,g}(q,p)=\langle\mathbb{E}_q[\xi_{p,r,g}(\bm{x},\cdot)],\mathbb{E}_q[\xi_{p,r,g}(\bm{x}',\cdot)]\rangle.
\]
From Assumption 5, we know $k_{rg}(\bm{x}^T\bm{g},\cdot)$ belongs to the Stein class of $q$. Then we follow the same proof technique in Proposition \ref{prop: SSD convergence} but replace the test function $f_{rg}(\bm{x}^T\bm{g})$ with $k_{rg}(\bm{x}^T\bm{g},\cdot)$. This gives 
\begin{equation}
    \mathbb{E}_q[{s}^r_q(\bm{x})^T\bm{r}k_{rg}(\bm{x}^T\bm{g},\cdot)+\bm{r}^T\bm{g}\nabla_{\bm{x}^T\bm{g}}k_{rg}(\bm{x}^T\bm{g},\cdot)]=0,
    \label{eq: Stein identity in kernel}
\end{equation}
i.e. $\mathbb{E}_q[\xi_{p,r,g}(\bm{x},\cdot)]=0$. Thus, $D^2_{rg}(q,p)=0$.

Next, we prove that it can detect the non-convergence of $p$ and $q$.  We know $SK_o(q,p)=0$ if and only if $D^2_{rg}(q,p)=0$. This means
\[
\begin{split}
    &D_{rg}(q,p)=0\\
    \Rightarrow&||\mathbb{E}_q[\xi_{p,r,g}(\bm{x})]||_{\mathcal{H}_{rg}}=0\\
    \Rightarrow&\mathbb{E}_q[\xi_{p,r,g}(\bm{x},\cdot)]=0
\end{split}
\]
where the second equality is from theorem \ref{thm: SKSD Analytic form}. From Eq.(\ref{eq: Stein identity in kernel}), we can re-write 
\[
\mathbb{E}_q[\xi_{p,r,g}(\bm{x},\cdot)]=\mathbb{E}_q[({s}^r_p(\bm{x})-{s}^r_q(\bm{x}))k_{rg}(\bm{x}^T\bm{g},\cdot)].
\]
Next, we denote $\bm{G}$ as an arbitrary rotation with the $d$th entry as the test direction $\bm{g}$, and $\bm{y}=\bm{G}\bm{x}$ with $y_d=\bm{x}^T\bm{g}$. Then from Lemma \ref{lemma: Marginalization Invariance}, we have
\[
\begin{split}
    &\int{q(\bm{x})\nabla_{\bm{x}}\log\frac{p(\bm{x})}{q(\bm{x})}^T\bm{r}k_{rg}(\bm{x}^T\bm{g},\cdot)d\bm{x}}\\
    =&\int{q_{G}(y_d,\bm{y}_{-d})\nabla_{\bm{G}^{-1}\bm{y}}\log\frac{q(\bm{G}^{-1}\bm{y})}{p(\bm{G}^{-1}\bm{y})}^T\bm{r}k_{rg}(y_d,\cdot)d\bm{y}_{-d}dy_d}\\
    =&\int{q_{G}(y_d)k_{rg}(y_d,\cdot)\int{q_G(\bm{y}_{-d}|y_d)\nabla_{\bm{G}^{-1}\bm{y}}\log\frac{q(\bm{G}^{-1}\bm{y})}{p(\bm{G}^{-1}\bm{y})}^T\bm{r}d\bm{y}_{-d}}dy_d}\\
    =&\int{q_{G}(y_d)k_{rg}(y_d,\cdot)H_r(y_d)dy_d}
    \end{split}
\]
where $H_r(y_d)=\int{q_G(\bm{y}_{-d}|y_d)\nabla_{\bm{G}^{-1}\bm{y}}\log\frac{q(\bm{G}^{-1}\bm{y})}{p(\bm{G}^{-1}\bm{y})}^T\bm{r}d\bm{y}_{-d}}$.
The above equation is exactly the mean embedding of the function $H_r(y_d)$ w.r.t. measure $q_G$. By assumption 5 that the kernel is $C_0$-universal, and by \cite{carmeli2010vector}, its embedding is zero if and only if $H_r(\cdot)=0$. This implies 
\[
\begin{split}
    &H_r(y_d)=\int{q_G(\bm{y}_{-d}|y_d)\nabla_{\bm{G}^{-1}\bm{y}}\log\frac{q(\bm{G}^{-1}\bm{y})}{p(\bm{G}^{-1}\bm{y})}^T\bm{r}d\bm{y}_{-d}}=0\\
    \Rightarrow&\int{q_G(y_d,\bm{y}_{-d})\nabla_{\bm{G}^{-1}\bm{y}}\log\frac{q(\bm{G}^{-1}\bm{y})}{p(\bm{G}^{-1}\bm{y})}^T\bm{r}d\bm{y}_{-d}}=0\\
    \Rightarrow&\int{q(\bm{x})({s}^r_p(\bm{x})-{s}^r_q(\bm{x}))\delta(y_d-\bm{x}^T\bm{g})d\bm{x}}=0
\end{split}
\]
where the third equality is from Lemma \ref{lemma: Conditional marginalization}. Then we can follow the same proof technique in Theorem \ref{thm: SSD non-convergence} and remark \ref{rem: orthogonal SSD} to show $SK_o(q,p)=0$ only if $p=q$ a.e.
\end{proof}
\section{Deep Kernel}
\label{App: Deep Kernel}
Using deep kernels for KSD is straight-forward and it only requires the deep kernel to be characteristic. But a naive application of deep kernels to SKSD or maxSKSD would result in a kernel evaluated on $\phi(\bm{x}^T\bm{g})$, which is less desirable. To make the kernel evaluated on the transformed input $\phi(\bm{x})^T\bm{g}$, we need to slightly adapt the form of SKSD (Eq.\ref{eq: SKSD}). 
Assume we have a smooth injective mapping $\phi$, we define the following term 
\begin{equation}
    \xi_{p,r,g,\phi}(\bm{x},\cdot)=s_p^r(\bm{x})k_{rg}(\phi^g(\bm{x}),\cdot)+C_{\phi}(\bm{x})\nabla_{\phi^g(\bm{x})}k_{rg}(\phi^g(\bm{x}),\cdot)
    \label{eq: Deep kernel SKSD test function}
\end{equation}
and 
\begin{equation}
    \begin{split}
        h_{p,r,g,\phi}&(\bm{x},\bm{y})=\\
        &s_p^r(\bm{x})k_{rg}(\phi^g(\bm{x}),\phi^g(\bm{y}))s_p^r(\bm{y})+C_{r,g,\phi}(\bm{x})s_p^r(\bm{y})\nabla_{\phi^g(\bm{x})}k_{rg}(\phi^g(\bm{x}),\phi^g(\bm{y}))\\
        &+C_{r,g,\phi}(\bm{y})s_p^r(\bm{x})\nabla_{\phi^g(\bm{y})}k_{rg}(\phi^g(\bm{x}),\phi^g(\bm{y}))\\
        &+C_{r,g,\phi}(\bm{x})C_{r,g,\phi}(\bm{y})\nabla^2_{\phi^g(\bm{x}),\phi^g(\bm{y})}k_{rg}(\phi^g(\bm{x}),\phi^g(\bm{y}))
    \end{split}
    \label{eq: Deep kernel SKSD inner}
\end{equation}
where $C_{r,g,\phi}(\bm{x})=\bm{r}^T\frac{\partial \phi(\bm{x})}{\partial \bm{x}}\bm{g}$ and $\phi^g(\bm{x})=\phi(\bm{x})^T\bm{g}$. We provide the following theorem to prove the validity of the corresponding SKSD discrepancy measure.
\begin{theorem}{(Deep Kernel SKSD)}
For two probability distributions $p$ and $q$, assume we have a smooth injective mapping $\phi(\bm{x})$, {such that} Assumptions 1,2 and 5 are satisfied, and $\mathbb{E}_q[h_{p,r,g,\phi}(\bm{x},\bm{x})]<\infty$ for all $\bm{r}$ and $\bm{g}$, then we propose deep kernel SKSD (Deep-SKSD) as
\begin{equation}
    DSK_o(q,p)=\sum_{\bm{r}\in O_r}{\int_{\mathbb{S}^{D-1}}{p_g(\bm{g})D^2_{r,g,\phi}(q,p)d\bm{g}}}, \quad D^2_{r,g,\phi}(q,p)=\mathbb{E}_q[h_{p,r,g,\phi}(\bm{x},\bm{x}')],
    \label{eq: deep SKSD}
\end{equation}
and it is 0 if and only if $p=q$ a.e..
\label{thm: Deep Kernel}
\end{theorem}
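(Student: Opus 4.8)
The plan is to mimic the proof of Theorem~\ref{thm: Validity of SKSD} almost verbatim, with the kernel $k_{rg}(\bm{x}^T\bm{g},\cdot)$ replaced by the deep kernel $k_{rg}(\phi(\bm{x})^T\bm{g},\cdot)$, and to reduce everything to the already-established change-of-variable machinery (Lemmas~\ref{lemma: Marginalization Invariance}--\ref{lemma: Conditional marginalization}) applied to the pushforward of $q$ under $\phi$. First I would record the RKHS identity $h_{p,r,g,\phi}(\bm{x},\bm{y})=\langle \xi_{p,r,g,\phi}(\bm{x},\cdot),\xi_{p,r,g,\phi}(\bm{y},\cdot)\rangle_{\mathcal{H}_{rg}}$: this is a direct computation expanding the inner product using the reproducing property and $\nabla_{\phi^g(\bm{x})}k_{rg}(\phi^g(\bm{x}),\cdot)\in\mathcal{H}_{rg}$, together with the chain rule $\nabla_{\bm{x}^T\bm{g}}$-type factors becoming $C_{r,g,\phi}(\bm{x})=\bm{r}^T(\partial\phi/\partial\bm{x})\bm{g}$. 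Combined with the hypothesis $\mathbb{E}_q[h_{p,r,g,\phi}(\bm{x},\bm{x})]<\infty$, Bochner integrability of $\xi_{p,r,g,\phi}(\bm{x},\cdot)$ follows exactly as in the proof of Theorem~\ref{thm: SKSD Analytic form}, so $D^2_{r,g,\phi}(q,p)=\|\mathbb{E}_q[\xi_{p,r,g,\phi}(\bm{x},\cdot)]\|^2_{\mathcal{H}_{rg}}=\mathbb{E}_{\bm{x},\bm{x}'\sim q}[h_{p,r,g,\phi}(\bm{x},\bm{x}')]\geq 0$, and $DSK_o(q,p)\geq 0$.

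For the ``if'' direction, suppose $p=q$ a.e. I would apply the trace of Stein's identity to the vector test function $F_\phi(\bm{x})=[r_1\,k_{rg}(\phi^g(\bm{x}),\cdot),\ldots,r_D\,k_{rg}(\phi^g(\bm{x}),\cdot)]^T$, noting that $\nabla_{\bm{x}}^T F_\phi(\bm{x})=\bm{r}^T(\partial\phi/\partial\bm{x})\bm{g}\,\nabla_{\phi^g(\bm{x})}k_{rg}=C_{r,g,\phi}(\bm{x})\nabla_{\phi^g(\bm{x})}k_{rg}(\phi^g(\bm{x}),\cdot)$ by the chain rule; since $k$ is in the Stein class of $q$ (Assumption~5) and $\phi$ is smooth, $F_\phi$ is in the Stein class of $q$, so $\mathbb{E}_q[\xi_{q,r,g,\phi}(\bm{x},\cdot)]=\bm{0}$, hence $\mathbb{E}_q[\xi_{p,r,g,\phi}(\bm{x},\cdot)]=\bm{0}$ and $D^2_{r,g,\phi}(q,p)=0$ for all $\bm{r},\bm{g}$, giving $DSK_o(q,p)=0$. (This also lets me rewrite, in general, $\mathbb{E}_q[\xi_{p,r,g,\phi}(\bm{x},\cdot)]=\mathbb{E}_q[(s_p^r(\bm{x})-s_q^r(\bm{x}))k_{rg}(\phi^g(\bm{x}),\cdot)]$, which is the form needed below.)

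For the ``only if'' direction, assume $DSK_o(q,p)=0$, so $D^2_{r,g,\phi}(q,p)=0$, i.e. $\mathbb{E}_q[(s_p^r(\bm{x})-s_q^r(\bm{x}))k_{rg}(\phi^g(\bm{x}),\cdot)]=0$ for every $\bm{r}\in O_r$ and a.e.\ $\bm{g}$. Here is where the injectivity of $\phi$ enters: I would change variables to $\bm{z}=\phi(\bm{x})$, letting $\tilde q$ be the pushforward density of $q$ under $\phi$ (which exists as $\phi$ is smooth injective) and noting $(s_p^r-s_q^r)(\bm{x})=\nabla_{\bm{x}}\log(p/q)^T\bm{r}$ transforms into a corresponding score-difference-type quantity for the pushforwards of $p$ and $q$ via the Jacobian; the vanishing embedding then becomes a statement about $\int \tilde q(\bm{z})(\cdots)k_{rg}(\bm{z}^T\bm{g},\cdot)d\bm{z}=0$. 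From this point the argument is identical to the proof of Theorem~\ref{thm: Validity of SKSD}: pick a rotation $\bm{G}$ with $d$th row $\bm{g}$, apply Lemma~\ref{lemma: Marginalization Invariance} and Lemma~\ref{lemma: Conditional marginalization} to reduce the mean embedding to a conditional expectation $H_r(y_d)$, use $C_0$-universality (Assumption~5) and \cite{carmeli2010vector} to conclude $H_r\equiv 0$, then invoke Lemma~\ref{lemma: Conditional marginalization} again and the Fourier slice theorem (Lemma~\ref{thm:Fourier Slice Theorem}) plus injectivity of the Fourier transform to obtain equality of the (pushforward) projected scores for all $\bm{r}\in O_r$. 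Finally, transporting back through $\phi^{-1}$ — this is the one genuinely new step — gives $\nabla_{x_d}\log q(x_d\mid\bm{x}_{-d})=\nabla_{x_d}\log p(x_d\mid\bm{x}_{-d})$ for each standard basis direction, and Lemma~1 of \cite{singhal2019kernelized} (equality of all complete-conditional scores) yields $p=q$ a.e.

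\textbf{Main obstacle.} The delicate part is making the change of variables $\bm{z}=\phi(\bm{x})$ rigorous in the ``only if'' direction: I must verify that the score-difference $\nabla_{\bm{x}}\log(p/q)^T\bm{r}$, after composition with $\phi^{-1}$ and multiplication by the Jacobian, is still an object to which the Radon-transform / Fourier-slice argument applies (in particular that the analogue of Assumptions~2 and~4 holds for the pushforward densities), and that injectivity of $\phi$ is genuinely enough — i.e.\ that no surjectivity or properness of $\phi$ is secretly needed — to transfer the conclusion $\tilde p=\tilde q$ back to $p=q$. If $\phi$ is not surjective one only gets equality of the pushforwards on $\phi(\mathbb{R}^D)$, so the write-up needs to argue (using that $\phi$ is a smooth injection, hence a measurable isomorphism onto its image) that this still forces $p=q$ a.e.; I expect this is where any hidden hypothesis on $\phi$ (e.g.\ being a diffeomorphism onto its image) will surface.
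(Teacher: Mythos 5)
Your proposal is correct and follows essentially the same route as the paper's proof: the RKHS identity with Bochner integrability, the Stein identity applied to the vector test function $[r_1 k_{rg}(\phi^g(\bm{x}),\cdot),\ldots,r_D k_{rg}(\phi^g(\bm{x}),\cdot)]^T$ for the ``if'' direction, and, for the ``only if'' direction, the change of variables $\bm{y}=\phi(\bm{x})$, $\bm{u}=\bm{G}\bm{y}$ followed by the mean-embedding/$C_0$-universality step, Lemma~\ref{lemma: Conditional marginalization}, the Fourier-slice argument, and the complete-conditional-score conclusion. The only difference is bookkeeping: the paper never passes to pushforward scores but keeps the original score difference composed with $\phi^{-1}$ (the Jacobian determinant from the density cancels against the volume element), so the surjectivity concern you raise does not actually arise, though your remark that $\phi$ implicitly needs a nonsingular Jacobian (so that $q_\phi$ is a well-defined density) is a fair observation about the paper's own write-up.
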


Deep-SKSD (Eq.(\ref{eq: deep SKSD})) can be viewed as a generalization of SKSD (Eq.(\ref{eq: SKSD})). Specifically, SKSD can be recovered using Deep-SKSD with $\phi$ as the identity mapping.
\subsection{Theorem \ref{thm: Deep Kernel}}
\label{App: Proof of deep kernel}
\begin{proof}
We follow the proof of Theorem \ref{thm: SKSD Analytic form} to show 
\[
h_{p,r,g,\phi}(\bm{x},\bm{y})=\langle\xi_{p,r,g,\phi}(\bm{x},\cdot),\xi_{p,r,g,\phi}(\bm{y},\cdot)\rangle_{\mathcal{H}_{rg}}.
\]
By Assumption 5, $k_{rg}$ belongs to the Stein class of $q$. Thus, $k_{rg}(\phi^g(\bm{x}),\cdot)$ belongs to the Stein class of $q$. This can be easily verified by using the definition of Stein class of $q$, and the facts that the kernel function is bounded and $q$ vanishes at boundary. Now we follow the proof in proposition \ref{prop: SSD convergence} to define 
\[
F(\bm{x})=\left[
\begin{array}{c}
     r_1k_{rg}(\phi^g(\bm{x}),\cdot)\\
     r_2k_{rg}(\phi^g(\bm{x}),\cdot)\\
     \vdots\\
     r_Dk_{rg}(\phi^g(\bm{x}),\cdot)\\
\end{array}
\right]
\]
\noindent and substitute this into Stein identity. This returns
\[
\begin{split}
    &\int{q(\bm{x})[{s}^r_q(\bm{x})k_{rg}(\phi^g(\bm{x}),\cdot)+C_{r,g,\phi}(\bm{x})\nabla_{\phi^g(\bm{x})}k_{rg}(\phi^g(\bm{x}),\cdot)]}d\bm{x}=0\\
    \Rightarrow&\mathbb{E}_q[\xi_{q,r,g,\phi}(\bm{x},\cdot)]=0\\
    \Rightarrow&D^2_{r,g,\phi}(q,q)=\mathbb{E}_q[h_{q,r,g,\phi}(\bm{x},\bm{x}')]=\langle\mathbb{E}_q[\xi_{q,r,g,\phi}(\bm{x},\cdot)],\mathbb{E}_q[\xi_{q,r,g,\phi}(\bm{x}',\cdot)]\rangle_{\mathcal{H}_{rg}}=0.
\end{split}
\]
Therefore, if $p=q$ a.e., then $DSK_o(q,p)=0$.

Now we prove $DSK_o(q,p)=0$ only if $p=q$ a.e.. It is trivial that $DSK_o(q,p)=0$ if and only if $D^2_{r,g,\phi}(q,p)=0$. In other words, 
\[
DSK_o(q,p)=0 \quad \Rightarrow \quad \mathbb{E}_q[\xi_{p,r,g,\phi}(\bm{x},\cdot)]=0.
\]
Similar to the proof in Theorem \ref{thm: Validity of SKSD}, the RHS term above can be re-written as 
\[
\mathbb{E}_q[\xi_{p,r,g,\phi}(\bm{x},\cdot)]=\mathbb{E}_q[({s}^r_p(\bm{x})-{s}^r_q(\bm{x}))k_{rg}(\phi^g(\bm{x}),\cdot)].
\]
We denote $\bm{G}$ as an arbitrary rotation with the $d^{\text{th}}$ entry as the test direction $\bm{g}$. We also define $\bm{y}=\phi(\bm{x})$ and $\bm{u}=\bm{G}\bm{y}$ with $u_d=\bm{y}^T\bm{g}$. Thus, by the change of variable formula, we have 
\begin{equation}
    \begin{split}
        q_{\phi}(\bm{y})&=q(\bm{x})|\bm{J}|^{-1},\\
        q_{G\phi}(\bm{u})&=q_{\phi}(\bm{y})|\bm{G}|^{-1}=q_{\phi}(\bm{y}),
    \end{split}
    \label{eq: deep SKSD change of variable}
\end{equation}
where $\bm{J}$ is the Jacobian matrix $\frac{\partial \phi(\bm{x})}{\partial \bm{x}}$ and $|\cdot|$ is the determinant. Thus, we have
\[
\begin{split}
    &\int{q(\bm{x})[\nabla_{\bm{x}}\log\frac{p(\bm{x})}{q(\bm{x})}]^T\bm{r}k_{rg}(\phi^g(\bm{x}),\cdot)}d\bm{x}\\
    =&\int{q_{\phi}(\bm{y})|\bm{J}|[\nabla_{\phi^{-1}(\bm{y})}\log\frac{p(\phi^{-1}(\bm{y}))}{q(\phi^{-1}(\bm{y}))}]^T\bm{r}k_{rg}(\bm{y}^T\bm{g},\cdot)|\bm{J}|^{-1}d\bm{y}}\\
    =&\int{q_{G\phi}(u_d,\bm{u}_{-d})[\nabla_{\phi^{-1}(\bm{G}^{-1}\bm{u})}\log\frac{p(\phi^{-1}(\bm{G}^{-1}\bm{u}))}{q(\phi^{-1}(\bm{G}^{-1}\bm{u}))}]^T\bm{r}k_{rg}(u_d,\cdot)|\bm{G}^{-1}|du_dd\bm{u}_{-d}}\\
    =&\int{q_{G\phi}(u_d)k_{rg}(u_d,\cdot)\int{q_{G\phi}(\bm{u}_{-d}|u_d)[\nabla_{\phi^{-1}(\bm{G}^{-1}\bm{u})}\log\frac{p(\phi^{-1}(\bm{G}^{-1}\bm{u}))}{q(\phi^{-1}(\bm{G}^{-1}\bm{u}))}]^T\bm{r}d\bm{u}_{-d}}du_d}\\
    =&\int{q_{G\phi}(u_d)k_{rg}(u_d,\cdot)H(u_d)du_d}.
\end{split}
\]
Following the proof steps in Theorem \ref{thm: Validity of SKSD}, we have $H(u_d)=0$. Then by Lemma \ref{lemma: Conditional marginalization}, we have
\[
\begin{split}
    &\int{q_{G\phi}(\bm{u}_{-d},u_d)[\nabla_{\phi^{-1}(\bm{G}^{-1}\bm{u})}\log\frac{p(\phi^{-1}(\bm{G}^{-1}\bm{u}))}{q(\phi^{-1}(\bm{G}^{-1}\bm{u}))}]^T\bm{r}d\bm{u}_{-d}}\\
    =&\int{q_\phi(\bm{y})[\nabla_{\phi^{-1}(\bm{y})}\log\frac{p(\phi^{-1}(\bm{y}))}{q(\phi^{-1}(\bm{y}))}]^T\bm{r}\delta(u_d-\bm{y}^T\bm{g})d\bm{y}}.
\end{split}
\]
Finally using similar proof techniques in Theorem \ref{thm: SSD non-convergence}, we have 
\[
p(\phi^{-1}(\bm{y}))=q(\phi^{-1}(\bm{y})).
\]
As $\phi$ is injective, we have $p=q$ a.e.
\end{proof}
\section{Closed-form solutions for $\pmb{G}$}
\label{App: Closed form}
\wgr{In general, such closed-from solutions of $\bm{G}$ is difficult to find, and we have to resort to gradient-based optimization for such task. However, the closed-form solutions exists under certain conditions. In the following, we give the closed-form solution of $\bm{G}$ under conditions that $p$ and $q$ are full-factorized.\\
Let define two distributions $p$, $q$ with support $\mathbb{R}^D$, such that $\log p(\bm{x})=\sum_{d=1}^D{\log p_d(x_d)}$ and $\log q(\bm{x})=\sum_{d=1}^D{\log q_d(x_d)}$. Now we consider the \textit{maxSSD-g} (Eq.\ref{eq:maxSSD}) with $O_r$ to be a group of one-hot vectors. Thus, eq.\ref{eq:maxSSD} becomes 
\begin{equation}
    S_{max}(q,p)=\sum_{d=1}^D{\max_{f_d\in\mathcal{F}_q,\bm{g}_d\in \mathbb{S}^{D-1}}{\mathbb{E}_q[S_{p,d}(x_d)f_d(\bm{x}^T\bm{g}_d)+g_{d,d}\nabla_{\bm{x}^T\bm{g}_d}f_d(\bm{x}^T\bm{g}_d)]}}
    \label{eq: Closed form maxSSD g}
\end{equation}
where $S_{p,d}(x_d)=\nabla_{x_d}\log p_d(x_d)$ and $g_{d,d}$ is the $d^\text{th}$ element of $\bm{g}_d$. It is difficult to directly solve this optimization. Instead, we can find its upper bound, and show that such upper bound can be recovered by choosing a specific form of $\bm{G}$. Thus, such $\bm{G}$ will be the optimal one. \\
Let's consider the Stein divergence with test function $H(\bm{x}):\mathbb{R}^D\rightarrow \mathbb{R}^D$ and $H(\bm{x})=[h_1(\bm{x}),\ldots,h_D(\bm{x})]^T$. We have 
\begin{equation}
    \begin{split}
        SD(q,p)&=\max_{H\in\mathcal{F}_q}\mathbb{E}_q[\nabla_{\bm{x}} \log p(\bm{x})^TH(\bm{x})+\nabla_{\bm{x}}^TH(\bm{x})]\\
        &=\max_{H\in\mathcal{F}_q}\mathbb{E}_q[\sum_{d=1}^D{\nabla_{x_d}\log p_d(x_d)h_d(\bm{x})+\nabla_{x_d}h_d(\bm{x})}]\\
        &=\max_{H\in\mathcal{F}_q}{\sum_{d=1}^D \mathbb{E}_q[S_{p,d}(x_d)h_d(\bm{x})+\nabla_{x_d}h_d(\bm{x})]}\\
        &=\sum_{d=1}^D{\max_{h_d\in\mathcal{F}_q}{\mathbb{E}_q[S_{p,d}(x_d)h_d(\bm{x})+\nabla_{x_d}h_d(\bm{x})]}}
    \end{split}
    \label{eq: Closed form SD}
\end{equation}
where $S_{p,d}(x_d)=\nabla_{x_d}\log p_d(x_d)$. It is trivial that $SD(q,p)\geq S_{max}(q,p)$ because $h_d(\bm{x})$ is less restrictive than $f_d(\bm{x}^T\bm{g}_d)$. From \cite{hu2018stein}, we can obtain the optimal form for $H^*(\bm{x})\propto S_p(\bm{x})-S_q(\bm{x})$ where $S_p(\bm{x})=\nabla_{\bm{x}}\log p(\bm{x})$. Therefore, $h^*_d(\bm{x})=h^*_d(x_d)\propto S_{p,d}(x_d)-S_{q,d}(x_d)$. By substitution into eq.\ref{eq: Closed form SD}, we have
\begin{equation}
    SD(q,p)=\sum_{d=1}^D{{\mathbb{E}_q[S_{p,d}(x_d)h^*_d(x_d)+\nabla_{x_d}h^*_d(x_d)]}}
    \label{eq: Closed form SD final}
\end{equation}
We note that eq.\ref{eq: Closed form SD final} can be recovered by \textit{maxSSD-g} (eq.\ref{eq: Closed form maxSSD g}) with $\bm{g}_d=[0,\ldots,1_d,\ldots,0]^T$ where $d^{\text{th}}$ element $1_d=1$, and $f_d(x_d)\propto S_{p,d}(x_d)-S_{q,d}(x_d)$. Thus, the optimal $\bm{G}=\bm{I}$, which is an identity matrix.  }

\section{Applications of maxSKSD}
\label{APP: applications of maxSKSD}
\subsection{Goodness-of-fit test}
\label{APP: GOF Test}
We propose a Goodness-of-fit test method based on the U-statistics of maxSKSD (Eq.(\ref{eq: U maxSKSD})) given the optimal test direction $\bm{g}_r$. In the following we analyze the asymptotic behavior of the proposed statistic.
\begin{theorem}
Assume the conditions in Theorem \ref{thm: Validity of SKSD} are satisfied, we have the following:
\begin{enumerate}
    \item If $q\neq p$, then $\reallywidehat{SK}_{max}(q,p)$ is asymptotically normal. Particularly,
    \begin{equation}
        \sqrt{N}(\reallywidehat{SK}_{max}(q,p)-SK_{max}(q,p))\stackrel{d}{\rightarrow}\mathcal{N}(0,\sigma_{h}^2)
        \label{eq: Asymptotic maxSKSD q neq q}
    \end{equation}
    where $\sigma_{h}^2=\text{var}_{\bm{x}\sim q}(\sum_{\bm{r}\in O_r}{\mathbb{E}_{\bm{x}'\sim q}[h_{p,r,g_r}(\bm{x},\bm{x}')]})$ and $\sigma_h\neq 0$
    \item If $q=p$, we have a degenerated U-statistics with $\sigma_h=0$ and 
    \begin{equation}
        N\reallywidehat{SK}_{max}(q,p)\stackrel{d}{\rightarrow}\sum_{j=1}^\infty{c_j(Z_j^2-1)}
        \label{eq: Asymptotic maxSKSD q=p}
    \end{equation}
    where $\{Z_j\}$ are i.i.d standard Gaussian variables, and $\{c_j\}$ are the eigenvalues of the kernel $\sum_{\bm{r}\in O_r}{h_{p,r,g_r}(\bm{x},\bm{x}')}$ under $q(\bm{x})$. In other words, they are the solutions of $c_j\phi_j(\bm{x})=\int_{\bm{x}'}{\sum_{\bm{r}\in O_r}{h_{p,r,g_r}(\bm{x},\bm{x}')}\phi_j(\bm{x}')q(\bm{x}')d\bm{x}'}$.
\end{enumerate}
\end{theorem}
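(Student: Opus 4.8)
The plan is to recognise $\reallywidehat{SK}_{max}(q,p)$ as a symmetric second‑order one‑sample $U$‑statistic and apply the two classical limit theorems for such statistics \citep{hoeffding1992class,serfling2009approximation}: Hoeffding's central limit theorem in the non‑degenerate regime ($q\neq p$), and the weighted chi‑square limit in the first‑order degenerate regime ($q=p$). Write $H_p(\bm{x},\bm{x}'):=\sum_{\bm{r}\in O_r}h_{p,r,g_r}(\bm{x},\bm{x}')$, so that $\reallywidehat{SK}_{max}(q,p)=\frac{1}{N(N-1)}\sum_{i\neq j}H_p(\bm{x}_i,\bm{x}_j)$. By Theorem \ref{thm: SKSD Analytic form}, $h_{p,r,g}(\bm{x},\bm{y})=\langle\xi_{p,r,g}(\bm{x},\cdot),\xi_{p,r,g}(\bm{y},\cdot)\rangle_{\mathcal{H}_{rg}}$; gathering these into the finite direct sum $\mathcal{H}:=\bigoplus_{\bm{r}\in O_r}\mathcal{H}_{rg_r}$ and setting $\Xi_p(\bm{x}):=(\xi_{p,r,g_r}(\bm{x},\cdot))_{\bm{r}\in O_r}$ gives $H_p(\bm{x},\bm{x}')=\langle\Xi_p(\bm{x}),\Xi_p(\bm{x}')\rangle_{\mathcal{H}}$, so $H_p$ is symmetric and positive definite with $\|\Xi_p(\bm{x})\|_{\mathcal{H}}^2=H_p(\bm{x},\bm{x})$. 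The hypothesis $\mathbb{E}_q[h_{p,r,g_r}(\bm{x},\bm{x})]<\infty$ gives $\mathbb{E}_q\|\Xi_p(\bm{x})\|_{\mathcal{H}}^2=\mathbb{E}_q[H_p(\bm{x},\bm{x})]<\infty$, and Cauchy--Schwarz upgrades this to $\mathbb{E}_{q\otimes q}[H_p(\bm{x},\bm{x}')^2]\leq(\mathbb{E}_q\|\Xi_p(\bm{x})\|_{\mathcal{H}}^2)^2<\infty$, i.e.\ $H_p\in L^2(q\otimes q)$ --- exactly the moment conditions the two limit theorems need.

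For the first claim, the first Hoeffding projection is $h_1(\bm{x}):=\mathbb{E}_{\bm{x}'\sim q}[H_p(\bm{x},\bm{x}')]=\langle\mu_p,\Xi_p(\bm{x})\rangle_{\mathcal{H}}$ with $\mu_p:=\mathbb{E}_q[\Xi_p(\bm{x})]$; then $\mathbb{E}_q[h_1(\bm{x})]=\|\mu_p\|_{\mathcal{H}}^2=SK_{max}(q,p)$ and $\Var_{\bm{x}\sim q}(h_1(\bm{x}))$ is the quantity $\sigma_h^2$ of the statement. When $q\neq p$, Corollary \ref{coro: max SKSD} together with Theorem \ref{thm: Validity of SKSD} gives $SK_{max}(q,p)=\sum_{\bm{r}\in O_r}\|\mathbb{E}_q[\xi_{p,r,g_r}(\bm{x},\cdot)]\|^2>0$, so $\mu_p\neq\bm{0}$. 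Granting $0<\sigma_h<\infty$ (the strict positivity is discussed below; finiteness follows from $H_p\in L^2(q\otimes q)$), Hoeffding's CLT for non‑degenerate $U$‑statistics applies verbatim and yields $\sqrt{N}(\reallywidehat{SK}_{max}(q,p)-SK_{max}(q,p))\stackrel{d}{\rightarrow}\mathcal{N}(0,\sigma_h^2)$, which is (\ref{eq: Asymptotic maxSKSD q neq q}).

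For the second claim, suppose $q=p$. Then $\xi_{p,r,g_r}=\xi_{q,r,g_r}$, and Stein's identity in kernel form, Eq.~(\ref{eq: Stein identity in kernel}), gives $\mathbb{E}_q[\xi_{p,r,g_r}(\bm{x},\cdot)]=\bm{0}$ for every $\bm{r}$, hence $\mu_p=\bm{0}$ and $h_1\equiv0$: the statistic is first‑order degenerate (so $\sigma_h=0$, as claimed). Since $H_p$ is positive definite with finite trace under $q$, Mercer's theorem on $L^2(q)$ gives $H_p(\bm{x},\bm{x}')=\sum_j c_j\phi_j(\bm{x})\phi_j(\bm{x}')$ with $c_j\geq0$, $\{\phi_j\}$ orthonormal in $L^2(q)$, $\sum_j c_j=\mathbb{E}_q[H_p(\bm{x},\bm{x})]<\infty$, and the $\{c_j,\phi_j\}$ solving precisely the eigen‑equation in the statement; moreover $c_j\,\mathbb{E}_q[\phi_j(\bm{x})]=\int h_1(\bm{x})\phi_j(\bm{x})q(\bm{x})\,d\bm{x}=0$ (using symmetry of $H_p$), so every eigenfunction with $c_j>0$ is $q$‑centered. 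Substituting the expansion, $N\reallywidehat{SK}_{max}(q,p)=\sum_j c_j\,\tfrac{1}{N-1}\big(\,(\sum_i\phi_j(\bm{x}_i))^2-\sum_i\phi_j(\bm{x}_i)^2\,\big)$; the multivariate CLT applied to the zero‑mean, jointly orthonormal blocks $(\phi_j(\bm{x}_i))_j$ gives $N^{-1/2}\sum_i\phi_j(\bm{x}_i)\stackrel{d}{\rightarrow}Z_j$ with $\{Z_j\}$ i.i.d.\ $\mathcal{N}(0,1)$, while $N^{-1}\sum_i\phi_j(\bm{x}_i)^2\to1$, so the $j$‑th term converges to $c_j(Z_j^2-1)$ and $N\reallywidehat{SK}_{max}(q,p)\stackrel{d}{\rightarrow}\sum_j c_j(Z_j^2-1)$.

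The main obstacle I anticipate is the final step of the degenerate case: rigorously interchanging the infinite sum over $j$ with the limit in $N$. I would handle it by the standard truncation argument --- split the sum at a level $J$, bound the tail $\sum_{j>J}c_j\,\frac{1}{N-1}(\cdots)$ in mean square uniformly in $N$ (its contribution is $O(\sum_{j>J}c_j)$, using $\sum_j c_j<\infty$ and the $L^2(q\otimes q)$ bound on $H_p$), pass to the limit for the finitely many $j\leq J$, and let $J\to\infty$ --- which is precisely the degenerate $U$‑statistic limit theorem \citep{serfling2009approximation} and mirrors the MMD analysis of \citet{gretton2012kernel}. The secondary technical point is $\sigma_h\neq0$ in the non‑degenerate case: I would argue by contradiction, writing $h_1(\bm{x})=\sum_{\bm{r}\in O_r}[\,s^r_p(\bm{x})\beta_r(\bm{x}^T\bm{g}_r)+\bm{r}^T\bm{g}_r\,\beta_r'(\bm{x}^T\bm{g}_r)\,]$ with $\beta_r:=\mathbb{E}_q[\xi_{p,r,g_r}(\bm{x},\cdot)]$, so that if $h_1$ were $q$‑a.s.\ constant this would be an exact pointwise identity which, combined with the $C_0$‑universality of the $k_{rg}$ (Assumption 5) and the fact that at least one $\beta_r\neq\bm{0}$ when $q\neq p$, is impossible. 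Everything else --- the two integrability checks, the Hoeffding projections, and the Mercer expansion --- is routine.
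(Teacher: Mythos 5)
Your overall strategy coincides with the paper's: both reduce the theorem to the classical limit theory for second-order $U$-statistics (Section 5.5 of \citealp{serfling2009approximation}) and then only verify the degeneracy conditions. Your check that $\sigma_h=0$ when $q=p$, via $\mathbb{E}_{\bm{x}'\sim q}[h_{p,r,g_r}(\bm{x},\bm{x}')]=\langle \xi_{p,r,g_r}(\bm{x},\cdot),\mathbb{E}_{q}[\xi_{p,r,g_r}(\bm{x}',\cdot)]\rangle_{\mathcal{H}_{rg_r}}=0$ by Eq.~(\ref{eq: Stein identity in kernel}), is exactly the paper's argument, and your additional work (the $L^2(q\otimes q)$ moment bound via Cauchy--Schwarz, the Mercer expansion and the truncation argument for the degenerate limit) is a sound elaboration of material the paper simply cites.

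The genuine gap is your treatment of $\sigma_h\neq 0$ when $q\neq p$, which is one of the only two steps the paper actually proves. You assert that $h_1(\bm{x})=\sum_{\bm{r}\in O_r}\bigl[s_p^r(\bm{x})\beta_r(\bm{x}^T\bm{g}_r)+\bm{r}^T\bm{g}_r\,\beta_r'(\bm{x}^T\bm{g}_r)\bigr]$ cannot be $q$-a.s.\ constant because of $C_0$-universality and because some $\beta_r\neq \bm{0}$, but universality of $k_{rg}$ concerns the richness of the RKHS and does not by itself exclude this particular functional identity; in particular it does not rule out $h_1$ being equal to the \emph{nonzero} constant $SK_{max}(q,p)$, which would give $\sigma_h=0$ with $q\neq p$. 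The paper closes this by a different contradiction: assuming $\mathbb{E}_{\bm{x}'\sim q}[h_{p,r,g_r}(\bm{x},\bm{x}')]=c_r$ constant, it integrates over $\bm{x}\sim p$ (not $q$), interchanges the expectations, and applies Stein's identity under $p$ --- the score inside $\xi_{p,r,g_r}$ is that of $p$, so $\mathbb{E}_{\bm{x}\sim p}[\xi_{p,r,g_r}(\bm{x},\cdot)]=0$ and hence $\mathbb{E}_{\bm{x}\sim p}[h_{p,r,g_r}(\bm{x},\bm{x}')]=0$ for every $\bm{x}'$ --- forcing $c_r=0$; but then $\mathbb{E}_{\bm{x},\bm{x}'\sim q}[h_{p,r,g_r}(\bm{x},\bm{x}')]=0$ for all $\bm{r}$, i.e.\ $SK_{max}(q,p)=0$, contradicting $q\neq p$ by Theorem \ref{thm: Validity of SKSD} / Corollary \ref{coro: max SKSD}. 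You should replace your universality sketch with this Fubini-plus-Stein-identity argument (noting that it needs $k_{rg}(\cdot,\cdot)$ to be in the Stein class of $p$, which holds since the kernel is bounded and $p$ vanishes at infinity); the rest of your proposal then stands.
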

\begin{proof}
We can directly use the results in Section 5.5 of \citep{serfling2009approximation}. We only need to check the conditions $\sigma_h\neq 0$ when $p\neq q$ and $\sigma_h=0$ when $p=q$.

When $p=q$, we re-write $\mathbb{E}_{\bm{x}'\sim q}[h_{p,r,g_r}]$ as 
\[
\mathbb{E}_{\bm{x}'\sim q}[h_{p,r,g_r}(\bm{x},\bm{x}')]=\langle \xi_{p,r,g_r}(\bm{x},\cdot),\mathbb{E}_{\bm{x}'\sim q}[\xi_{p,r,g_r}(\bm{x}',\cdot)]\rangle_{\mathcal{H}_{rg_r}}
\]
From the Eq.(\ref{eq: Stein identity in kernel}) in theorem \ref{thm: Validity of SKSD}, we have $\mathbb{E}_{\bm{x}'\sim q}[\xi_{p,r,g}(\bm{x}',\cdot)]=0$. Thus, $\mathbb{E}_{\bm{x}'\sim q}[h_{p,r,g_r}]=0$ for all $\bm{r}\in O_r$. Thus, we have $\sigma_h=0$ when $q=p$.

We assume when $p\neq q$, $\sigma_h=0$. This means $\mathbb{E}_{\bm{x}'\sim q}[h_{p,r,g}(\bm{x},\bm{x}')]=c_r$ where $c_r$ is a constant. Thus,
\[
\begin{split}
    &c_r=\mathbb{E}_{\bm{x}\sim p}[\mathbb{E}_{\bm{x}'\sim q}[h_{p,r,g}(\bm{x},\bm{x}')]]\\
    \Rightarrow&c_r=\mathbb{E}_{\bm{x}'\sim q}[\mathbb{E}_{\bm{x}\sim p}[h_{p,r,g}(\bm{x},\bm{x}')]]
\end{split}
\]
From the Eq.(\ref{eq: Stein identity in kernel}) in Theorem \ref{thm: Validity of SKSD}, we have $c_r=0$ for all $\bm{r}\in O_r$. Thus, $\mathbb{E}_{\bm{x},\bm{x}'\sim q}[h_{p,r,g}(\bm{x},\bm{x}')]=c_r=0$ which contradict $p\neq q$
\end{proof}
This theorem indicates a well-defined limit distribution for maxSKSD U-statistics. Next, similar to the previous work \citep{liu2016kernelized}, we adopt the bootstrap method \citep{arcones1992bootstrap,huskova1993consistency}. 
The quantile computed by the bootstrap samples (Eq.\ref{eq: Bootstrap samples}) is consistent to the one using degenerated U-statistics. This consistence is established in \citep{huskova1993consistency,arcones1992bootstrap}.


\subsection{Sliced SVGD}
\label{APP: Sliced SVGD}
First, we introduce one result from \cite{liu2016stein}, that shows the connections between the SD and KL divergence between the particle's underlying distribution $q$ and target $p$.
\begin{lemma}{\citep{liu2016stein}}
Let $T(\bm{x})=\bm{x}+\epsilon\bm{\phi}(\bm{x})$ and $q_{[T]}(\bm{z})$ be the density of $\bm{z}=T(\bm{x})$ when $\bm{x}\sim q(\bm{x})$. With $\mathcal{A}_p$ the Stein operator defined in Eq.(\ref{eq: Stein Operator}), we have
\begin{equation}
    \nabla_{\epsilon}KL[q_{[T]}||p]|_{\epsilon=0}=-\mathbb{E}_q[\mathcal{A}_p\bm{\phi}(\bm{x})].
    \label{eq: SVGD Magnitude}
\end{equation}
\label{lem: SVGD and SD}
\end{lemma}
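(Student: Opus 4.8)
The plan is to reduce the claim to a change of variables followed by differentiation under the integral sign, following the original argument of \citet{liu2016stein}. First I would observe that for $|\epsilon|$ small enough the map $T(\bm{x})=\bm{x}+\epsilon\bm{\phi}(\bm{x})$ is a diffeomorphism of $\mathbb{R}^D$: its Jacobian $\nabla_{\bm{x}}T(\bm{x})=\bm{I}+\epsilon\nabla_{\bm{x}}\bm{\phi}(\bm{x})$ is invertible in a neighbourhood of $\epsilon=0$ once $\nabla_{\bm{x}}\bm{\phi}$ is bounded, so the push-forward density is $q_{[T]}(\bm{z})=q(T^{-1}(\bm{z}))\,|\det\nabla_{\bm{z}}T^{-1}(\bm{z})|$. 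Substituting $\bm{z}=T(\bm{x})$ in the KL integral and cancelling the Jacobian factors against each other yields the clean identity
\[
KL[q_{[T]}\|p]=\mathbb{E}_{\bm{x}\sim q}[\log q(\bm{x})]-\mathbb{E}_{\bm{x}\sim q}\big[\log|\det(\bm{I}+\epsilon\nabla_{\bm{x}}\bm{\phi}(\bm{x}))|\big]-\mathbb{E}_{\bm{x}\sim q}[\log p(T(\bm{x}))],
\]
in which only the last two terms depend on $\epsilon$.

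Next I would differentiate this expression at $\epsilon=0$, interchanging $\nabla_\epsilon$ with the expectation $\mathbb{E}_{\bm{x}\sim q}$. For the $\log p$ term the chain rule gives $\nabla_\epsilon\log p(T(\bm{x}))=\bm{s}_p(T(\bm{x}))^T\bm{\phi}(\bm{x})$, which at $\epsilon=0$ (where $T(\bm{x})=\bm{x}$) is $\bm{s}_p(\bm{x})^T\bm{\phi}(\bm{x})$. For the log-determinant term I would apply Jacobi's formula $\nabla_\epsilon\log|\det M(\epsilon)|=\Tr(M(\epsilon)^{-1}\nabla_\epsilon M(\epsilon))$ with $M(\epsilon)=\bm{I}+\epsilon\nabla_{\bm{x}}\bm{\phi}(\bm{x})$, so that at $\epsilon=0$ it equals $\Tr(\nabla_{\bm{x}}\bm{\phi}(\bm{x}))=\nabla_{\bm{x}}^T\bm{\phi}(\bm{x})$. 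Collecting the two contributions (the $\log q$ term contributes nothing),
\[
\nabla_\epsilon KL[q_{[T]}\|p]\big|_{\epsilon=0}=-\mathbb{E}_q\big[\bm{s}_p(\bm{x})^T\bm{\phi}(\bm{x})+\nabla_{\bm{x}}^T\bm{\phi}(\bm{x})\big]=-\mathbb{E}_q[\mathcal{A}_p\bm{\phi}(\bm{x})],
\]
where the last equality is the definition of the Stein operator in Eq.(\ref{eq: Stein Operator}).

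The change of variables and the two derivative computations are routine; the one point that genuinely requires care is justifying the interchange of $\nabla_\epsilon$ and $\mathbb{E}_{\bm{x}\sim q}$, i.e.\ exhibiting a $q$-integrable dominating function for $\partial_\epsilon\big[\log p(T(\bm{x}))+\log|\det M(\epsilon)|\big]$ uniformly for $\epsilon$ in a neighbourhood of $0$. This is where the smoothness and boundedness of $\bm{\phi}$ (it lies in the RKHS $\mathcal{H}_D$, with a uniformly bounded kernel) together with the integrability of $\bm{s}_p$ under $q$ (in the spirit of Assumptions 1--2) enter and make the dominating function explicit. I expect this domination step, rather than any algebra, to be the main obstacle, and it is precisely the regularity that is later reused when specialising $\bm{\phi}$ to the sliced form $\bm{\phi}(\bm{x})=[\phi_{g_1}(\bm{x}^T\bm{g}_1),\ldots,\phi_{g_D}(\bm{x}^T\bm{g}_D)]^T$ to derive S-SVGD.
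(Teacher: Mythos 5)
Your proposal is correct: the change of variables gives exactly $KL[q_{[T]}\|p]=\mathbb{E}_q[\log q(\bm{x})]-\mathbb{E}_q[\log|\det(\bm{I}+\epsilon\nabla_{\bm{x}}\bm{\phi}(\bm{x}))|]-\mathbb{E}_q[\log p(T(\bm{x}))]$, and the two derivative computations (chain rule and Jacobi's formula) recover $-\mathbb{E}_q[\bm{s}_p(\bm{x})^T\bm{\phi}(\bm{x})+\nabla_{\bm{x}}^T\bm{\phi}(\bm{x})]$ as required. Note that the paper does not prove this lemma itself but simply cites \citet{liu2016stein}; your argument is essentially the same change-of-variables-plus-differentiation proof given in that reference (which there is phrased via $KL[q_{[T]}\|p]=KL[q\|p_{[T^{-1}]}]$, an equivalent bookkeeping), so there is nothing substantive to add beyond your (reasonable) remark that the interchange of $\nabla_\epsilon$ and $\mathbb{E}_q$ needs a dominating function supplied by the regularity of $\bm{\phi}$ and $\bm{s}_p$.
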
%

To derive the sliced version of SVGD, we follow the similar recipe of \cite{liu2016stein} by first connecting the SSD with KL divergence (like Lemma \ref{lem: SVGD and SD}), and then derive the optimal perturbation directions (like Lemma \ref{lem: SVGD and KSD} in background section \ref{sub:Background SVGD}). To achieve this, we modify the flow mapping to $T_G(\bm{x}):\mathbb{R}^D\rightarrow\mathbb{R}^D$ as $T_G(\bm{x})=\bm{x}+\epsilon\bm{\phi}_G(\bm{x})$. Specifically for $\bm{\phi}_G(\bm{x})$, we adopt $D$ univariate perturbations instead of one multivariate perturbation:
\begin{equation}
    \bm{\phi}_G(\bm{x})=\left[
    \begin{array}{c}
         \phi_{g_1}(\bm{x}^T\bm{g}_1)\\
         \vdots\\
         \phi_{g_D}(\bm{x}^T\bm{g}_D)
    \end{array}
    \right]
    \label{eq: S_SVGD perturbation}
\end{equation}
where $G=[\bm{g}_1,\ldots,\bm{g}_D]\in \mathbb{R}^{D\times D}$ represents slice matrix. For this specific mapping we have the following result analogous to Lemma \ref{lem: SVGD and SD}.
\begin{lemma}
Let $T_G(\bm{x})=\bm{x}+\epsilon\bm{\phi}_G(\bm{x})$ where $\bm{\phi}_G$ is defined as Eq.(\ref{eq: S_SVGD perturbation}). Define $q_{[T_G]}(\bm{z})$ as the density of $\bm{z}=T_G(\bm{x})$ when $\bm{x}\sim q(\bm{x})$, with slice matrix $\bm{G}$, we have
\begin{equation}
    \nabla_{\epsilon}[q_{[T_G]}||p]|_{\epsilon=0}=-\sum_{d=1}^D{\mathbb{E}_q[s_p^d(\bm{x})\phi_{g_d}(\bm{x}^T\bm{g}_d)+g_{d,d}\nabla_{\bm{x}^T\bm{g}_d}\phi_{g_d}(\bm{x}^T\bm{g}_d)]}
    \label{eq: S_SVGD and SSD}
\end{equation}
where $s_p^d(\bm{x})=\nabla_{x_d}\log p(\bm{x})$ and $g_{d,d}$ is the d$^{\text{th}}$ element in $\bm{g}_d$.
\label{lem: S_SVGD and SSD}
\end{lemma}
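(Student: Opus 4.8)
The plan is to reduce the statement to Lemma~\ref{lem: SVGD and SD} by observing that $\bm{\phi}_G$ in Eq.(\ref{eq: S_SVGD perturbation}) is simply a particular choice of vector-valued perturbation $\mathbb{R}^D\to\mathbb{R}^D$, so that lemma applies verbatim (once its regularity hypotheses are verified for this $\bm{\phi}_G$) and gives
\[
\nabla_{\epsilon}KL[q_{[T_G]}||p]\big|_{\epsilon=0}=-\mathbb{E}_q[\mathcal{A}_p\bm{\phi}_G(\bm{x})]=-\mathbb{E}_q\big[\bm{s}_p(\bm{x})^T\bm{\phi}_G(\bm{x})+\nabla_{\bm{x}}^T\bm{\phi}_G(\bm{x})\big].
\]
It then remains only to expand the Stein operator $\mathcal{A}_p$ (Eq.(\ref{eq: Stein Operator})) applied to $\bm{\phi}_G$ coordinate by coordinate.

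For the first term, writing $s_p^d(\bm{x})=\nabla_{x_d}\log p(\bm{x})$ for the $d$-th component of $\bm{s}_p$, we have $\bm{s}_p(\bm{x})^T\bm{\phi}_G(\bm{x})=\sum_{d=1}^D s_p^d(\bm{x})\,\phi_{g_d}(\bm{x}^T\bm{g}_d)$. For the divergence term, the key observation is that the $d$-th coordinate of $\bm{\phi}_G$, namely $\phi_{g_d}(\bm{x}^T\bm{g}_d)$, depends on $\bm{x}$ only through the scalar $\bm{x}^T\bm{g}_d$; hence by the chain rule $\partial_{x_d}\big[\phi_{g_d}(\bm{x}^T\bm{g}_d)\big]=g_{d,d}\,\nabla_{\bm{x}^T\bm{g}_d}\phi_{g_d}(\bm{x}^T\bm{g}_d)$, where $g_{d,d}$ is the $d$-th entry of $\bm{g}_d$ and $\nabla_{\bm{x}^T\bm{g}_d}\phi_{g_d}$ denotes the ordinary derivative of the univariate map $\phi_{g_d}$ evaluated at the projected input. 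Summing over $d$ gives $\nabla_{\bm{x}}^T\bm{\phi}_G(\bm{x})=\sum_{d=1}^D g_{d,d}\,\nabla_{\bm{x}^T\bm{g}_d}\phi_{g_d}(\bm{x}^T\bm{g}_d)$. Substituting both expressions into the displayed identity and using linearity of expectation yields exactly Eq.(\ref{eq: S_SVGD and SSD}).

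The only step demanding care — rather than a genuine obstacle — is justifying the invocation of Lemma~\ref{lem: SVGD and SD} for $\bm{\phi}_G$: one must check that $\bm{\phi}_G$ is smooth and that the relevant integrability holds under $q$, so that $q_{[T_G]}$ is a well-defined density for small $\epsilon$ and the $\epsilon$-derivative may be passed under the integral sign. This is immediate here, since each $\phi_{g_d}$ is assumed smooth (and, in the kernelized instantiation, lies in a bounded one-dimensional RKHS by Assumption~5), while the slicing directions $\bm{g}_d$ are fixed unit vectors, so $\bm{\phi}_G$ inherits the required smoothness and moment conditions. Equivalently, one may simply repeat the change-of-variables argument underlying the proof of Lemma~\ref{lem: SVGD and SD} directly with $T_G$: that derivation never uses any structure of the perturbation beyond its being a smooth vector field, so it transfers unchanged, and the coordinatewise chain-rule computation above then completes the proof. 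This sets up the subsequent identification (as in Lemma~\ref{lem: SVGD and KSD}) of the optimal $\phi_{g_d}$ inside each one-dimensional RKHS via Theorem~\ref{thm: SKSD Analytic form}.
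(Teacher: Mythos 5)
Your proposal is correct and follows the paper's own route: the paper proves this lemma precisely by substituting the structured perturbation $\bm{\phi}_G$ of Eq.(\ref{eq: S_SVGD perturbation}) into the KL-derivative identity of Lemma~\ref{lem: SVGD and SD} and expanding the Stein operator coordinatewise, which is exactly your argument (you merely spell out the chain-rule step $\partial_{x_d}\phi_{g_d}(\bm{x}^T\bm{g}_d)=g_{d,d}\nabla_{\bm{x}^T\bm{g}_d}\phi_{g_d}(\bm{x}^T\bm{g}_d)$ and the regularity check that the paper leaves implicit).
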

\begin{proof}
This can be easily verified by substituting Eq.(\ref{eq: S_SVGD perturbation}) into Eq.(\ref{eq: SVGD Magnitude}).
\end{proof}

{Eq.(\ref{eq: S_SVGD and SSD}) is similar to maxSSD (Eq.(\ref{eq:maxSSD})) where the optimal test directions and test functions are replaced with matrix $\bm{G}$ and perturbation $\phi_{g_d}(\bm{x})$. $O_r$ takes the values one-hot vectors. The main difference between this decrease magnitude and maxSSD is that we do not assume $\bm{G}$ is optimal. Next, we show how to obtain an analytic descent directions that maximize the decrease magnitude.}

By restricting each perturbation $\phi_{g_d}\in\mathcal{H}_{rg_d}$ where $\mathcal{H}_{rg_d}$ is an RKHS equipped with kernel, we have the following result.
\begin{lemma}
Assume the conditions in lemma \ref{lem: S_SVGD and SSD}. If for each perturbation $\phi_{g_d}\in \mathcal{H}_{rg_d}$ where $\mathcal{H}_{rg_d}$ is an RKHS equipped with kernel $k_{rg_d}$ and $||\phi_{g_d}||_{\mathcal{H}_{rg_d}}\leq D_{rg_d}(q,p)$, then the steepest descent direction for d$^{\text{th}}$ perturbation is 
\begin{equation}
    \phi^*_{g_d}(\cdot)=\mathbb{E}_{q}[\xi_{p,r_d,g_d}(\bm{x},\cdot)],
    \label{eq:S_SVGD update direction}
\end{equation}
and 
\begin{equation}
    \nabla_{\epsilon}KL[q_{[T_G]}||p]|_{\epsilon=0}=-\sum_{d=1}^D{D_{dg_d}^2(q,p)},
    \label{eq: S_SVGD decrease magnitude}
\end{equation}
where $D_{dg_d}^2(q,p)=\mathbb{E}_q[h_{p,r_d,g_d}(\bm{x},\bm{x}')]$ with one-hot vector $\bm{r}_d$.
\label{lem:S_SVGD and SKSD}
\end{lemma}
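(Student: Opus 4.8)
The plan is to mimic the argument that connects SVGD with KSD in \citet{liu2016stein} (Lemma \ref{lem: SVGD and KSD}), but now applied \emph{coordinate by coordinate}, exploiting the product structure of the perturbation $\bm{\phi}_G$ in Eq.(\ref{eq: S_SVGD perturbation}). First I would start from the decrease-magnitude identity in Lemma \ref{lem: S_SVGD and SSD}, namely
\[
\nabla_{\epsilon}KL[q_{[T_G]}||p]|_{\epsilon=0}=-\sum_{d=1}^D{\mathbb{E}_q[s_p^d(\bm{x})\phi_{g_d}(\bm{x}^T\bm{g}_d)+g_{d,d}\nabla_{\bm{x}^T\bm{g}_d}\phi_{g_d}(\bm{x}^T\bm{g}_d)]}.
\]
Since each summand depends only on its own perturbation $\phi_{g_d}$, the maximization over the product ball $\prod_d \{\|\phi_{g_d}\|_{\mathcal{H}_{rg_d}}\le D_{dg_d}(q,p)\}$ decouples into $D$ independent one-dimensional maximization problems, one for each $d$.

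For a fixed $d$, the $d$-th summand is exactly the linear functional appearing inside $D^2_{r_d g_d}(q,p)$ in Theorem \ref{thm: SKSD Analytic form} with $\bm{r}=\bm{r}_d$ the $d$-th one-hot vector (so that $s_p^{r_d}(\bm{x})=s_p^d(\bm{x})$ and $\bm{r}_d^T\bm{g}_d=g_{d,d}$). By the reproducing property and the Bochner-integrability argument already established in the proof of Theorem \ref{thm: SKSD Analytic form}, this functional equals $\langle \phi_{g_d},\, \mathbb{E}_q[\xi_{p,r_d,g_d}(\bm{x},\cdot)]\rangle_{\mathcal{H}_{rg_d}}$. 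By Cauchy–Schwarz, over the ball of radius $D_{dg_d}(q,p)=\|\mathbb{E}_q[\xi_{p,r_d,g_d}(\bm{x},\cdot)]\|_{\mathcal{H}_{rg_d}}$ this is maximized by the aligned choice $\phi^*_{g_d}(\cdot)=\mathbb{E}_q[\xi_{p,r_d,g_d}(\bm{x},\cdot)]$, giving value $\|\mathbb{E}_q[\xi_{p,r_d,g_d}(\bm{x},\cdot)]\|^2_{\mathcal{H}_{rg_d}}=D^2_{dg_d}(q,p)=\mathbb{E}_q[h_{p,r_d,g_d}(\bm{x},\bm{x}')]$, again by Theorem \ref{thm: SKSD Analytic form}. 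Summing over $d$ yields Eq.(\ref{eq: S_SVGD decrease magnitude}) and Eq.(\ref{eq:S_SVGD update direction}). I would also note that the radius condition $\|\phi_{g_d}\|\le D_{rg_d}(q,p)$ makes the bound tight rather than merely an inequality, exactly parallel to the normalization in Lemma \ref{lem: SVGD and KSD}.

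The only genuinely delicate point — and the step I expect to be the main obstacle — is justifying the interchange of the supremum with the sum and, underlying that, confirming that $\xi_{p,r_d,g_d}(\bm{x},\cdot)$ is Bochner integrable so that $\mathbb{E}_q[\xi_{p,r_d,g_d}(\bm{x},\cdot)]$ is a well-defined element of $\mathcal{H}_{rg_d}$ on which Cauchy–Schwarz applies; this is where the hypothesis $\mathbb{E}_q[h_{p,r_d,g_d}(\bm{x},\bm{x})]<\infty$ (inherited from Theorem \ref{thm: SKSD Analytic form}) and the boundedness/universality of $k_{rg}$ in Assumption 5 are used. Everything else is a routine transcription of the RKHS optimality argument from \citet{liu2016stein}, so I would keep the write-up short and defer to the proof of Theorem \ref{thm: SKSD Analytic form} for the measure-theoretic details.
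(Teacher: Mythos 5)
Your proposal is correct and follows essentially the same route as the paper's proof: decouple the sum from Lemma \ref{lem: S_SVGD and SSD} into $D$ independent problems, use the reproducing property together with the Bochner integrability of $\xi_{p,r_d,g_d}$ (established in Theorem \ref{thm: SKSD Analytic form}) to write each summand as $\langle \mathbb{E}_q[\xi_{p,r_d,g_d}(\bm{x},\cdot)],\phi_{g_d}\rangle_{\mathcal{H}_{rg_d}}$, and maximize over the norm ball by alignment. The only cosmetic difference is that you invoke Cauchy--Schwarz explicitly where the paper leaves that optimality step implicit.
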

\begin{proof}
We show this result using the reproducing property of RKHS $\mathcal{H}_{rg_d}$.
The supremum of Eq.(\ref{eq: S_SVGD and SSD}) can be re-written as 
\begin{equation}
    \begin{split}
        &\sup_{\bm{\phi}_G}{\sum_{d=1}^D{\mathbb{E}_q[s^d_p(\bm{x})\phi_{g_d}(\bm{x}^T\bm{g}_d)+\bm{r}_d^T\bm{g}_d\nabla_{\bm{x}^T\bm{g}_d}\phi_{g_d}(\bm{x}^T\bm{g}_d)]}}\\
        =&\sum_{d=1}^D{\sup_{\phi_{g_d}}{{\mathbb{E}_q[s^d_p(\bm{x})\phi_{g_d}(\bm{x}^T\bm{g}_d)+\bm{r}_d^T\bm{g}_d\nabla_{\bm{x}^T\bm{g}_d}\phi_{g_d}(\bm{x}^T\bm{g}_d)]}}}\\
        =&\sum_{d=1}^D{\sup_{\substack{\phi_{g_d}\in\mathcal{H}_{rg_d}\\ ||\phi_{g_d}||_{\mathcal{H}_{rg_d}}\leq D_{rg_d}(q,p)}}{\mathbb{E}_q[\langle s_p^d(\bm{x})k_{rg_d}(\bm{x}^T\bm{g}_d,\cdot)+\bm{r}_d^T\bm{g}_d\nabla_{\bm{x}^T\bm{g}_d}k_{rg_d}(\bm{x}^T\bm{g}_d,\cdot),\phi_{g_d} \rangle_{\mathcal{H}_{rg_d}}]}}\\
        =&\sum_{d=1}^D{\sup_{\substack{\phi_{g_d}\in\mathcal{H}_{rg_d}\\ ||\phi_{g_d}||_{\mathcal{H}_{rg_d}}\leq D_{rg_d}(q,p)}}{\langle \mathbb{E}_q[\xi_{p,r_d,g_d}(\bm{x},\cdot)],\phi_{g_d} \rangle_{\mathcal{H}_{rg_d}}}}\\
        =&\sum_{d=1}^D{\mathbb{E}_q[h_{p,r_d,g_d}(\bm{x},\bm{x}')]}=\sum_{d=1}^D{D_{dg_d}^2(q,p)},
    \end{split}
\end{equation}
where the third equality is because of the Bochner integrability of $\xi_{p,r_d,g_d}(\bm{x},\cdot)$ shown in Theorem \ref{thm: SKSD Analytic form}. And the optimal perturbation for d$^{\text{th}}$ dimension is 
\begin{equation}
    \phi^*_{g_d}(\cdot)=\mathbb{E}_{q}[\xi_{p,r_d,g_d}(\bm{x},\cdot)].
\end{equation}
\end{proof}
Note that in Lemmas \ref{lem: S_SVGD and SSD} and \ref{lem:S_SVGD and SKSD}, we assume an arbitrary projection matrix $\bm{G}$. To find the steepest descent direction, one can maximize Eq.(\ref{eq: S_SVGD decrease magnitude}) w.r.t.~$\bm{G}$. In this case this decrease magnitude Eq.(\ref{eq: S_SVGD decrease magnitude}) is identical to maxSKSD Eq.(\ref{eq: maxSKSD}) with orthogonal basis $O_r$ and optimal test directions $\bm{G}$.

{The name \emph{sliced} SVGD comes from that for each perturbation $\phi_{g_d}(\cdot)$, the kernel $k_{rg_d}$ and the repulsive force $\bm{r}_d^T\bm{g}\nabla_{\bm{x}^T\bm{g}_d}k_{rg_d}(\bm{x}^T\bm{g}_d,\cdot)$ are evaluated on $\bm{x}^T\bm{g}_d$ instead of $\bm{x}$ in SVGD.} Although S-SVGD only uses one-dimensional projection of $\bm{x}$, it is still a valid inference method as long as the optimality of $\bm{G}$ is ensured, because maxSKSD is a valid discrepancy measure.\footnote{Note that maximizing Eq.(\ref{eq: S_SVGD decrease magnitude}) w.r.t. sliced matrix $\bm{G}$ is necessary, otherwise Eq.(\ref{eq: S_SVGD decrease magnitude}) is not a valid discrepancy measure, and a zero value does not imply $p=q$. In such case the resulting particle inference method is not asymptotically exact.} The S-SVGD method is summarised in Algorithm \ref{alg: S_SVGD}. In practice this algorithm may violate the optimality condition of $\bm{G}$ (due to estimation error using finite samples and local optimum found by gradient-based optimization), which is a common issue in many adversarial training procedure.
\begin{algorithm}[H]
\SetKwInOut{Input}{Input}
\SetKwInOut{Output}{Output}

\SetAlgoLined
\Input{Initial samples $\{\bm{x}_i\}_{i=1}^N$, target score function $\bm{s}_p(\bm{x})$, Orthogonal basis $O_r$, initial slice matrix $\bm{G}$, kernel function $k_{rg}$, iteration number $L$ and step size $\epsilon$.}
\Output{Set of particles $\{\bm{x}_i\}_{i=1}^N$ that approximates $p$}
\For{l $\leq$ L}{
Update each particles $\bm{x}_i^{l+1}=\bm{x}_i^l+\epsilon \bm{\phi}^*_{G}(\bm{x}_i^l)$ where $\bm{\phi}^*_G(\bm{x}_i^l)$ is computed using Eq.(\ref{eq:S_SVGD update direction})\;

Find the optimal slice matrix $\bm{G}$ by maximizing Eq.(\ref{eq: S_SVGD decrease magnitude}) using $\{\bm{x}^{l+1}_i\}_{i=1}^N$
}
 \caption{S-SVGD for variational inference}
 \label{alg: S_SVGD}
\end{algorithm}

\section{Limitations of maxSKSD-g}
\label{APP: Pathology}
\begin{figure}
    \centering
    \includegraphics[scale=0.2]{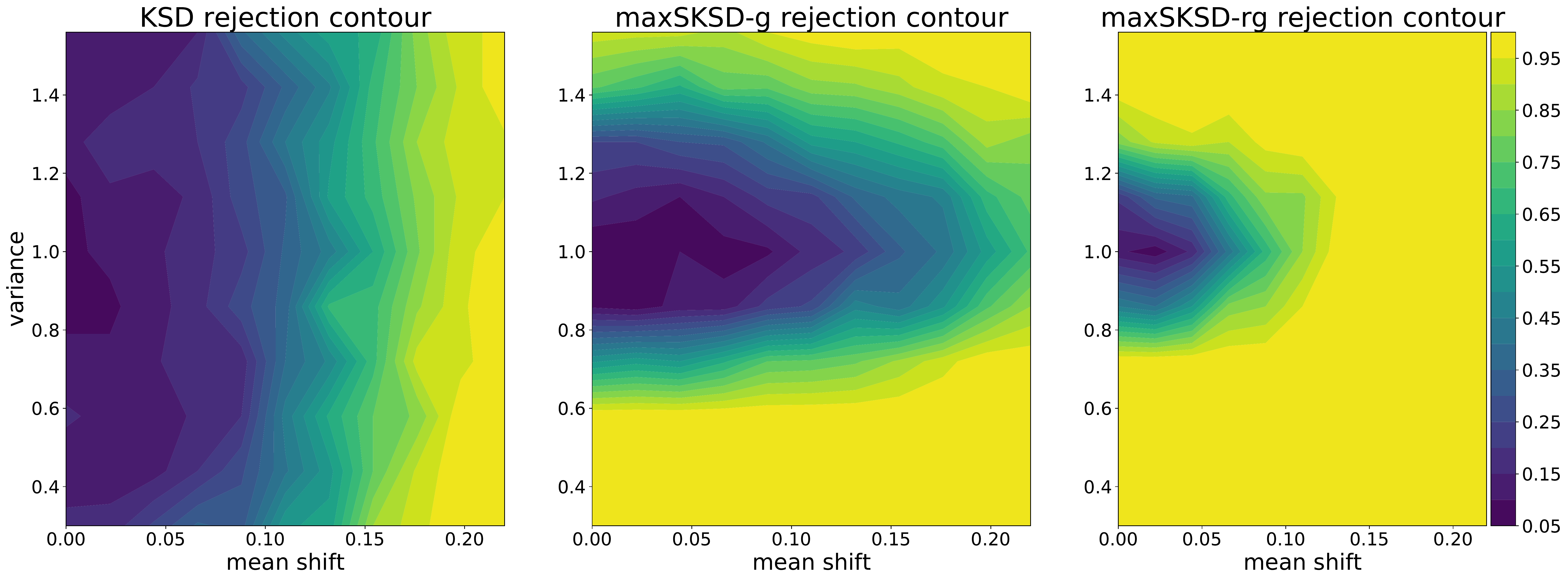}
    \caption{This rejection rate contour demonstrates the sensitivity of each GOF method to the change of mean/variance of two Gaussians under SDS. (\textbf{Left}) The rejection sensitivity of KSD. (\textbf{Middle}) Sensitivity of \textit{maxSKSD-g}. (\textbf{Right}) Sensitivity of \textit{maxSKSD-rg}.}
    \label{fig:sensitivity}
\end{figure}
In this section, we discuss the potential limitations of \textit{maxSKSD-g} and motivate the use of \textit{maxSKSD-rg} to address such issue. We begin this journey by proposing a sensitivity test on the rejection rate between two Gaussians with different mean and variances. Then, we analyze the issue of \textit{maxSKSD-g}, and why \textit{maxSKSD-rg} can potentially address such pathology. In the end, we empirically show \textit{maxSKSD-rg} indeed solves the problem under the same sensitivity test, and achieves the best performance. 
\subsection{Sensitivity test of two Gaussians}
Assume we have two fully factorized $50$ dimensional Gaussians $p$, $q$, we propose to test the sensitivity of the GOF test method to the change of mean and variance of $q$. In the following experiment, we only change the mean and variance of the $1^{\text{st}}$ dimension and keep the remaining $49$ dimensions to be the same as $p$. We call this scheme \textit{single dimension shift} (SDS). We include KSD as the baseline for comparison.

From the left and middle panel of figure.\ref{fig:sensitivity}, we notice that KSD is sensitive to the mean shift but less sensitive to the variance change. Specifically, KSD successfully detects the difference between $q$ and $p$ when the mean shift is beyond $0.18$ but fails if the only changed term is the variance. On the other hand, \textit{maxSKSD-g} is capable of detecting small variance changes but less sensitive to the mean change compared to KSD. This is consistent with the conclusion in section \ref{subsub: benchmark GOF} that \textit{maxSKSD-g} obtains nearly optimal rejection rate in Gaussian diffusion test where KSD completely fails.
However, this still shows the potential limitations of \textit{maxSKSD-g} that it may perform worse than the baselines under certain circumstances.
\subsection{Analysis of the pathology}
In this section, we give a detailed analysis on the potential reasons behind such limitation.
From the setup of SDS mean shift, we know the mean of $q$, $\bm{\mu}_q$, differs from $\bm{\mu}_p$ only in the $1^\text{st}$ dimension. We write down $q_d(x_d)$ to be the marginal distribution for dimension $d$, and $p_d$ accordingly. \textit{maxSKSD-g} (Eq.\ref{eq: maxSKSD}) requires the orthogonal basis $O_r$ and corresponding optimal slice matrix $\bm{G}$. We set $O_r$ to be the standard basis of $\mathbb{R}^D$, i.e. a set of one-hot vectors. Due to the fully factorized property of $p$, $q$ and SDS setup, the optimal slice matrix $\bm{G}$ is an identity matrix. Therefore, we can re-write the \textit{maxSKSD-g} into a summation over a set of KSD between their marginals $q_d$ and $p_d$. We first inspect the optimal test function in Eq.(\ref{eq: SKSD test function}) for dimension d. Assume we use the same kernel $k$ for all $\bm{r}$, $\bm{g}$, i.e. $k=k_{rg}$, we have
\begin{equation}
    \begin{split}
        \xi_{p,r,g}(\bm{x},\cdot)&=s_p^r(\bm{x})k(\bm{x}^T\bm{g}_r,\cdot)+\bm{r}^T\bm{g}_r\nabla_{\bm{x}^T\bm{g}_r}k(\bm{x}^T\bm{g}_r,\cdot)\\
        &=s_{p_d}(x_d)k(x_d,\cdot)+\nabla_{x_d}k(x_d,\cdot)
    \end{split}
\end{equation}
Substituting it into Eq.(\ref{eq: D_rg}), we obtain
\begin{equation}
    \begin{split}
        D_{rg}(q,p)&=||\mathbb{E}_{q}[\xi_{p,r,g}(\bm{x})]||^2_{\mathcal{H}_{rg}}=||\mathbb{E}_{q_d}[\xi_{p,r,g}(x_d)]||^2_{\mathcal{H}_{rg}}\\
        &=\mathbb{E}_{q_d(x_d)q_d(x'_d)}[u_p(x_d,x'_d)]\\
        &=D^2(q_d,p_d)
    \end{split}
\end{equation}
This is exactly the KSD between the marginal $q_d$ and $p_d$. Therefore, the \textit{maxSKSD-g} is written as 
\begin{equation}
    \begin{split}
        SK_{max}(q,p)&=\sum_{\bm{r}\in O_r}{\sup_{\bm{g}_r}{D_{rg}^2(q,p)}}\\
        &=\sum_{d=1}^D{D_{rg}(q,p)}\\
        &=\sum_{d=1}^D{D^2(q_d,p_d)}
    \end{split}
\end{equation}
This is the summation of the KSD between their marginals across all dimensions.

As the mean only differs in the first dimension, the dominant value for \textit{maxSKSD-g} is the KSD between the first marginal, $q_1$ and $p_1$. However, in practice, variances exists in other KSD term inside the summation of \textit{maxSKSD-g} and they provides nothing but noise. Therefore the overall variance of \textit{maxSKSD-g} increases with the dimensions. For a high dimensional problem, the important signal can be easily buried due to the increasing variance. 
\begin{figure}
    \centering
    \includegraphics[scale=0.25]{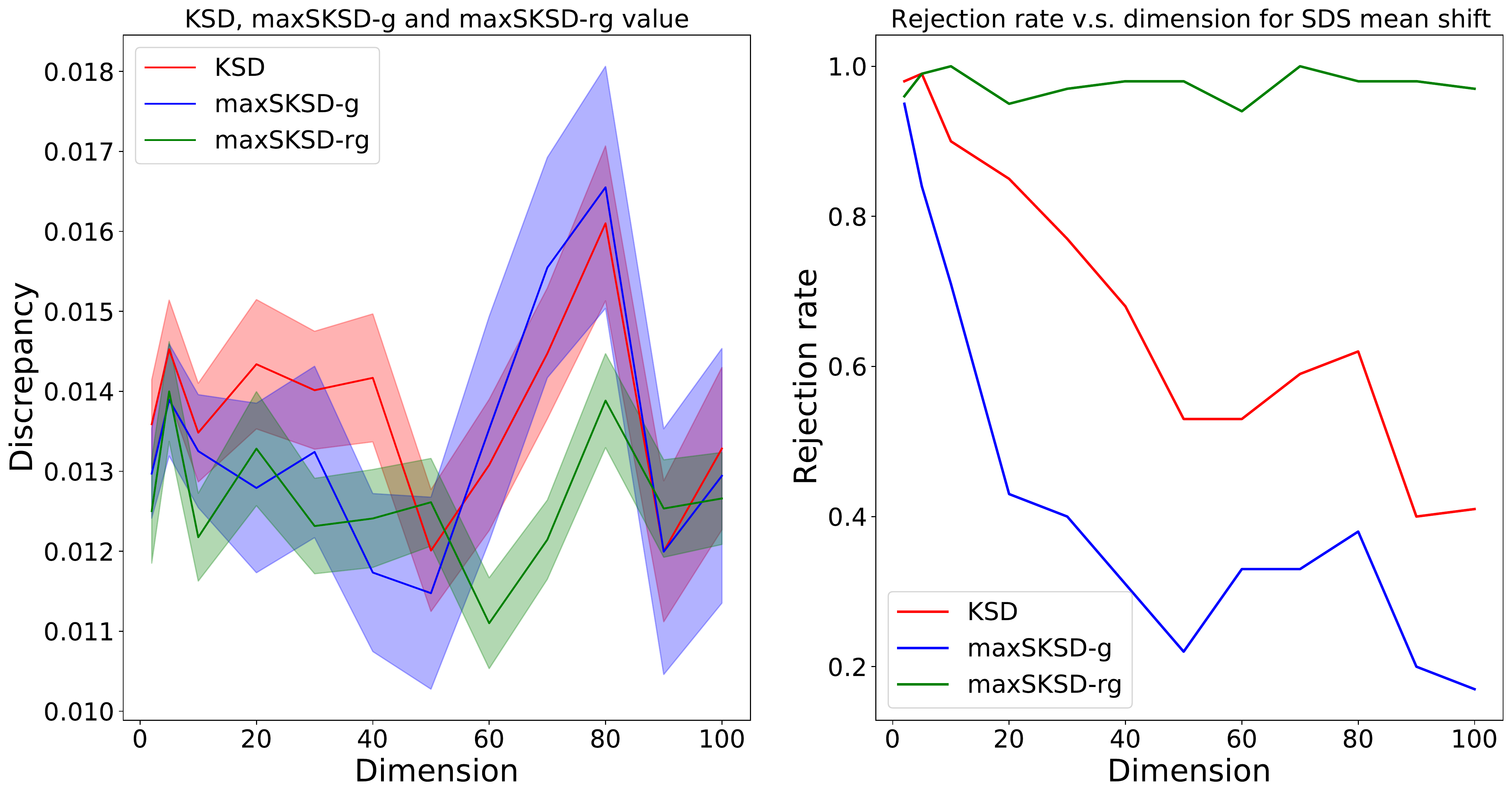}
    \caption{(Left) The discrepancy value of KSD, maxSKSD-g and maxSKSD-rg with standard error for SDS mean shift. Each plot is obtained by averaging 100 runs. (Right) The rejection rate for SDS mean shift with different dimensions.}
    \label{fig:variance dim}
\end{figure}
Figure \ref{fig:variance dim} (Left) shows the discrepancy value against dimensions. We can easily observe that the variance of \textit{maxSKSD-g} increases with the dimensions. This is consistent with the above analysis. The variance of KSD also increases but is less obvious compared to \textit{maxSKSD-g}. Figure \ref{fig:variance dim} (Right) shows that \textit{maxSKSD-g} also suffers from curse-of-dimensionality under the SDS mean shift due to such increasing variance. 
\subsection{maxSKSD-rg}
Based on the above analysis, the underlying reason behind \textit{maxSKSD-g} pathology is the noise provided by the irrelevant dimensions. This is the fundamental problem caused by choosing a orthogonal basis $O_r$ for \textit{maxSKSD}. In fact, for any orthogonal basis $O_r$, we can always create a such pathology where \textit{maxSKSD-g} suffers from the extra noise provided by irrelevant basis. 
One potential way to address such problem is to introduce a metric to avoid such irrelevant basis. In here, we choose \textit{maxSKSD} itself as the metric. Thus, instead of selecting a orthogonal basis $O_r$, we try to optimize the projections $O_r\in\mathbb{R}^{D\times D}$ (not necessarily orthogonal). We write \textit{maxSKSD-rg} as
\begin{equation}
    SK_{max-rg}(q,p)=\sup_{O_r,\bm{G}}{\sum_{\bm{r}\in O_r}{D^2_{rg_r}(q,p)}}
    \label{eq: maxSKSD rg}
\end{equation}
We test the performance of \textit{maxSKSD-rg} in the same SDS mean shift setup. Due to the fully factorized property of two Gaussians, the optimal $O_r$ consists of repeated one-hot vector $\bm{r}=[1,0,0,\ldots]$. The optimal slice matrix $\bm{G}$ share the same structure as $O_r$. Thus, \textit{maxSKSD-rg} (Eq.\ref{eq: maxSKSD rg}) becomes 
\begin{equation}
    SK_{max-rg}(q,p)=D\times D^2(q_1,p_1)
\end{equation}
which is $D$ times KSD value between the marginals of the $1^{\text{st}}$ dimensions. We notice that $D$ is just a constant, thus can be removed without changing its performance. We call this effective \textit(maxSKSD-rg). As it only considers the marginal of the most important dimension, its variance does not increase with dimension of the problem. From the \textit{maxSKSD-rg} value in figure.\ref{fig:variance dim} (left), we observe it has the lowest variance and does not change across the dimensions as expected. In terms of the rejection rate (the right of figure.\ref{fig:variance dim}), \textit{maxSKSD-rg} does not suffers from the curse-of-dimensionality, and consistently achieves nearly optimal rejection rate.

We also conduct the sensitivity test for \textit{maxSKSD-rg} (the right in figure.\ref{fig:sensitivity}). We observe \textit{maxSKSD-rg} not only addresses the mean shift pathology of \textit{maxSKSD-g}, it is also more sensitive compared to KSD.
Additionally, it is even more sensitive to the variance change compared to \textit{maxSKSD-g}. The reason is the same as the mean shift case.
\subsection{Is maxSKSD-rg always better?}
\wgr{Based on the above analysis, \textit{maxSKSD-rg} is superior compared to \textit{maxSKSD-g} theoretically. Indeed, it is trivial that \textit{maxSKSD-g} is a lower bound for \textit{maxSKSD-rg} which indicates a weaker discriminative power. However, this theoretical advantage relies on the assumption of the optimality of $\bm{r}$, which is also the key gap between theory and application. 

\textit{maxSKSD-rg} often gives superior performance compared to \textit{maxSKSD-g} in terms of GOF test. This is because GOF test only focus on the difference between two distributions, i.e. it focuses on finding a direction that gives higher discrepancy value. But this is not the case for model learning especially when they are used as training objectives. 
Instead, model learning focuses on the fact that the model approximates the target distribution in every directions. In theory, \textit{maxSKSD-rg} can still give good performance as at each training iteration, this objective tries to minimize the largest difference between two distributions. However, this is not true in practice. We suspect the reasons are two fold: (1) optimal $\bm{r}$ can not be guaranteed; (2) if the true optimal direction is drastically changing between iterations, gradient-based optimization may takes long time to move away from the current $\bm{r}$.
Therefore, instead of relying on one slicing direction that might be sub-optimal in practice, in \textit{maxSKSD-g} slicing along the directions in the orthogonal basis $O_r$ provides better coverage of the difference between $p$ and $q$.
Figure \ref{fig:ICA g rg comparison} shows the comparison between \textit{maxSKSD-rg} and \textit{maxSKSD-g} for training 200 dimensional ICA. }
\begin{figure}
    \centering
    \includegraphics[scale=0.15]{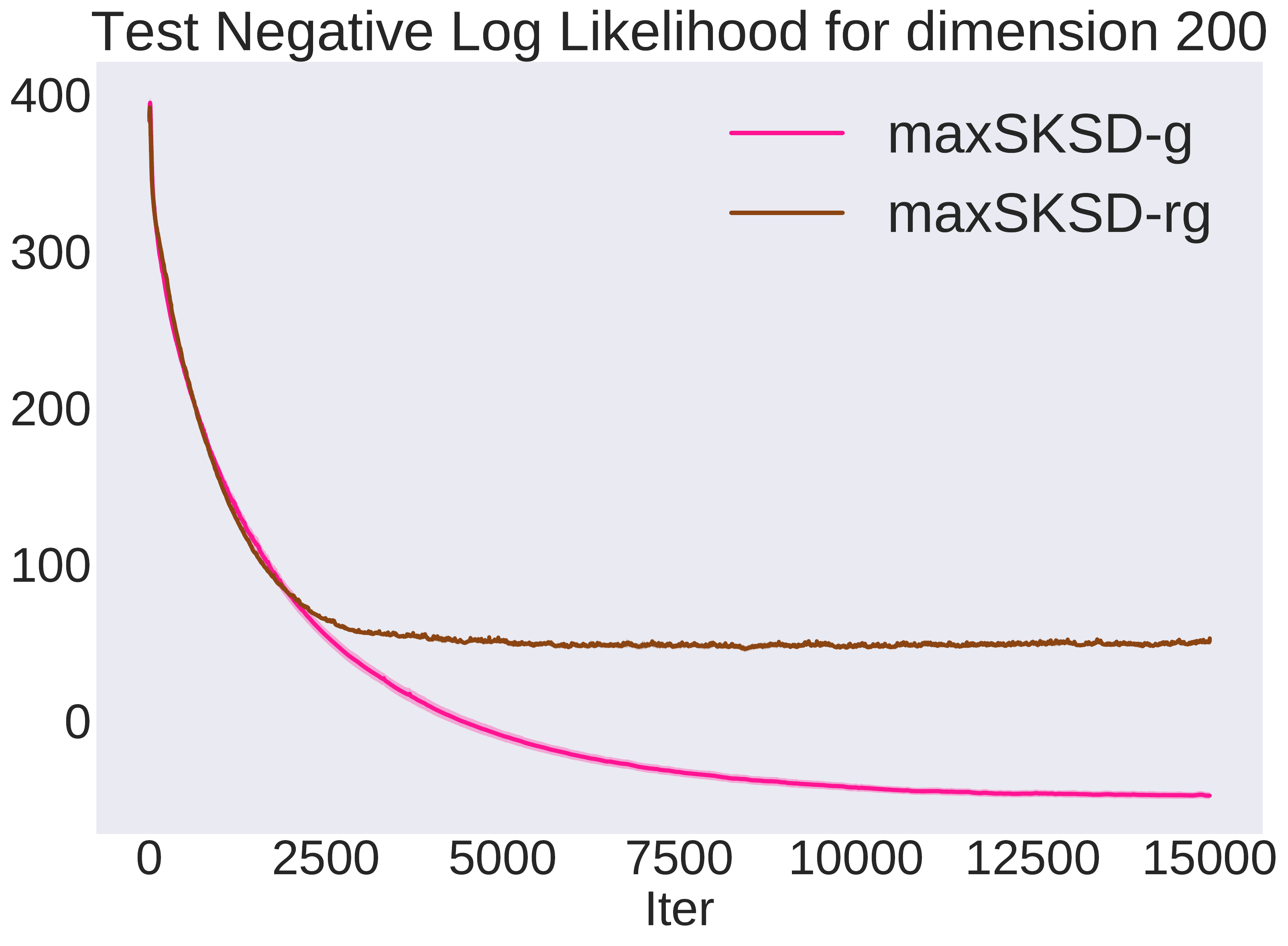}
    \caption{The ICA training curve of test negative LL with different training objectives. y-axis indicates the negative test log-likelihood.}
    \label{fig:ICA g rg comparison}
\end{figure}
\section{Computational and memory cost}
\label{APP: computational memory cost}
\wgr{In this section, we give a brief analysis on the computational and memory cost for performing GOF test and S-SVGD. }
\subsection{Memory cost}
\wgr{First, let's consider the GOT test. If \textit{maxSKSD-g} is used, we only need to store the slicing matrix $\bm{G}$, which cost $O(D^2)$. However, if \textit{maxSKSD-rg} is used, the actual memory cost can be much reduced. For GOF test, we only care about whether it can detect the differences between distributions. Thus, we do not need $D$ number of basis $\bm{r}$, instead, in theory, we only need $1$ basis  which maximizes the discriminative power. In practice, we may need $m$ basis where $1<m\ll D$. Thus, the memory cost will be $O(mD)+O(mD)=O(mD)$.

In the scope of this work, the memory cost for S-SVGD is similar to GOF test with \textit{maxSKSD-g}, where slicing matrix $\bm{G}\in\mathbb{R}^{D\times D}$ need to be stored. However, it remains a question that whether \textit{maxSKSD-rg} can be used to derive a corresponding variant of S-SVGD. If that's the case, one only need to select 'important' basis $\bm{r}$ to drive the particles towards the target distributions. Thus, the memory cost is reduced to $O(mD)$. For the BNN experiment in appendix \ref{App: Bayesian NN}, the actual memory cost for SVGD with Boston housing data set is $1003$MB and $1203$MB for S-SVGD with NVIDIA Quadro P6000.
}
\subsection{Time Complexity}
\wgr{The computational cost of computing the kernel $K(\bm{x},\bm{x}')$ with each pair of $\bm{x}$,$\bm{x}'$ is $O(D)$. Thus, evaluating KSD requires the $O(N^2D)$ where $N$ is the number of samples used for Monte Carlo estimations. Similarly, the cost of each kernel evaluation $K(\bm{x}^T\bm{g},\bm{x}'^T\bm{g})$ is $O(D)$, but \textit{maxSKSD-g} requires such evaluation for each $\bm{g}$. Thus, the total cost for \textit{maxSKSD-g} is $O(N^2D^2)$. On the other hand, \textit{maxSKSD-rg} can reduce such cost to $O(N^2Dm)$ where $m$ is the number of 'important' basis selected by maximizing Eq.\ref{eq: maxSKSD}.

For SVGD, the kernel is also evaluated on each pair of $\bm{x}$,$\bm{x}'$. Thus, the computational cost is $O(N^2D)$ where $N$ is the number of particles. Similarly, the computational cost of S-SVGD is $O(N^2D^2)$. In practice, first we compare their time consumption on the above BNN experiment. The S-SVGD uses about 0.073s per epoch whereas SVGD only uses 0.032s per epoch, which is about 2.5 times larger than SVGD with P6000. For amortized SVGD experiment with MNIST data set, the time consumption of amortized SVGD is 0.112s per iteration, and S-SVGD is 0.122s per iteration, which is almost the same due to the smaller latent dimension (32). 

We argue that there is no free lunch and every method has its compromises. In our case, the significant advantages of the proposed methods compared to KSD and SVGD come with the cost of higher computational and memory consumption. Even with this extra cost, the proposed framework is still a significant improvement for KSD and SVGD, as they fail even at very low dimensions (around 30), where the cost of our method is not much higher. Especially for GOF test, even the dimension is huge, one can always adopt \textit{maxSKSD-rg} to reduce the memory and computational cost by selecting important basis.}
\section{GOF test}
\subsection{Setup for High dimensional benchmark GOF test}
\label{App: High dimensional benchmark GOF test}
For each GOF test, we draw 1000 samples from alternative hypothesis $q$. These samples are directly used for GOF test methods that do not require any training procedure, like KSD, MMD, RFSD and FSSD Rand. However, for methods that require training like maxSKSD and FSSD Opt, we split the entire samples into 200 training data and 800 GOF test data as \citep{jitkrittum2017linear,huggins2018random}. For maxSKSD, we initially draw the slice matrix $\bm{G}$ from a normal distribution before normalizing the magnitude of each vectors in $\bm{G}$ to $1$ and use Adam with learning rate $0.001$ to update it (maximizing Eq.(\ref{eq: maxSKSD})). For FSSD-Opt, we use the default settings in the original publication. During the GOF test, only the test data are used for FSSD-Opt and maxSKSD. We set the significant level $\alpha=0.05$ and the dimension of the distribution grows from 2 to 100. 
We use 1000 bootstrap samples for all tests, and 1000 trials for Gaussian Null test, 500 trials for Gaussian Laplace test, 250 trials for Gaussian Multivariatet-t test and 500 trials for Gaussian diffusion test. 
\subsection{Setup for RBM GOF test}
\label{App: Setup RBM GOF test}
For the RBM, we use $50$ dimension for observable variable and $40$ dimension for hidden variable. We run 100 trials with $1000$ bootstrap samples for each method. $1000$ test samples are used for methods like KSD, MMD, RFSD and FFSD Rand. For maxSKSD\_g, maxSKSD\_rg and FFSD Opt that require training, we use $800$ samples for test. Parallel block Gibbs sampler with $2000$ burn-in is used to draw samples from $q$. To avoid the over-fitting to small training samples, we use 200 samples to update the slice matrix $\bm{G}$ (or $\bm{G}$ and $\bm{r}$ for maxSKSD\_rg) in each Gibbs step during the burning. However, it should be noted that these intermediate samples from the burn-in should not be used as the test samples for other methods because they are not from $q$. This setup is slightly different from the most general GOF test where only test samples and target density are given. However, it is still useful for some applications such as detecting convergence/selecting hyper-parameter of MCMC sampler (appendix \ref{App: Sampler Convergence}). Finding relatively good directions for maxSKSD with fewer training samples is a good direction for future work. 

\subsection{Selecting hyperparameter of a biased sampler}
We use the proposed methods to select the step size of a biased sampler. Particularly, we consider using SGHMC here which is a biased sampler without Metropolis-Hasting step. The bias is mainly caused by the discretization error, namely, the step size. For smaller step size, the bias is small but the mixing speed is slow. Larger step size results in higher bias with fast mixing. 

Selecting the step size is essentially a GOF test problem, where alternative hypothesis is the invariant underlying distribution of SGHMC, and the bias is quantified by the discrepancy value. The best step size is the one corresponding to the lowest discrepancy value. We compare our proposed maxSKSD based methods with KSD. The target distribution is a 15 dimensional correlated Gaussian distribution with zero mean and randomly generated co-variance matrix. We also include a strong baseline using KL divergence where the $q$ is a Gaussian distribution, with parameters estimated by samples from SGHMC. 

\paragraph{Setup} We run 100 parallel SGHMC chains with 2000 burn-in period. During each step in burn-in, we update the sliced matrix $\bm{G}$ (and $\bm{r}$) using such 100 samples. After burn-in, we fix the $\bm{G}$ and $\bm{r}$ and continue to run SGHMC with thinning $5$ until $1500$ test samples are collected. We run this experiment using 3 different seeds. For KSD and maxSKSD discrepancy value, we use U-statistics due to its unbiasedness. 
\begin{figure}
    \subfloat[Random seed 0]{\includegraphics[scale=0.145]{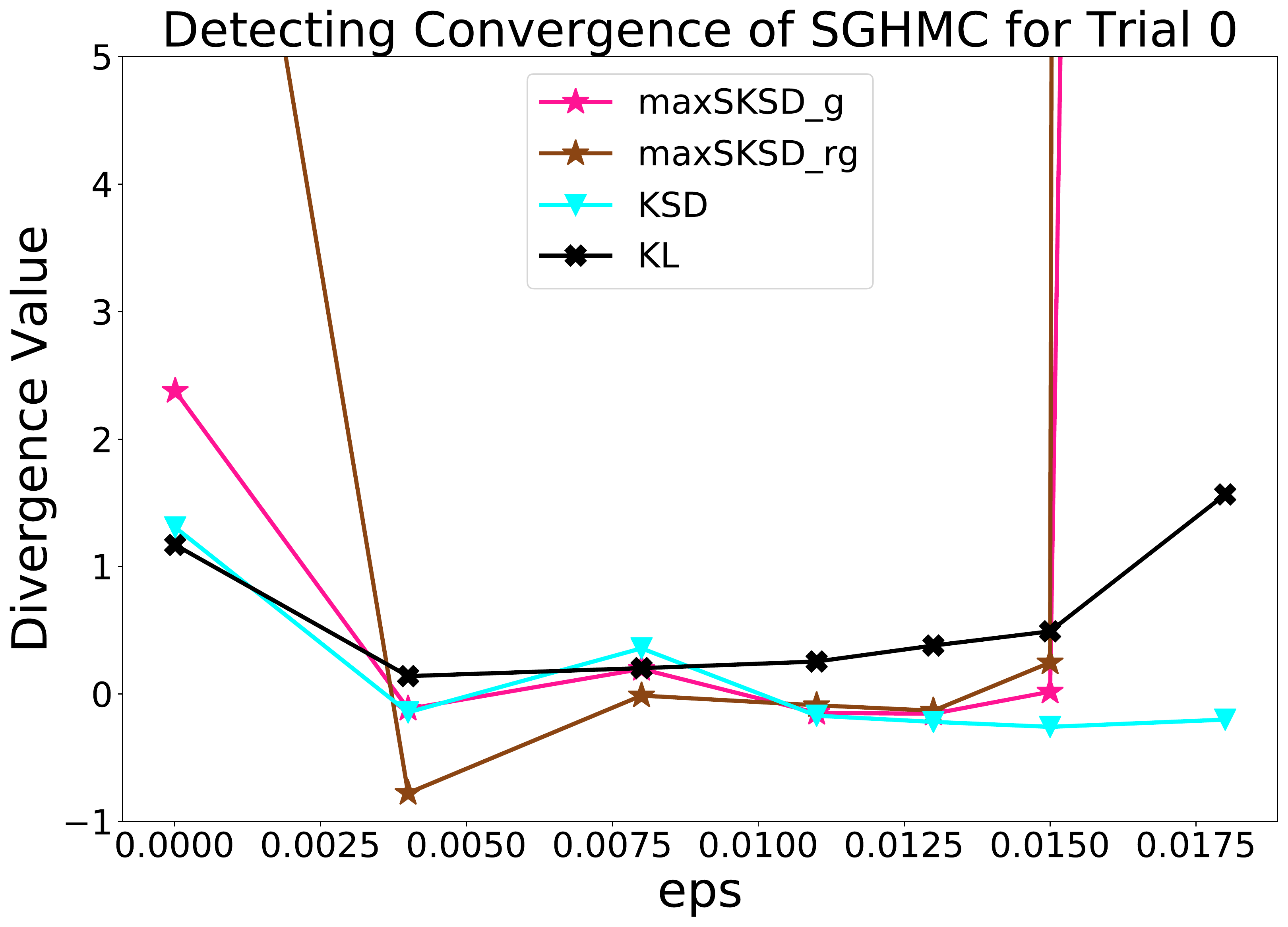}}
    \subfloat[Random seed 1]{\includegraphics[scale=0.145]{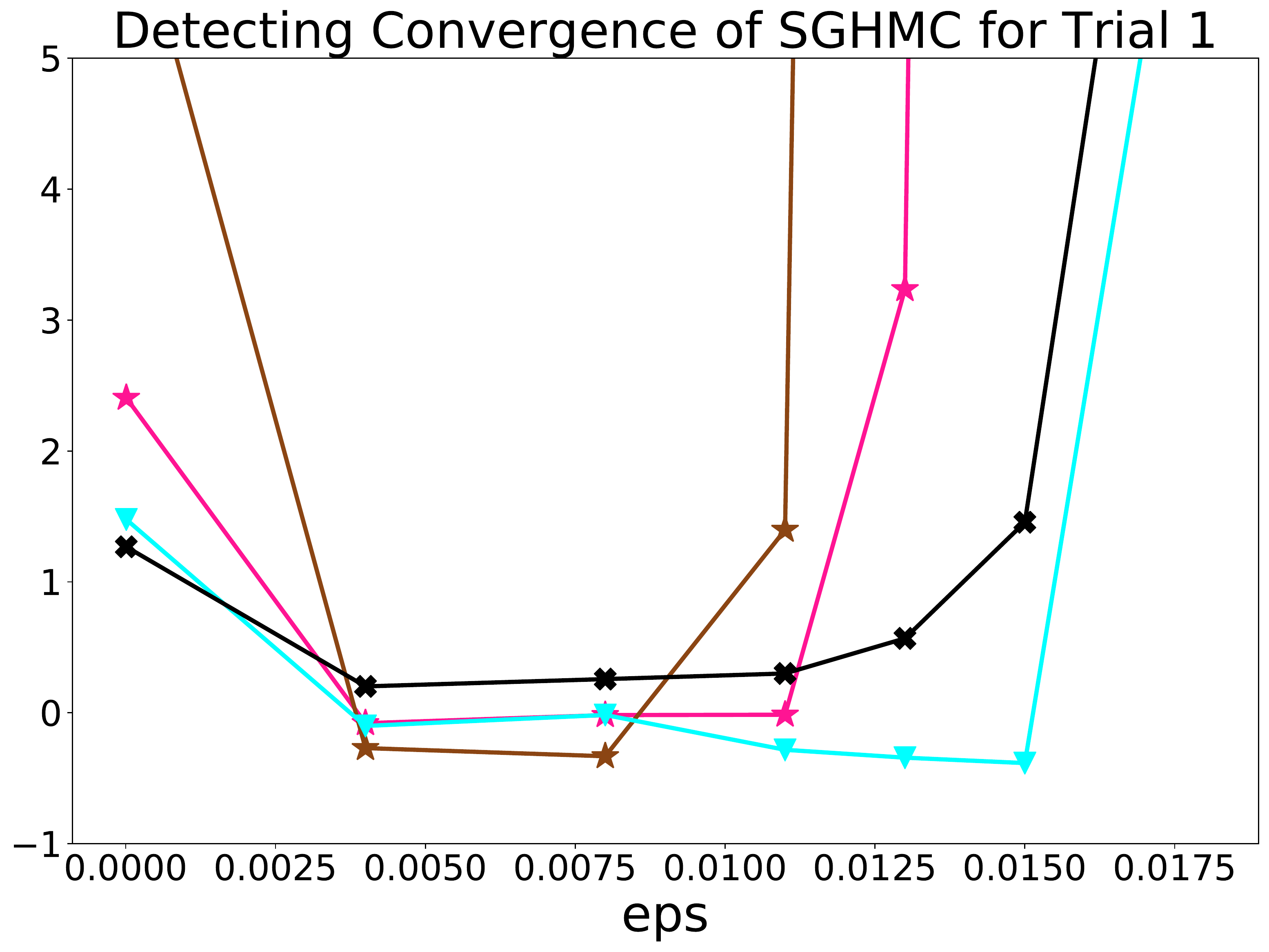}}
    \subfloat[Random seed 2]{\includegraphics[scale=0.145]{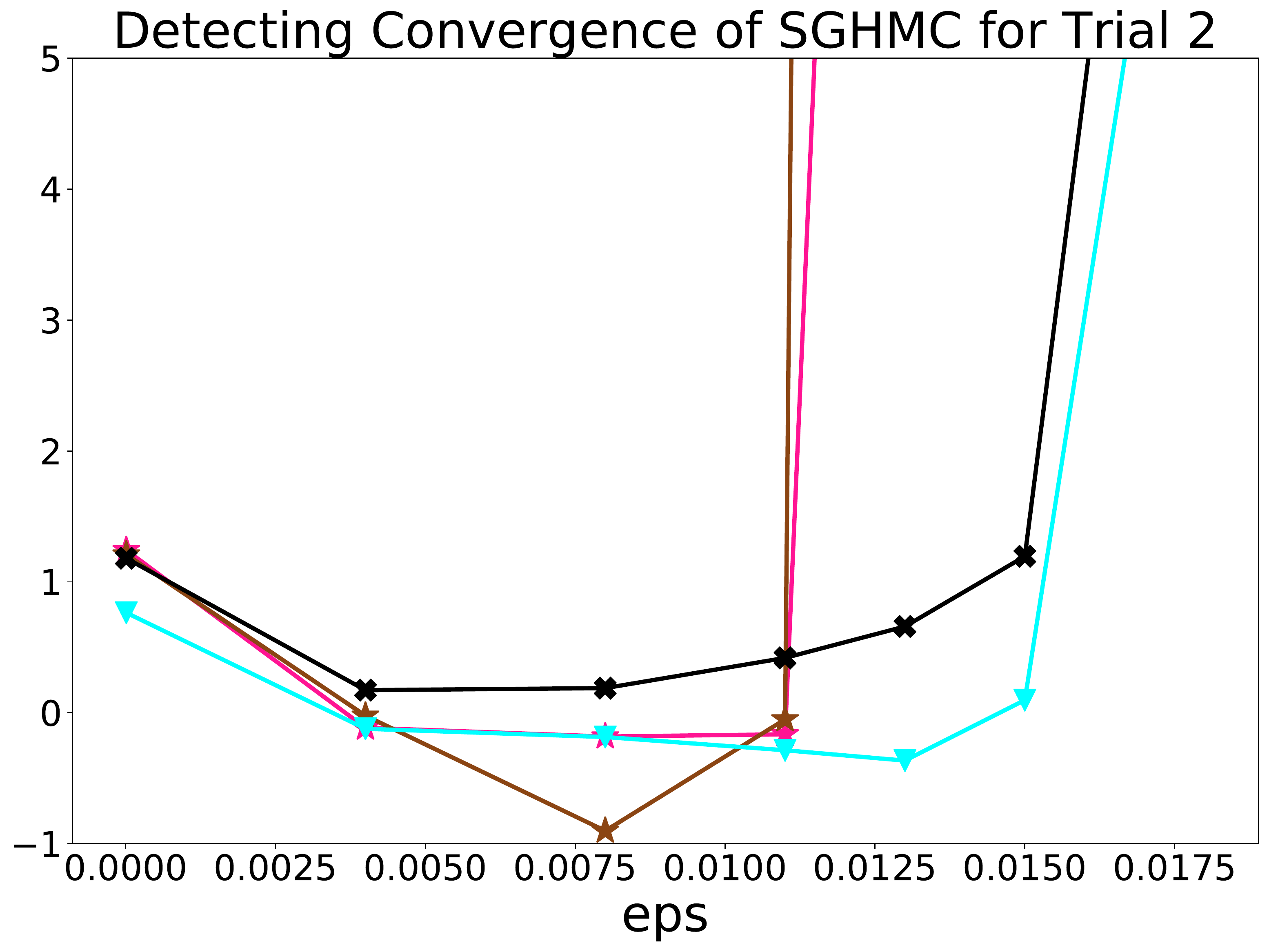}}
    \caption{Discrepancy value for different random seed. The x-axis indicates the step size used for SGHMC.}
    \label{fig: sampler convergence}
\end{figure}

Figure \ref{fig: sampler convergence} shows the discrepancy curve with different step sizes and table \ref{tab: SGHMC Convergence} shows some diagnostic statistics. KL based method is used as the 'ground truth' measure. In summary, the step sizes chosen by maxSKSD based methods are more sensible than those selected by KSD. To be specific, take random seed 1 as an example, KSD failed to detect the non-convergence for step size larger than $0.011$ where KL starts to increase. Even worse, KSD achieves the lowest value at step size $0.015$ which is a poor choice indicated by KL divergence. On the other hand, maxSKSD based methods, especially maxSKSD\_rg, can detect the non-convergence and agrees with the trend shown by the KL method. The above also holds true for other random seeds.
\begin{table}[!h]
\centering
\subfloat[]{
\begin{tabular}{l|llll}
\cline{1-4}
\multicolumn{1}{c|}{Method} & \multicolumn{3}{c}{Random seed} &  \\
                            & 1       & 2       & 3       &  \\ \cline{1-4}
KL                          & 0.004   & 0.004   & 0.004   &  \\ \cline{1-4}
KSD                         & 0.015   & 0.015   & 0.013   &  \\
maxSKSD\_rg                     & 0.004   & 0.008   & 0.008   &  \\
maxSKSD\_g                     & 0.013   & 0.004   & 0.008   &  \\ \cline{1-4}

\end{tabular}
\label{tab:eps}}\\
\subfloat[]{\begin{tabular}{llllll}
\hline
\multicolumn{1}{c|}{Method} & \multicolumn{5}{c}{Metric}                   \\
\multicolumn{1}{l|}{}       & step size &KSD    & maxSKSD\_rg &maxSKSD\_g                    & KL    \\ \hline
KSD                         & 0.015&-0.384 & 902&\multicolumn{1}{l|}{66.0}   & 1.46  \\
maxSKSD\_rg                     & 0.008 &-0.0166&-0.332 & \multicolumn{1}{l|}{-0.018} & 0.257 \\ 
maxSKSD\_g                     & 0.004 &-0.0100&-0.269 & \multicolumn{1}{l|}{-0.079} & 0.201\\
\hline
\end{tabular}\label{tab: divergence value}}
\caption{\textbf{Top}: This table shows the step size chosen by different methods. We can observe that those chosen by maxSKSD based methods and KL method are closer compared to KSD. \textbf{Bottom}: The divergence value at the chosen step size for random seed 2. The row indicates the method used to choose the step size and column indicates the corresponding values. We can observe maxSKSD based methods indeed agree more with KL method where the KL value is around $0.2$ at step size chosen by maxSKSD based methods. On the other hand, KSD failed to detect the non-convergence at step size $0.015$ where KL value is already $1.46$}
\label{tab: SGHMC Convergence}
\end{table}
\label{App: Sampler Convergence}
\section{Model Training}
\subsection{Variance Estimation for Gaussian Toy Example}
This experiment is to demonstrate the mode collapse problem of SVGD at high dimensions and the advantage of the proposed S-SVGD.  
\paragraph{Setup} The target distribution is an standard Gaussian distribution $\mathcal{N}(\bm{0},\bm{I})$. $50$, $100$ and $200$ samples are used for SVGD and S-SVGD. For fair comparison, we use the same RBF kernel with median heuristic for both SVGD and S-SVGD. We run $6000$ update steps to make sure they fully converged before estimating the variance. For S-SVGD, to avoid the over-fitting of sliced matrix $\bm{G}$ to small number of samples, we only update the matrix $\bm{G}$ when the samples after the update are far away from the one used for previous update. The initialized particles are drawn from $\mathcal{N}(\bm{2},\bm{2I})$. The evaluation metric is the averaged estimated variance of the resulting samples. Namely, for a $D$ dimensional target distribution, $\text{Var}_{avg}=\frac{1}{D}\sum_{d=1}^D{\text{Var}(\{\bm{x}_d\}_{n=1}^N)}$.

From figure \ref{fig: S_SVGD_Gaussian}, we observe when the sample number is small, the resulting samples tend to collapse to a point in high dimensions (low variance). On the other hand, the proposed S-SVGD correctly recovers the true target variance regardless of the number of samples and dimensions. This mode collapse behavior of SVGD is directly related to the decrease of the repulsive force at high dimensions (for detailed analysis of this behavior, refer to \citep{zhuo2017message}). To verify this, we plot the \textit{particle averaged repulsive force} (PARF) and the averaged estimated mean of the samples in figure \ref{fig: S_SVGD_Gaussian_Stat}. The PARF for SVGD reduces as the dimension increases whereas S-SVGD stays at a constant level. This is because the kernel and repulsive force of S-SVGD are evaluated on the one-dimensional projections instead of the full input $\bm{x}$. This dimensionality reduction side steps the decrease of the repulsive force at high dimensions regardless of the sample number, thus S-SVGD recovers the correct target variance. 
\begin{figure}
    \centering
\includegraphics[scale=0.16]{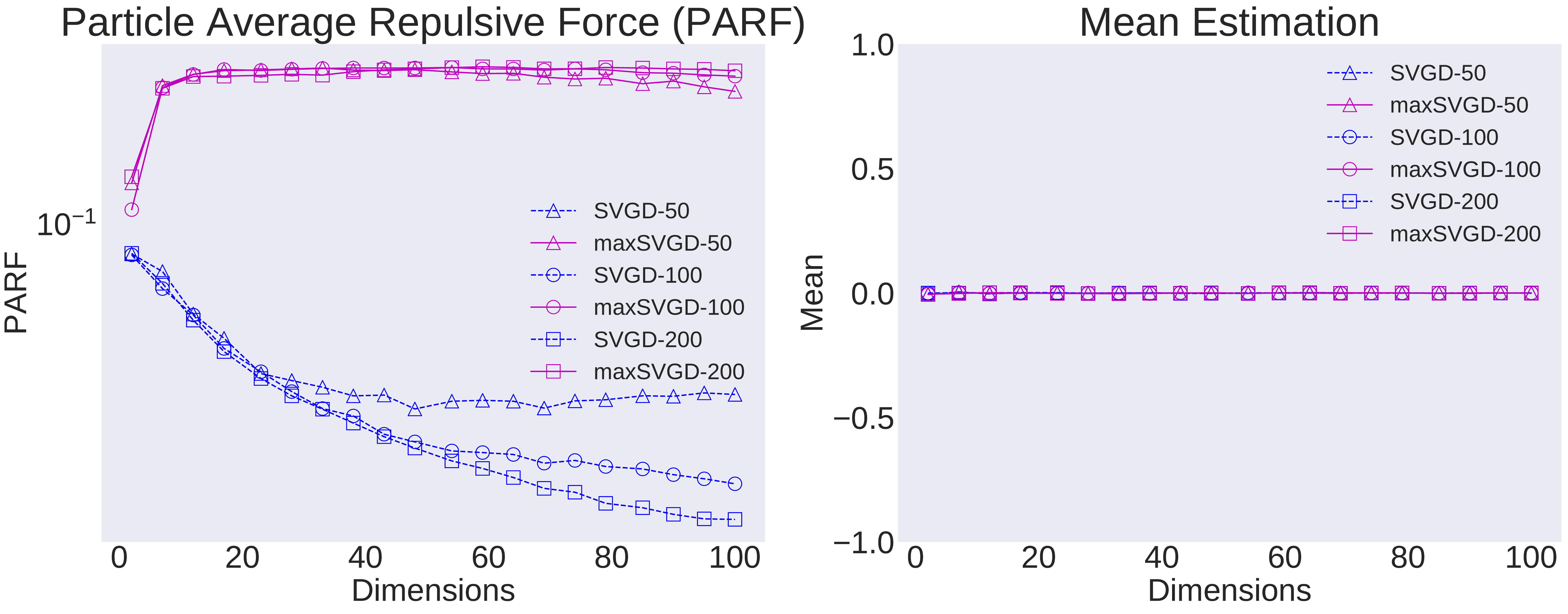}
    \caption{Statistics for SVGD and S-SVGD. The PARF is calculated as $\frac{1}{N}\sum_{n=1}^N||R(\bm{x}_n,\hat{q}_N)||_{\infty}$ \citep{zhuo2017message}, where $R(\bm{x}_n,\hat{q}_N)$ is the repulsive force for $\bm{x}_n$ and $\hat{q}_N$ is the empirical distribution of the samples $\{\bm{x}\}_{n=1}^N$. For SVGD, it is $R(\bm{x},q_y)=\mathbb{E}_{q_y}[\nabla_{\bm{y}}k(\bm{x},\bm{y})]$ and for S-SVGD, the d$^\text{th}$ element of $R(\bm{x},q_y)$ is $R(\bm{x},q_y)_d=\mathbb{E}_{q_y}[g_{d,d}\nabla_{\bm{y}^T\bm{g}_d}k_{g_d}(\bm{x}^T\bm{g}_d,\bm{y}^T\bm{g}_d)]$.}
    \label{fig: S_SVGD_Gaussian_Stat}
\end{figure}
\label{App: S_SVGD_Gaussian}
\subsection{Setup for ICA model training}
We increase the dimensions for ICA from $10$ to $200$ to evaluate their performance in low and high dimensions. We generate the training and test data by using a randomly sampled weight matrix. We use $20000$ training data and $5000$ test data. To make the computation stabler, we follow \citep{grathwohl2020cutting} such that the weight matrix is initialized until its conditional number is smaller than the dimension of the matrix. For LSD, we follow the exact same architecture as the original paper \citep{grathwohl2020cutting}. For KSD, we use the U-statistics with the bandwidth chosen as the median distance (the training for KSD with V-statistic diverges). For maxSKSD, we instead use the V-statistics with $1.5$ times median distance as the bandwidth. We train the ICA model for $15000$ steps using Adam optimizer with $0.001$ learning rate and $\beta_1=0.5,\;\beta_2=0.9$. We use $5$ independent runs and average their results. 
\label{App: ICA Setup}
\subsection{ICA Additional Plots}
\begin{figure}
\centering
    \subfloat[$D=10$]{\includegraphics[scale=0.11]{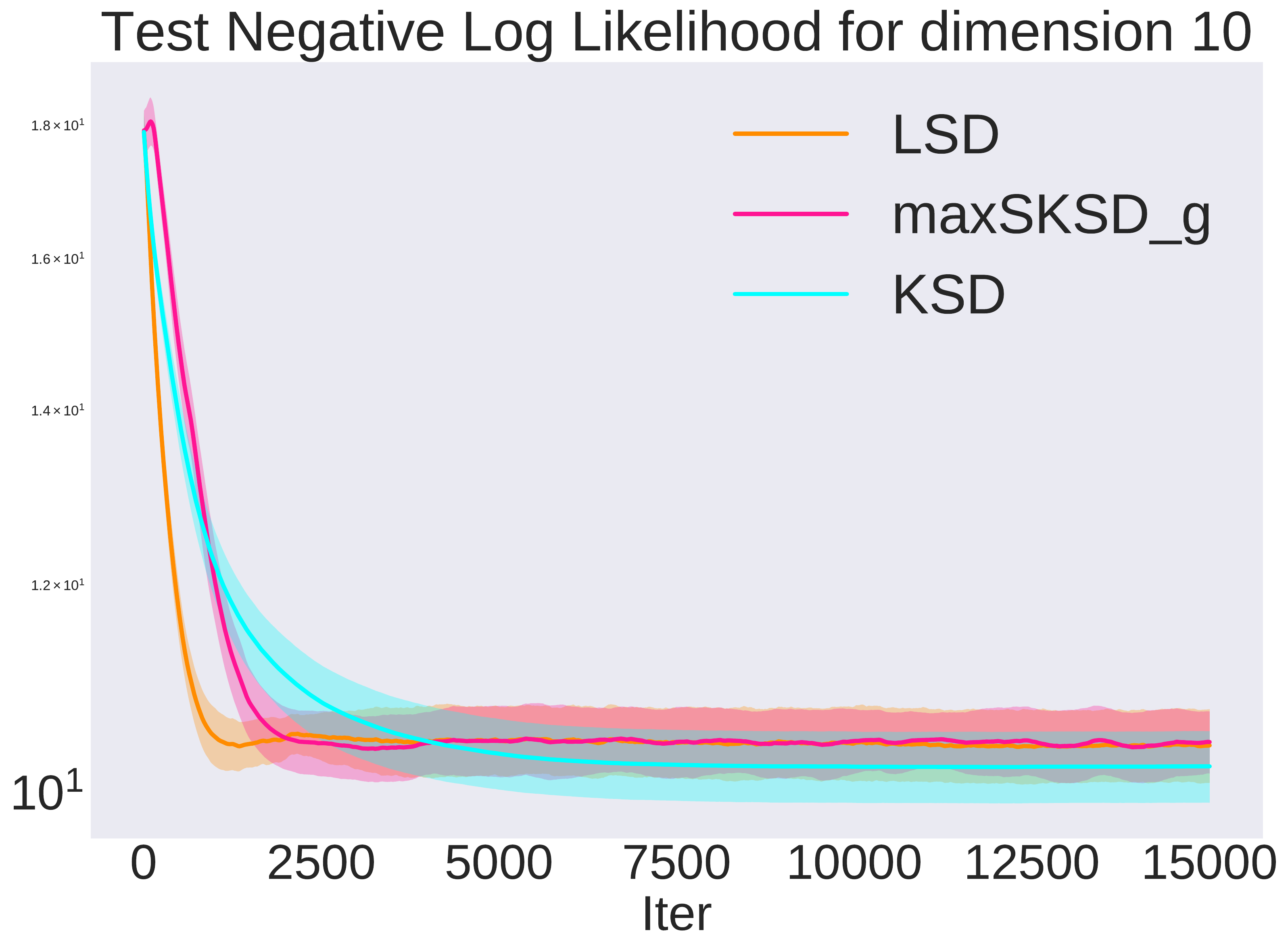}}
    \subfloat[$D=20$]{\includegraphics[scale=0.11]{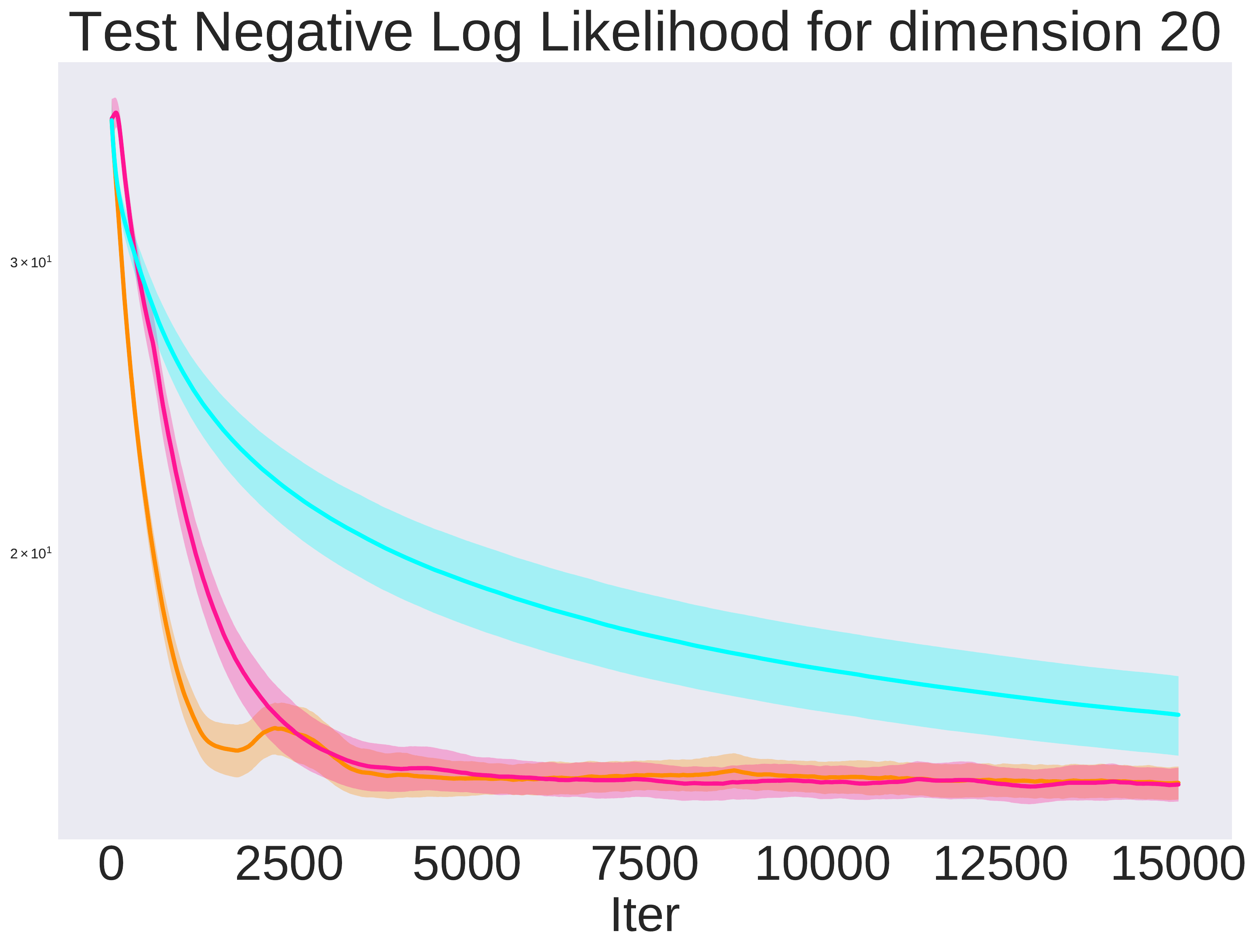}}
    \subfloat[$D=40$]{\includegraphics[scale=0.11]{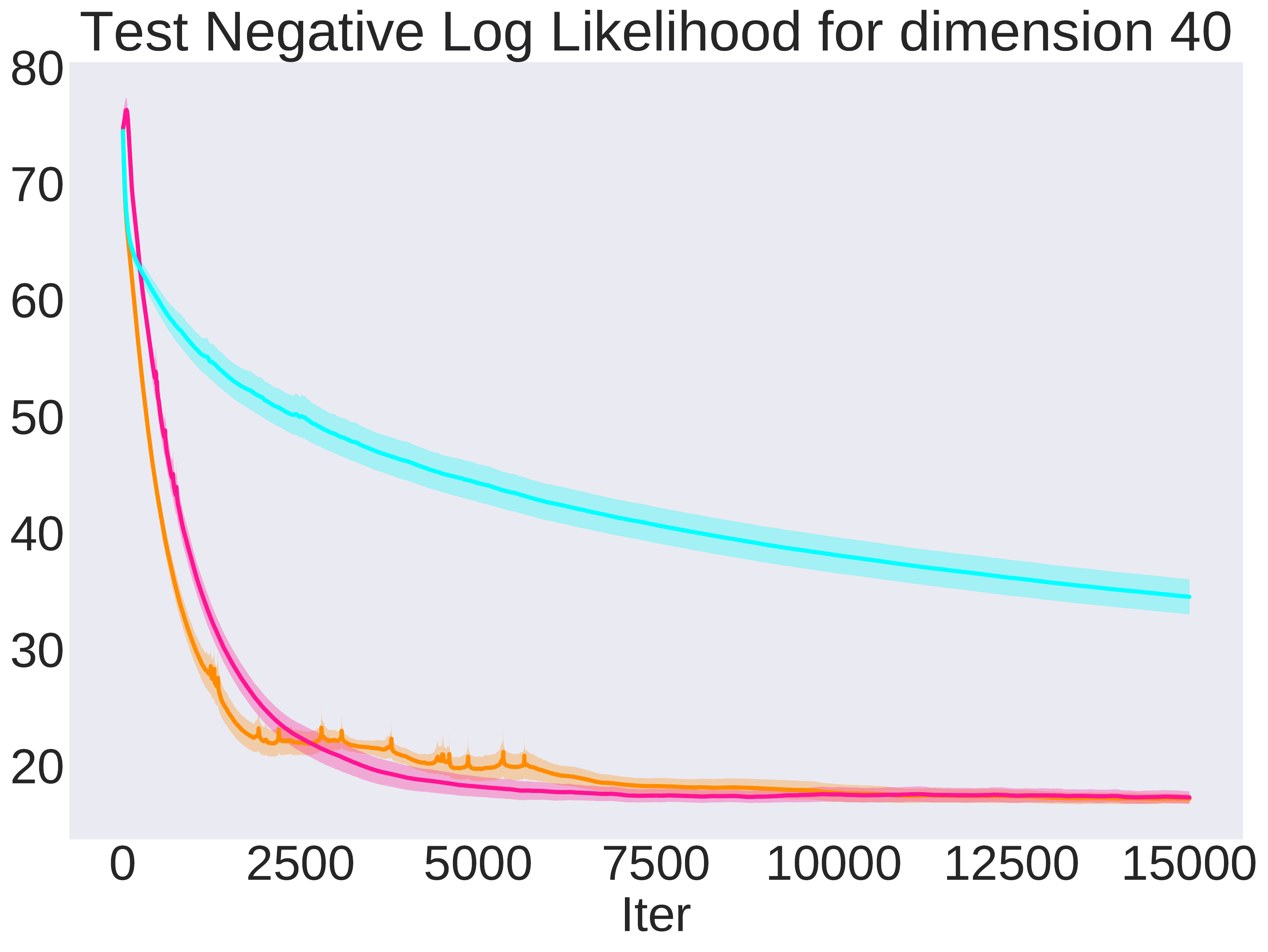}}
    \subfloat[$D=60$]{\includegraphics[scale=0.11]{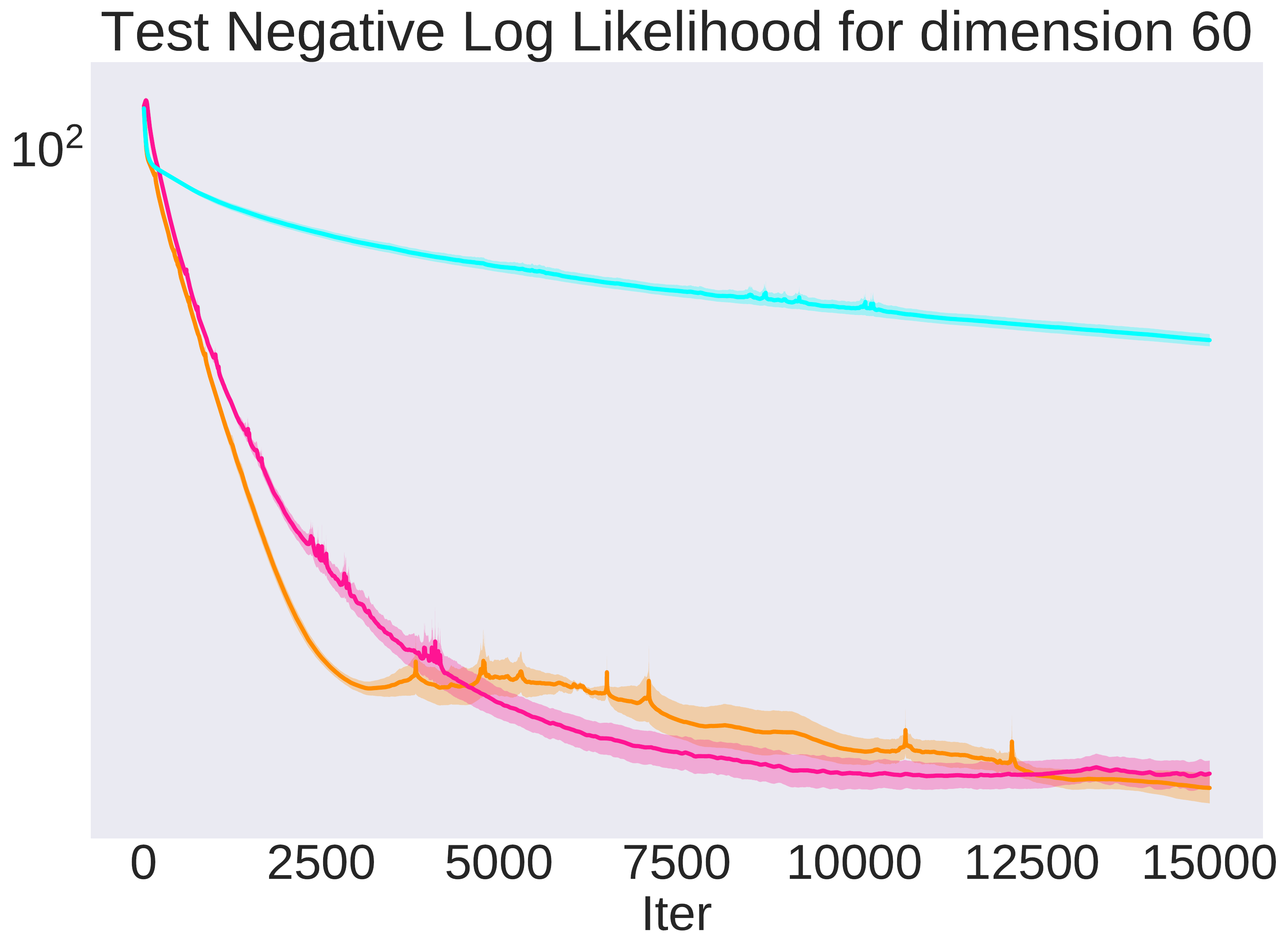}}\\
    \subfloat[$D=80$]{\includegraphics[scale=0.11]{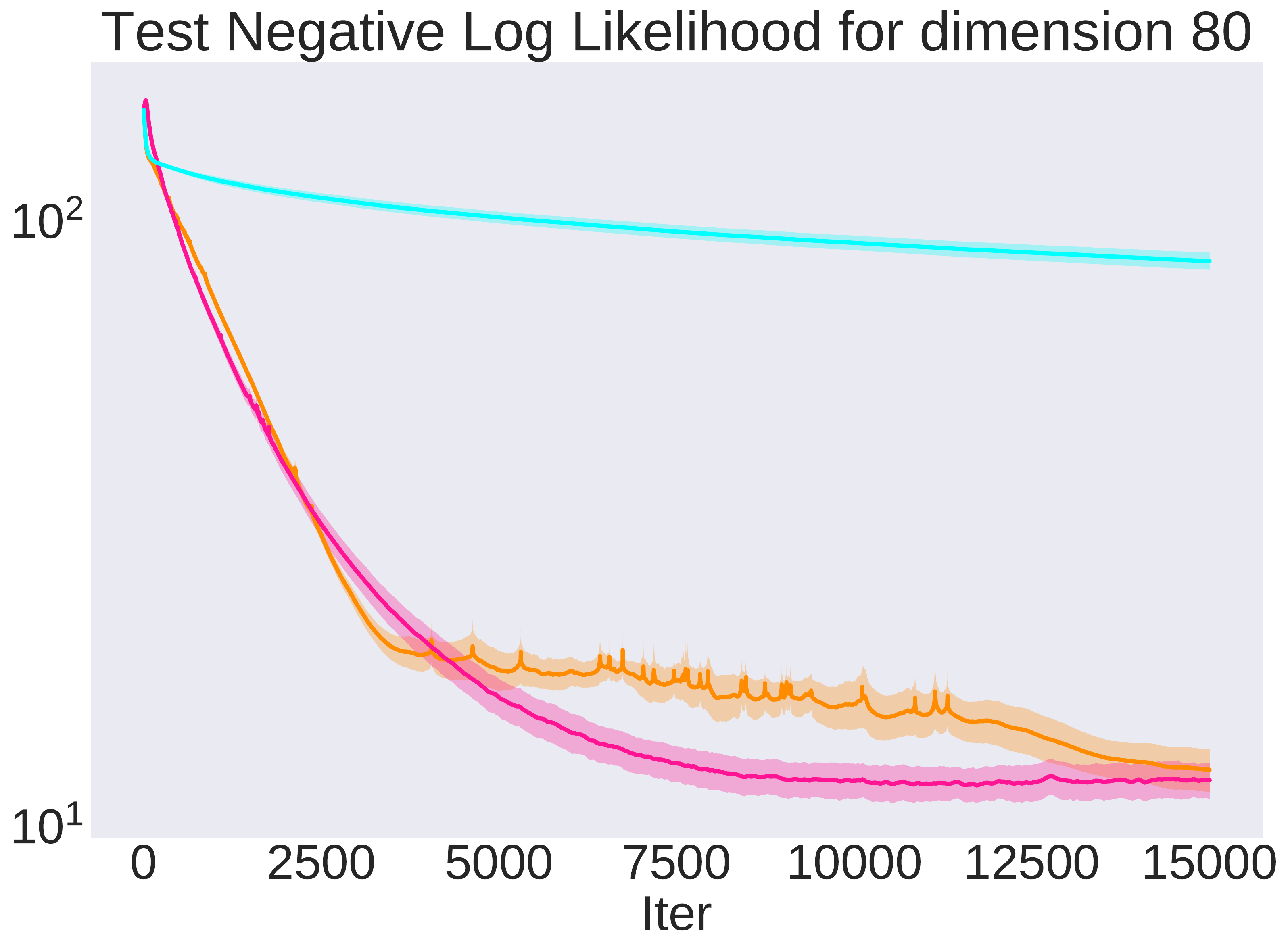}}
    \subfloat[$D=100$]{\includegraphics[scale=0.11]{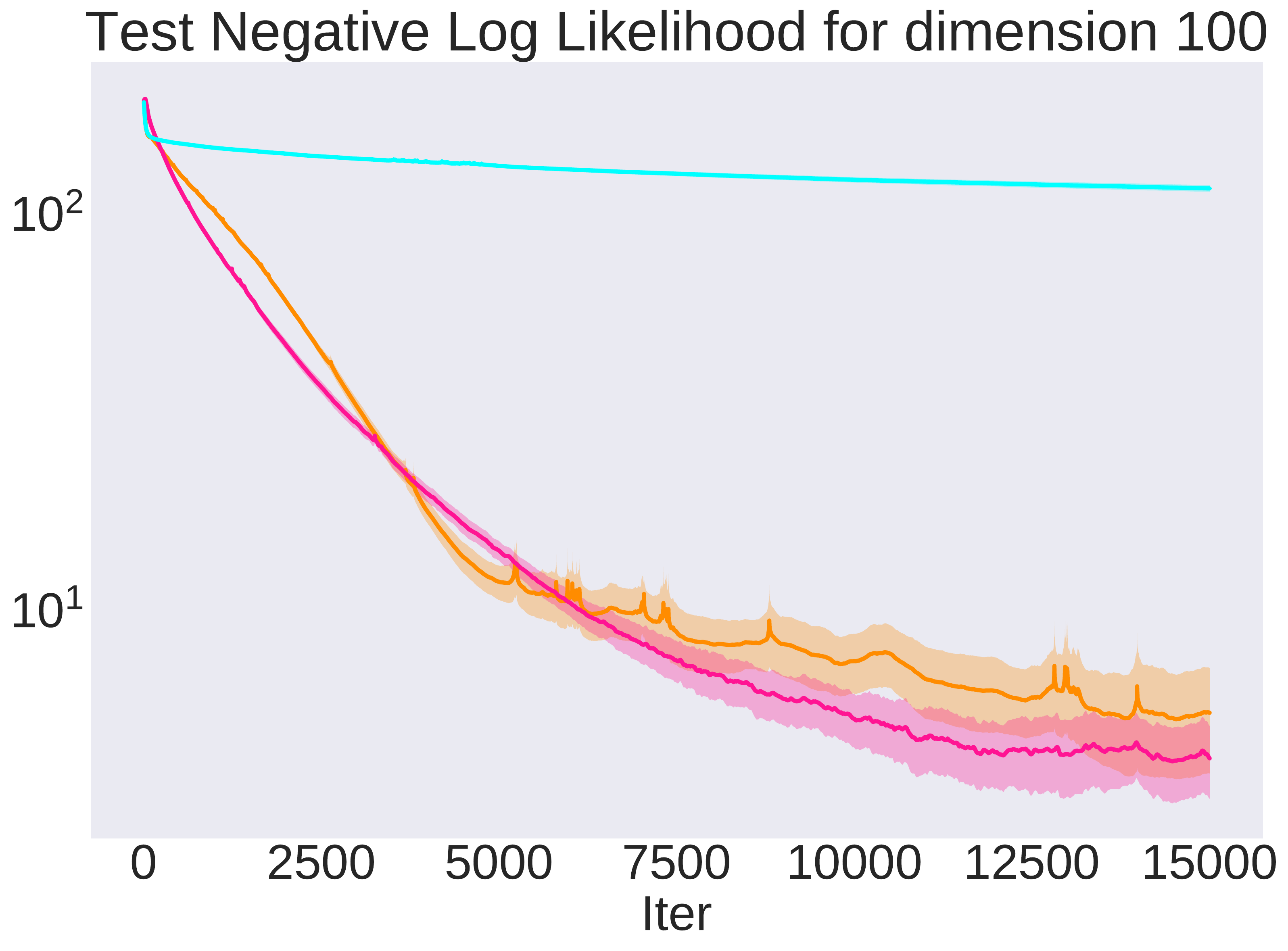}}
    \subfloat[$D=200$]{\includegraphics[scale=0.11]{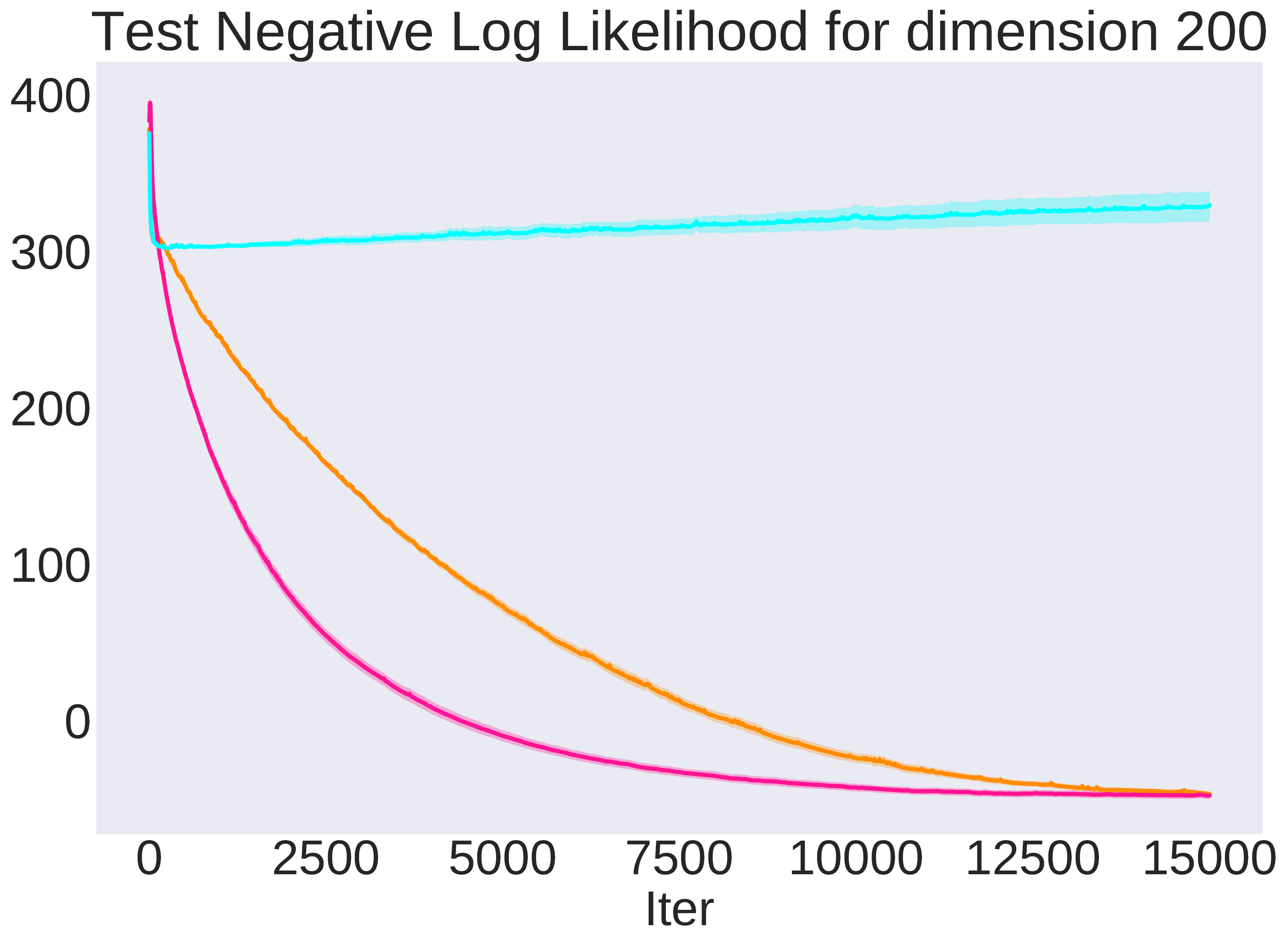}}
    \caption{Training curve of different methods for ICA problems. The y-axis indicates the NLL of test data.}
    \label{fig: ICA Training Curve}
\end{figure}
From the figure \ref{fig: ICA Training Curve}, we observe at low dimensions ($D=10$), LSD converges fastest and KSD is the slowest. However, as the dimension increases, the convergence speed of maxSKSD catches up with LSD and becomes faster after $D=60$, whereas KSD starts to slow down and even diverges at $D=200$. 
\label{App: ICA Additional Plots}
\subsection{Amortized SVGD}
Algorithm \ref{alg: Amortized SVGD} shows the training framework of amortized SVGD. For experiment details, we use fully connected neural network with ReLU activations and 2 hidden layers for encoder and decoder ($[300,200]$ and $[200,300]$ respectively). For decoder output, we use sigmoid activation function and binary cross-entropy for the decoder loss. For the implicit encoder, the input is simply a concatenation of the image and Gaussian noise with the same dimension as the latent space. We also use dropout with probability $0.3$ for each layer of the encoder. For SVGD and S-SVGD, we use $0.1$ for step size and only run 1 update of the latent samples before we update the encoder. The kernel bandwidth is chosen by the median heuristic. We update the sliced matrix $\bm{G}$ for S-SVGD once per each encoder update. $50$ latent samples are used for both encoder and decoder updates. We use Adam optimizer\citep{kingma2014adam} with $0.001$ learning rate and $100$ for batch size.  

For evaluation, the log likelihood is computed using \textit{Hamiltonian annealed importance sampling} (HAIS) \citep{wu2016quantitative}. Specifically, we use $1000$ annealed steps and $10$ leapfrog update per step. We tune the HAIS step size to maintain $0.65$ acceptance rate. 

For imputation, we follow \citep{rezende2014stochastic} to use approximate Gibbs sampler with $D=32$ latent space. Specifically, with missing and observed pixels denoted as $\bm{x}_m$ and $\bm{x}_o$, encoder distribution $q_\phi$ and decoder $p_\theta$, we iteratively applies the following procedure: (1) generate latent samples $\bm{z}\sim q_\phi(\bm{z}|\bm{x}_o,\bm{x}_m)$ (2) reconstruction $\bm{x}^*\sim p_\theta(\bm{x}^*|\bm{z})$ (3) Imputation $\bm{x}_m\leftarrow \bm{x}^*_m$. To compute label entropy and accuracy, $200$ parallel samplers are used for each image with $500$ steps to make sure they fully converged. The imputation label is found by the nearest neighbour method in training data. Label entropy is computed by the its empirical probability and the accuracy is the percentage of the correct ones among all imputed images. 

\begin{algorithm}[H]
\SetKwInOut{Input}{Input}
\SetKwInOut{Output}{Output}

\SetAlgoLined
\Input{Total training step $T$,Adam learning rate $\epsilon_O$, SVGD/S-SVGD step size $\epsilon_S$, latent sample size $N$, encoder network $f_q$, decoder network $f_d$ and decoder loss $\mathcal{L}$}

\For{t $\leq$ T}{
Generate $N$ initial latent samples using encoder $\{\bm{z}_i\}_{i=1}^N=f_q(\bm{x})$\;
Update the samples $\{\bm{z}^*_i\}_{i=1}^N$ based on $\{\bm{z}_i\}_{i=1}^N$ using SVGD or S-SVGD (algorithm \ref{alg: S_SVGD}) with step size $\epsilon_S$\;
Compute the encoder MSE loss between $\{\bm{z}^*_i\}_{i=1}^N$ and $\{\bm{z}_i\}_{i=1}^N$ and update encoder $f_q$ using $\text{Adam}(f_q,\epsilon_O)$\;
Compute decoder loss $\mathcal{L}(\bm{x},\{\bm{z}^*_i\}_{i=1}^N)$ and update decoder using $\text{Adam}(f_d,\epsilon_O)$\;
}
 \caption{Amortized SVGD}
 \label{alg: Amortized SVGD}
\end{algorithm}
\begin{figure}
\centering
    \subfloat[Vanilla VAE]{\includegraphics[scale=0.7]{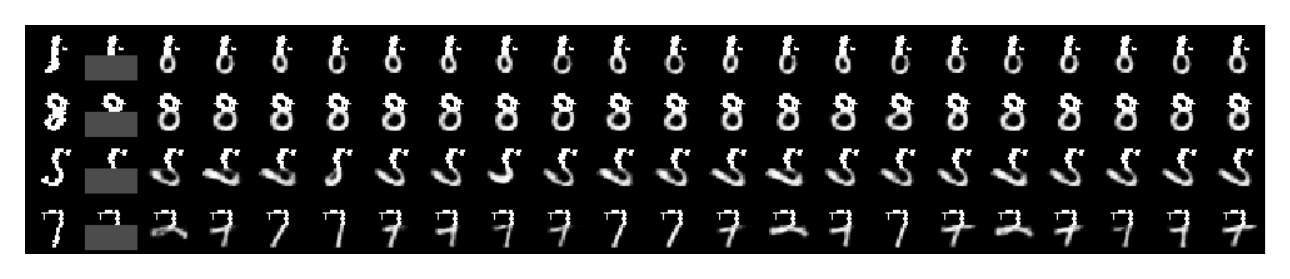}}\\
    \subfloat[Amortized SVGD]{\includegraphics[scale=0.7]{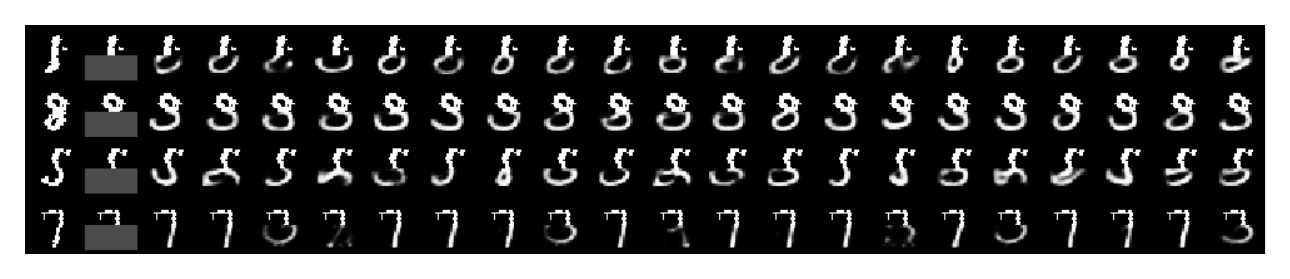}}\\
    \subfloat[Amortized S-SVGD]{\includegraphics[scale=0.7]{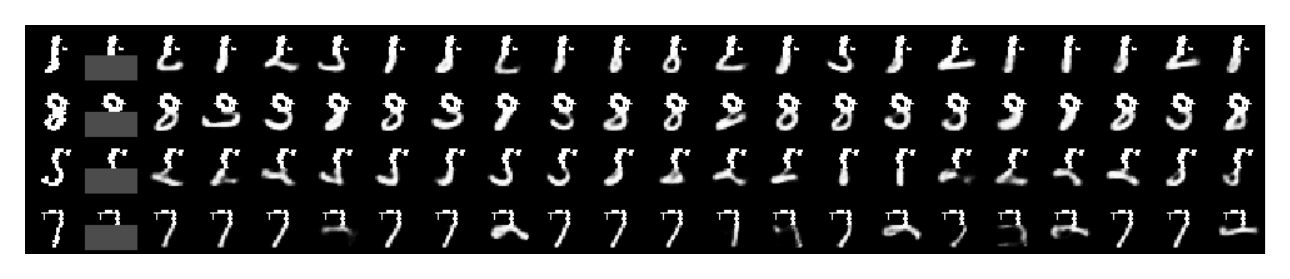}}
    \caption{Imputation images after 500 Gibbs steps. Those images are generated by parallel pseudo-Gibbs sampler. The first column shows the original images. The second column represents the masked images. The rest of the columns are the imputed images. }
    \label{fig: Imputed Images}
\end{figure}
Figure \ref{fig: Imputed Images} shows some of the resulting imputed images after 500 Gibbs steps. We can clearly observe that the S-SVGD generated more diverse images compared to Vanilla VAE (e.g. digit '8' and digit '5'), where it only captures a single mode. Compared to amortized SVGD, the diversity of generated images are similar, but the imputed images of S-SVGD seems to be closer to the original image (e.g. digit '8' and the first digit '5'). This explains the high accuracy value in table \ref{tab: Label entropy}. Although vanilla VAE also generates images that are close to the original one,  it 
may fail to capture the correct mode and get stuck at the wrong one (e.g. first digit '5'). This explains the slightly worse accuracy compared to amortized S-SVGD.
\label{App: Amortized_SVGD_Setup}
\subsection{Bayesian Neural Network Regression}
We also compare our proposed S-SVGD algorithm with the baseline SVGD in high dimensional Bayesian neural network inference. We follow the same settings in \citep{liu2016stein} to use a fully connected one-hidden-layer neural network with ReLU activation and 50 hidden units. The dataset are randomly split into $90\%$ training and $10\%$ test data. Batch size $100$ is used for all data sets. Each results are averaged over $15$ random trials, except for Protein where $5$ trails are conducted. AdaGrad is used for both SVGD and S-SVGD. For SVGD, the bandwidth is selected in the same way as \citep{liu2016stein}. For S-SVGD, we use the same way to select the bandwidth except we multiply a coefficient $0.15$ in front of the bandwidth. $50$ samples are used for both SVGD and S-SVGD. We initialize the particles to be closed to each other. For small datasets like \textit{Boston Housing}, \textit{Yacht} and \textit{Energy}, we apply a small coefficient for the initial repulsive force of S-SVGD, and it gradually increases to $1$ after $500$, $1000$ and $500$ epochs respectively. This is to avoid the over-dominance of the repulsive force at the beginning. For other datasets, we do not tune the repulsive force. For \textit{Boston Housing}, \textit{Concrete}, and \textit{Energy}, we train the network for $2000$ epochs. We use $500$ and $50$ epochs for \textit{Wine} and \textit{Protein} respectively. For the rest of the data set, we use $200$ epochs.

We evaluate the performance through the log likelihood and root mean squared error (RMSE) of the test set, together with the particle-sum distance $\sum_{1\leq i< j\leq N}{dist(\bm{x}_i,\bm{x}_j)}$ to examine the spread of the resulting particles. Table \ref{Tab: BNN LL} shows the performance of BNN trained using SVGD and S-SVGD on 9 UCI data sets. We can clearly observe S-SVGD outperforms SVGD on $7$ out of $9$ data sets. From the particle-sum distance, the resulting particles from S-SVGD are more spread out than SVGD to prevent mode collapse. This behavior can indeed bring benefits especially when dealing with small data set where uncertainty quantification is important. To be specific, SVGD achieves better result only on the large \textit{Protein} data set where the epistemic uncertainty is low compared to small data set. Therefore, the mode collapse of SVGD does not affect the performance too much. This can be partially verified by examining other smaller datasets. \textit{Boston Housing}, \textit{Concrete}, \textit{Energy} and \textit{Yacht} are very small data sets with quite noisy features. Thus, S-SVGD significantly outperforms SVGD on those datasets due to its better uncertainty estimation. For the remaining data set, e.g. \textit{Combined}, \textit{Naval} and \textit{kin8nm}, their data set sizes are between the aforementioned small set and \textit{Protein}. Thus, S-SVGD still achieves better results but the difference is less significant. One exception is \textit{Wine}, a small data set, where S-SVGD has similar performance as SVGD. This is because \textit{Wine} has relatively easy prediction targets.  

\begin{table}[]
\caption{BNN results on UCI regression benchmarks, comparing SVGD and S-SVGD. See main text for details.}
\resizebox{\columnwidth}{!}{
\begin{tabular}{l|ll|ll|ll}
\hline
\multicolumn{1}{c|}{\multirow{2}{*}{Dataset}} & \multicolumn{2}{c|}{RMSE}            & \multicolumn{2}{c|}{test LL}                & \multicolumn{2}{c}{Dist}                        \\
\multicolumn{1}{c|}{}                         & SVGD             & S-SVGD            & SVGD               & S-SVGD            & SVGD            & S-SVGD                         \\ \hline
Boston                                        & $2.937\pm 0.173$ & {$\bm{2.87\pm0.163}$}    & $-2.533 \pm 0.092$ & $\bm{-2.507\pm0.086}$  & $23272\pm 986$  & $49550\pm 6250$                \\
Concrete                                      & $5.189\pm 0.115$ & $\bm{4.880\pm 0.082}$  & $-3.076\pm 0.024$  & $\bm{-3.004\pm 0.023}$ & $24650\pm 1367$ & $62680\pm 1090$                \\
Combined                                      & $3.979\pm 0.040$ & $\bm{3.914 \pm 0.041}$ & $-2.802\pm 0.010$   & $\bm{-2.786 \pm 0.010}$ & $7148\pm 245$   & $33090\pm 430$                 \\
Naval                                         & $0.0030\pm 0$    & $\bm{0.0029\pm 0}$     & $4.368\pm 0.014$   & $\bm{4.411\pm 0.010}$  & $61838\pm 2450$ & $231600\pm 2980$               \\
Wine                                          & $0.607\pm 0.009$ & $\bm{0.603 \pm 0.009}$ & $-0.924\pm 0.015$  & $\bm{-0.914\pm 0.015}$ & $12534\pm 982$  & $35280\pm 2470$                \\
Energy                                        & $1.353\pm 0.049$ & $\bm{1.132\pm 0.048}$  & $-1.736\pm 0.040$  & $\bm{-1.540\pm 0.044}$  & $16476\pm 719$  & $50850\pm 1570$ \\
kin8nm                                        & $0.082\pm 0.001$ & $\bm{0.079\pm 0}$      & $1.084\pm 0.012$   & $\bm{1.104\pm 0.006}$  & $55715\pm 2276$ & $117700\pm 902$                \\
Yacht                                         & $0.714\pm 0.078$ & $\bm{0.613\pm 0.064}$  & $-1.277\pm 0.155$  & $\bm{-0.999\pm 0.087}$ & $15530\pm 1079$ & $47290\pm 2100$                \\
Protein                                       & $\bm{4.543\pm 0.010}$  & $4.587\pm 0.009$   & $\bm{-2.932\pm 0.003}$  & $-2.942\pm 0.002$ & $62370\pm 2143$ & $102600\pm 2335$               \\ \hline
\end{tabular}}
\label{Tab: BNN LL}
\end{table}
\label{App: Bayesian NN}
\end{document}